\begin{document}
\newtheorem{corr}{Corollary} 

\title{Estimating Probabilities in Recommendation Systems}
\author{Mingxuan Sun\footnote{Corresponding author: msun3@gatech.edu}  \hspace{0.2in} Guy Lebanon \\ {\small Georgia Institute of Technology} \\ {\small Atlanta GA 30332 USA} \and Paul Kidwell\\  {\small Lawrence Livermore National Lab}\\ {\small Livermore CA 94550 USA}}
\maketitle
\begin{abstract}
Recommendation systems are emerging as an important business application with significant economic impact. Currently popular systems include Amazon's book recommendations, Netflix's movie recommendations, and Pandora's music recommendations. In this paper we address the problem of estimating probabilities associated with recommendation system data using non-parametric kernel smoothing. In our estimation we interpret missing items as randomly censored observations and obtain efficient computation schemes using  combinatorial properties of generating functions. We demonstrate our approach with several case studies involving real world movie recommendation data. The results are comparable with state-of-the-art techniques while also providing probabilistic preference estimates outside the scope of traditional recommender systems.
\end{abstract}

\section{Introduction} \label{sec:intro} 

Recommendation systems are emerging as an important business application with significant economic impact. The data in such systems are collections of incomplete tied preferences across $n$ items associated with $m$ different users. Given an incomplete tied preference associated with an additional $m+1$ user, the system recommends unobserved items to that user based on the preference relations of the $m+1$ users. Currently deployed recommendation systems include book recommendations at amazon.com, movie recommendations at netflix.com, and music recommendations at pandora.com. Constructing accurate recommendation systems (that recommend to users items that are truly preferred over other items) is important for assisting users as well as increasing business profitability. It is an important unsolved goal in machine learning and data mining.

In most cases of practical interest the number of items $n$ indexed by the system (items may be books, movies, songs, etc.) is relatively high in the $10^3-10^4$ range. Perhaps due the size of $n$, it is almost always the case that each user observes only a small subset of the items, typically in the range 10-100. As a result the preference relations expressed by the users are over a small subset of the $n$ items. 

Formally, we have $m$ users providing incomplete tied preference relations on $n$ items 
\begin{align} \nonumber
S_1:\quad A_{1,1} \prec &A_{1,2}\prec \cdots \prec  A_{1,k(1)}  \\  \nonumber
S_2:\quad A_{2,1} \prec &A_{2,2} \prec \cdots \prec  A_{2,k(2)} \\
&\vdots \label{eq:data} \\  
S_m:\quad A_{m,1} \prec &A_{m,2} \prec \cdots \prec  A_{m,k(m)}  
\nonumber
\end{align}
where $A_{i,j}\subset \{1,\ldots,n\}$ are sets of items (wlog we identify items with integers $1,\ldots,n$) defined by the following interpretation: user $i$ prefers all items in $A_{i,j}$ to all items in $A_{i,j+1}$. The notation $k(i)$ above is the number of such sets provided by user $i$.  The data \eqref{eq:data} is incomplete since not all items are necessarily observed by each user i.e., $\bigcup_{j=1}^{k(i)} A_{i,j} \subsetneq \{1,\ldots,n\}$ and may contain ties since some items are left uncompared, i.e., $|A_{i,j}|>1$. Recommendation systems recommend items to a new user, denoted as $m+1$, based on their preference
\begin{align} \label{eq:activeUser}
S_{m+1}:A_{m+1,1} \prec &A_{m+1,2} \prec \cdots \prec  A_{m+1,k(m+1)}  
\end{align}
and its relation to the preferences of the $m$ users \eqref{eq:data}.

As an illustrative example, assuming $n=9, m=3$, the data 
\begin{align*}
&S_1:\qquad 1,8,9 \prec 4 \prec 2,3,7\\
&S_2:\qquad 4\prec 2,3 \prec 8\\
&S_3:\qquad 4,8\prec 2,6,9
\end{align*}
corresponds to $A_{1,1}=\{1,8,9\}$, $A_{1,2}=\{4\}$, $A_{1,3}=\{2,3,7\}$, $A_{2,1}=\{4\}$, $A_{2,2}=\{2,3\}$, $A_{2,3}=\{8\}$, $A_{3,1}=\{4,8\}$, $A_{3,2}=\{2,6,9\}$, and $k(1)=k(2)=3, k(3)=2$. From the data we may guess that item 4 is relatively popular across the board while some users like item 8 (users 1, 3) and some hate it (user 2). Given a new $m+1$ user issuing the preference $1 \prec 2,3,7$ we might observe a similar pattern of preference or taste as user 1 and recommend to the user item 8. We may also recommend item 4 which has broad appeal resulting in the augmentation
\[ 1 \prec 2,3,7 \qquad \mapsto \qquad  1,4,8 \prec 2,3,7.\]

We note that in some cases the preference relations \eqref{eq:data} arise from users providing numeric scores to items. For example, if the users assign 1-5 stars to movies, the set $A_{i,j}$ contains all movies that user $i$ assigned $6-j$ stars to and $k(i)=5$ (assuming some movies were assigned to each of the 1, 2, 3, 4, 5 star levels). As pointed out by a wide variety of studies in economics and social sciences, such numeric scores are inconsistent among different users. We therefore proceed to interpret such data as ordinal rather than numeric.

A substantial body of literature in computer science has addressed the problem of constructing recommendation systems. We have attempted to outline the most important and successful approaches in the related work section towards the end of this paper. However, none of these previous approaches are fully satisfactory from a statistical perspective: there are no reasonable probability models assumed to generate the data and no clear meaningful statistical estimation procedures. We substantiate this argument more fully in the related work section.

In this paper we describe a non-parametric statistical technique for estimating probabilities on preferences based on the data \eqref{eq:data}. This technique may be used in recommendation systems in different ways. Its principal usage may be to provide a statistically meaningful estimation framework for issuing recommendations (in conjunction with decision theory). However, it also leads to other important applications including mining association rules, exploratory data analysis, and clustering items and users. Two key observations that we make are: (i) incomplete tied preference data may be interpreted as randomly censored permutation data, and (ii) using generating functions we are able to provide a computationally efficient scheme for computing the estimator in the case of triangular smoothing. 

We proceed in the next sections to describe notations and our  assumptions and estimation procedure, and follow with case studies demonstrating our approach on real world recommendation systems data.

\section{Definitions and Estimation Framework}

We describe the following notations and conventions for permutations, which are taken from \cite{Diaconis:88} where more detail may be found. We denote a permutation by listing the items from most preferred to least separated by a $\prec$ or $|$ symbol: $\pi^{-1}(1)\prec \pi^{-1}(2)\prec \cdots\prec \pi^{-1}(n)$, e.g. $\pi(1)=2,\pi(2)=3,\pi(3)=1$ is $3\prec 1\prec 2$. Ranking with ties occur when judges do not provide enough information to construct a total order. In particular, we define tied rankings as a partition of $\{1,\ldots,n\}$ to $k< n$ disjoint subsets $A_1,\ldots,A_k\subset \{1,\ldots,n\}$ such that all items in $A_i$ are preferred to all items in $A_{i+1}$ but no information is provided concerning the relative preference of the items among the sets $A_i$. We denote such rankings by separating the items in $A_i$ and $A_{i+1}$ with a $\prec$ or $|$ notation. For example, the tied ranking  $A_1=\{3\}, A_2=\{2\}, A_3=\{1,4\}$ (items 1 and 4 are tied for last place) is denoted as $3\prec 2\prec 1,4$ or $3|2|1,4$.

Ranking with missing items occur when judges omit certain items from their preference information altogether. For example assuming a set of items $\{1,\ldots,4\}$, a judge may report a preference $3\prec 2\prec 4$, omitting altogether item 1 which the judge did not observe or experience. This case is very common in situations involving a large number of items $n$. In this case judges typically provide preference only for the $l\ll n$ items that they observed or experienced. For example, in movie recommendation systems we may have $n\sim 10^3$ and $l\sim 10^1$.

Rankings can be full (permutations), with ties, with missing items, or with both ties and missing items. In either case we denote the rankings using the $\prec$ or $|$ notation or using the disjoint sets $A_1,\ldots,A_k$ notation. We also represent tied and incomplete rankings by the set of permutations that are consistent with it. For example,
\begin{align*}
  3\prec 2\prec 1,4 &= \{3\prec 2\prec 1\prec 4\}\cup\{ 3\prec   2\prec  4\prec  1\} \\
  3\prec 2\prec 4 &=\{1\prec 3\prec 2\prec 4\}\cup\{3\prec 1\prec   2\prec 4\} \cup\{3\prec 2\prec 1\prec 4\}\cup\{3\prec 2\prec 4\prec   1\}
\end{align*}
are sets of two and four permutations corresponding to tied and incomplete rankings, respectively.

It is hard to directly posit a coherent probabilistic model on incomplete tied data such as \eqref{eq:data}. Different preferences relations are not unrelated to each other: they may subsume one another (for example $1\prec 2 \prec 3$ and $1 \prec 3$), represent disjoint events (for example $1\prec 3$ and $3\prec 1$), or interact in more complex ways (for example $1 \prec 2 \prec 3$ and $1 \prec 4 \prec 3$). A valid probabilistic framework needs to respect the constraints resulting from the axioms of probability, e.g., $p(1 \prec 2 \prec 3) \leq p(1 \prec 3)$. 

Our approach is to consider the incomplete tied preferences as censored permutations. That is, we assume a distribution $p(\pi)$ over permutations $\pi\in\S_n$ ($\S_n$ is the symmetric group of permutations of order $n$) that describes the complete without-ties preferences in the population. The data available to the recommender system \eqref{eq:data} is sampled by drawing $m$ iid permutations from $p$: $\pi_1,\ldots,\pi_m\iid p$, followed by censoring to result in the observed preferences $S_1,\ldots,S_m$ 
\begin{align}
\pi_i &\sim p(\pi),\quad S_i\sim p(S|\pi_i), \quad i=1,\ldots,m+1 \\
p(\pi|S) &= \frac{I(\pi\in S)p(\pi)}{\sum_{\sigma\in S} p(\sigma)} \label{eq:condProb}
\\
p(S|\pi) &= p(\pi|S)p(S)/p(\pi) =  \frac{I(\pi\in S) p(\pi) p(S)}{p(\pi)\sum_{\sigma\in S} p(\sigma)} =
\frac{I(\pi\in S) p(S)}{\sum_{\sigma\in S} p(\sigma)}
\end{align}
where $p(S)$ is the probability of observing the censoring $S$ (specifically, it is not equal to $\sum_{\sigma\in S} p(\sigma)$). 

Although many approaches for estimating $p$ given $S_1,\ldots,S_m$ are possible, experimental evidence point to the fact that in recommendation systems with high $n$, the distribution $p$ does not follow a simple parametric form such as the Mallows, Bradley-Terry, or Thurstone models  \cite{Marden1996} (see Figure~\ref{fig:heatMaps} for a demonstration how parametric assumptions break down with increasing $n$). Instead, the distribution $p$ tends to be diffuse and multimodal  with different probability mass regions corresponding to different types of judges (for example in movie preferences probability modes may correspond to genre as fans of drama, action, comedy, etc. having similar preferences). 

\begin{figure}
\centering
\begin{tabular}{lll}
\includegraphics[scale=.24,angle=0]{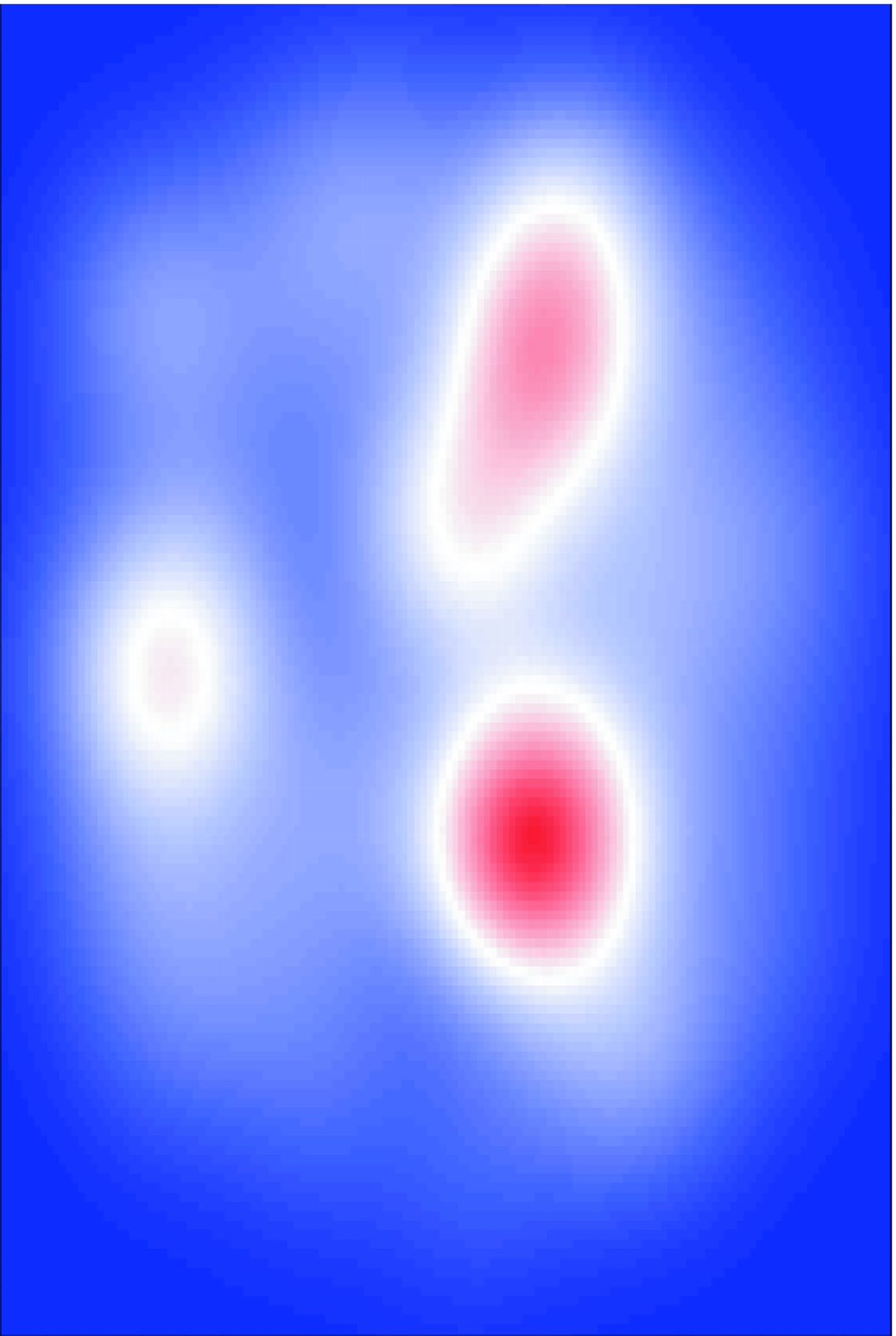}&\hspace{-0.1in}
\includegraphics[scale=.235,angle=0]{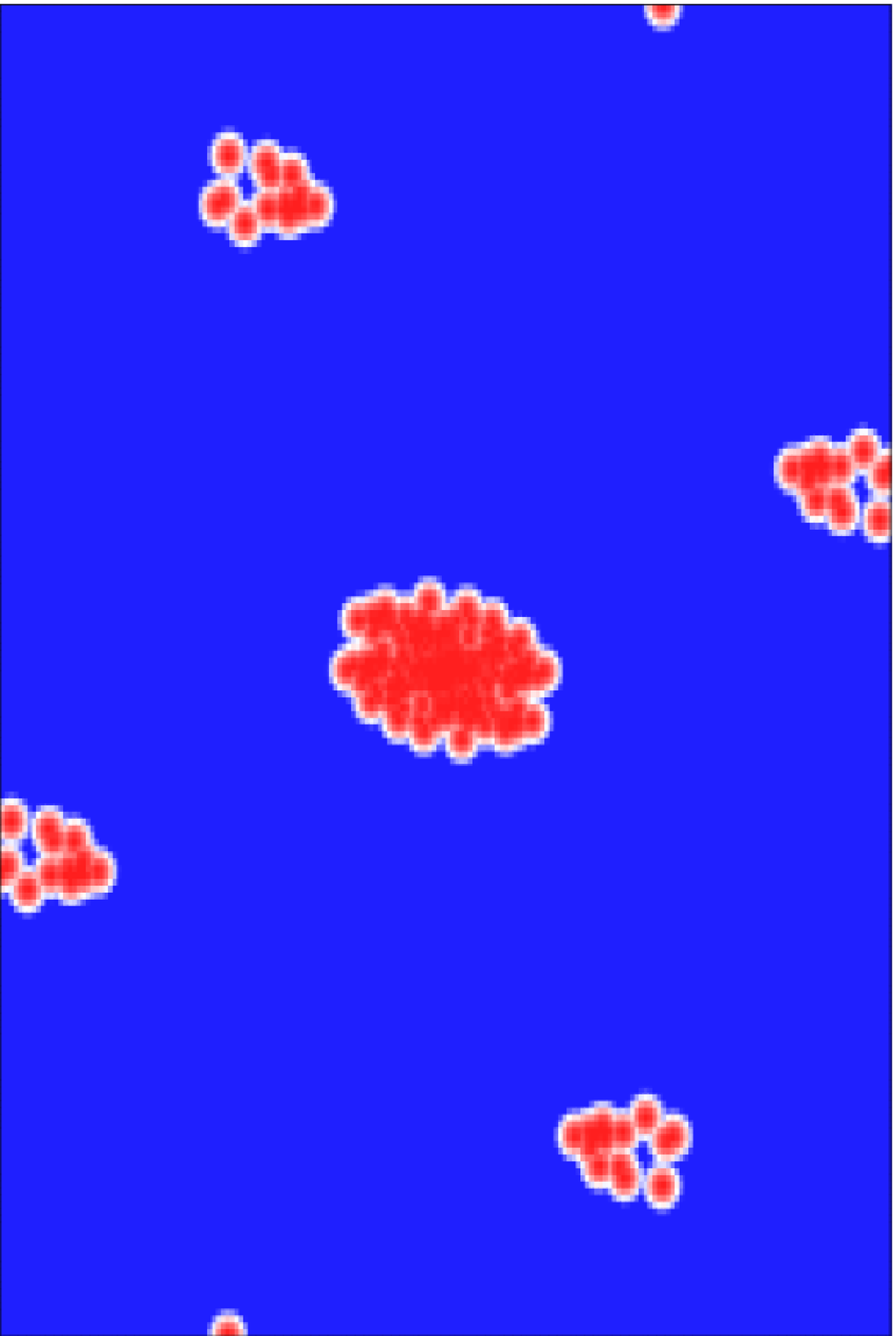}&\hspace{-0.1in}
\includegraphics[scale=.24,angle=0]{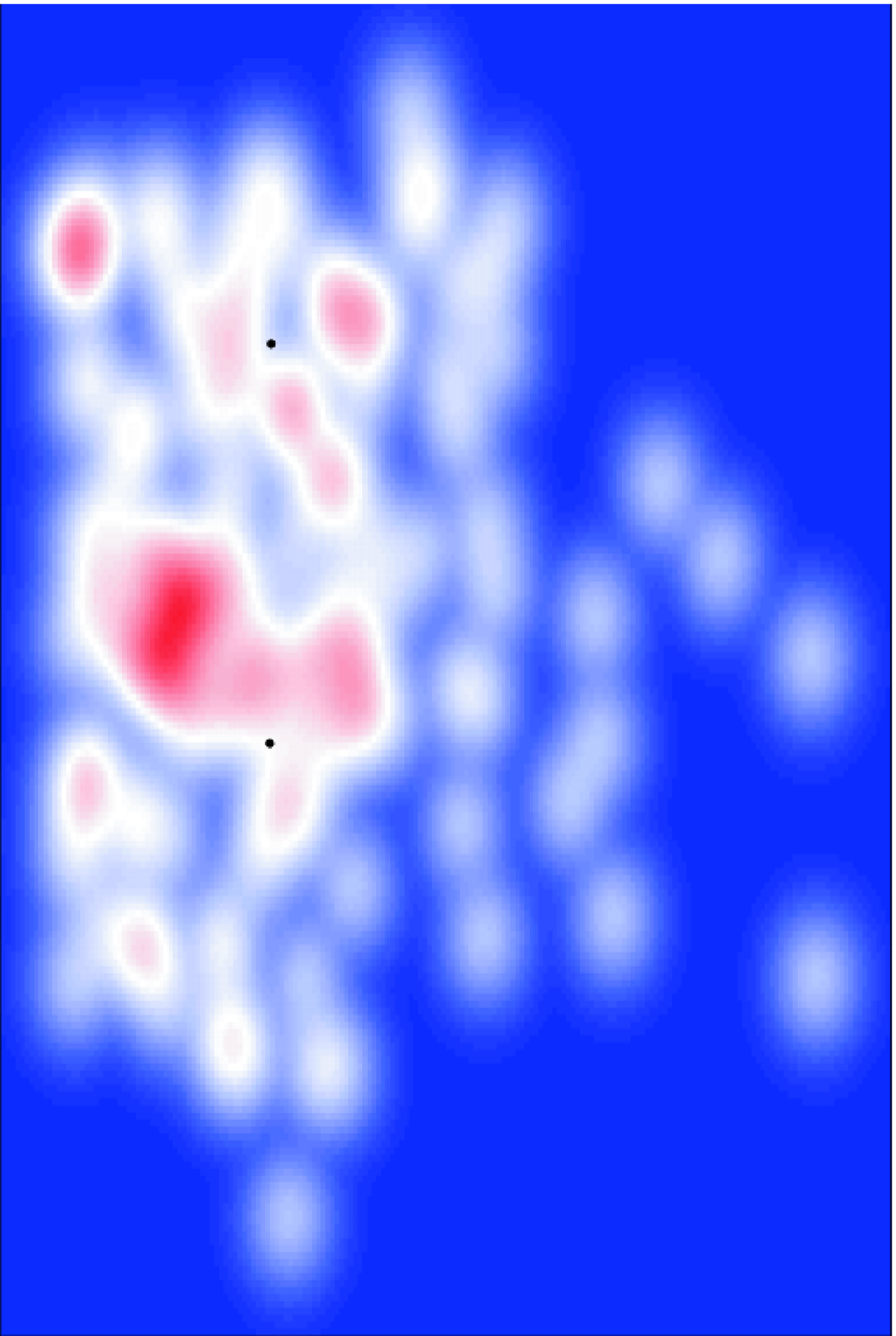}
\end{tabular}
\caption{Heat map visualization of the density of ranked data using multidimensional scaling with expected Kendall's Tau distance. The datasets are APA voting (left, $n=5$), Jester (middle, $n=100$), and EachMovie (right, $n=1628$) datasets. None of these cases show a simple parametric form, and the complexity of the density increases with the number of items $n$. This motivates the use of non-parametric estimators for modeling preferences over a large number of items.}
\label{fig:heatMaps}
\end{figure}

We therefore propose to estimate the underlying distribution $p$
on permutations using non-parametric kernel smoothing. The standard kernel smoothing formula applies to the permutation setting as
\[ \hat p(\pi) = \frac{1}{m}\sum_{i=1}^m K_h(T(\pi,\pi_i)) \]
where $\pi_1,\ldots,\pi_m\iid p$, $T$ a distance on permutations such as Kendall's distance and $K_h(r)=h^{-1} K(r/h)$  a normalized unimodal function. In the case at hand, however, the observed preferences $\pi_i$ as well as  $\pi$ are replaced with permutations sets $S_1,\ldots,S_m, R$ representing incomplete tied preferences 
\begin{align} 
\hat p(R) = \sum_{\pi\in R} \hat p(\pi) = \frac{1}{m} \sum_{i=1}^m \sum_{\pi\in R}    \sum_{\sigma\in S_i} q(\sigma|S_i) K_h(T(\pi,\sigma)) \label{eq:npEstimator}
\end{align}
where $q(\sigma|S_i)$ serves as a surrogate for the unknown  $p(\sigma|S_i)\propto I(\sigma\in S_i)p(\sigma)$ (see \eqref{eq:condProb}). Selecting $q(\sigma|S_i)=p(\sigma|S_i)$ would lead to consistent estimation of $p(R)$ in the limit $h\to 0$, $m\to\infty$ assuming positive  $p(\pi), p(S)$. Such a selection, however, is generally impossible since $p(\pi)$ and therefore $p(\sigma|S_i)$ are unknown. 

In general the specific choice of the surrogate $q(\sigma|S)$ is important as it may influence the estimated probabilities. Furthermore, it may cause underestimation or overestimation of $\hat p(R)$ in the limit of large data. An exception occurs when the sets $S_1,\ldots,S_m$ are either subsets of $R$ or disjoint from $R$. In this case $\lim_{h\to 0} K_h(\pi,\sigma) = I(\pi=\sigma)$ resulting in the following limit (with probability 1 by the strong law of large numbers)
\begin{align}
\lim_{m\to \infty}\,\, \lim_{h\to 0} \,\, \hat p(R) &= 
\lim_{m\to \infty}\,\,  \frac{1}{m} \sum_{i=1}^m I(S_i\subset R) \sum_{\sigma\in S_i}   q(\sigma|S_i)\\
&= \lim_{m\to \infty}\,\, \frac{1}{m} \sum_{i=1}^m I(S_i\subset R)= 
\lim_{m\to \infty}\,\, \frac{1}{m} \sum_{i=1}^m I(\pi_i\in R)= p(R).
\end{align}
Thus, if we our data is comprised of preferences $S_i$ that are either disjoint or a subset of $R$ we have consistency \emph{regardless} of the choice of the surrogate $q$. Such a situation is more realistic when the preference $R$ involves a small number of items and the preferences $S_i$, $i=1,\ldots,m$ involve a larger number of items. This is often the case for recommendation systems where individuals report preferences over 10-100 items and we are mostly interested in estimating probabilities of preferences over fewer items such as  $i\prec j,k$ or $i\prec j,k\prec l$ (see experiment section). 

The main difficulty with the estimator above is the computation of 
$\sum_{\pi\in R}\sum_{\sigma\in S_i} q(\sigma|S_i) K_h(T(\pi,\sigma))$. In the case of high $n$ and only a few observed items $k$ the sets $S_i,R$ grow factorially as $(n-k)!$ making a naive computation of \eqref{eq:npEstimator}  intractable for all but the smallest $n$. In the next section we explore efficient computations of these sums for a triangular kernel $K_h$ and a uniform $q(\pi|S)$.

\section{Computationally Efficient Kernel Smoothing}

In previous work \cite{Lebanon2008} the estimator \eqref{eq:npEstimator} is proposed for tied (but complete) rankings. That work derives closed form expressions and efficient computation for \eqref{eq:npEstimator} assuming a Mallows kernel \cite{Mallows1957}
\begin{align} \label{eq:MallowsKernel}
K_h(T(\pi,\sigma)) &=  \exp\left( -  \frac{T(\pi,\sigma)}{h}\right) \prod_{j=1}^n \frac{1-e^{-1/h}}{1-e^{-j/h}}
\end{align}
where $T$ is Kendall's Tau distance on permutations (below  $I(x)=1$ for $x>0$ and 0 otherwise)
\begin{align}
T(\pi,\sigma) &= \sum_{i=1}^{n-1}\sum_{l>i} I(\pi\sigma^{-1}(i)-\pi\sigma^{-1}(l)).
\end{align}
Unfortunately these simplifications do not carry over to the case of incomplete rankings where the sets of consistent permutations $S_1,\ldots,S_m$ are not cosets of the symmetric group. As a result the problem of probability estimation in recommendation systems where $n$ is high and many items are missing is particularly challenging. However, as we show below replacing the Mallows kernel \eqref{eq:MallowsKernel} with a triangular kernel leads to efficient computation in some cases. Specifically, the triangular kernel on permutation is 
\begin{align} \label{eq:triangularKernelRankedData}   K_{h}(T(\pi,\sigma)) &= (1-h^{-1}T(\pi,\sigma)) \, I(h-T(\pi,\sigma)) \, /\,  C
\end{align}
where the bandwidth parameter $h$ represent both the support (the kernel is 0 for all larger distances) and the inverse slope of the triangle. As we show below the normalization term $C$ is a function of $h$ and may be efficiently computed using generating functions. Figure~\ref{fig:kernelPlot} (right panel) displays the linear decay of \eqref{eq:triangularKernelRankedData} for the simple case of permutations over $n=3$ items.

\begin{figure}\hspace{-0.33in}
  \begin{minipage}{0.33\textwidth}
    \begin{tabular}{c}
      \includegraphics[width=\textwidth]{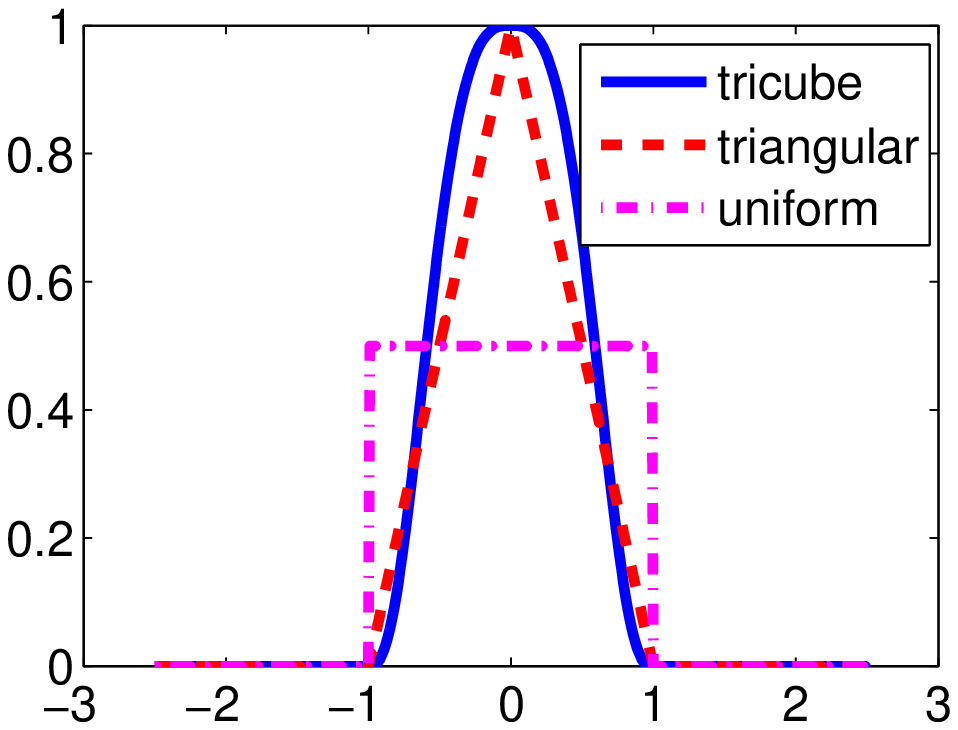}
    \end{tabular}
  \end{minipage}
  \hspace{-0.23in}
  \begin{minipage}{0.33\textwidth}
    \begin{tabular}{c}
      \includegraphics[width=\textwidth]{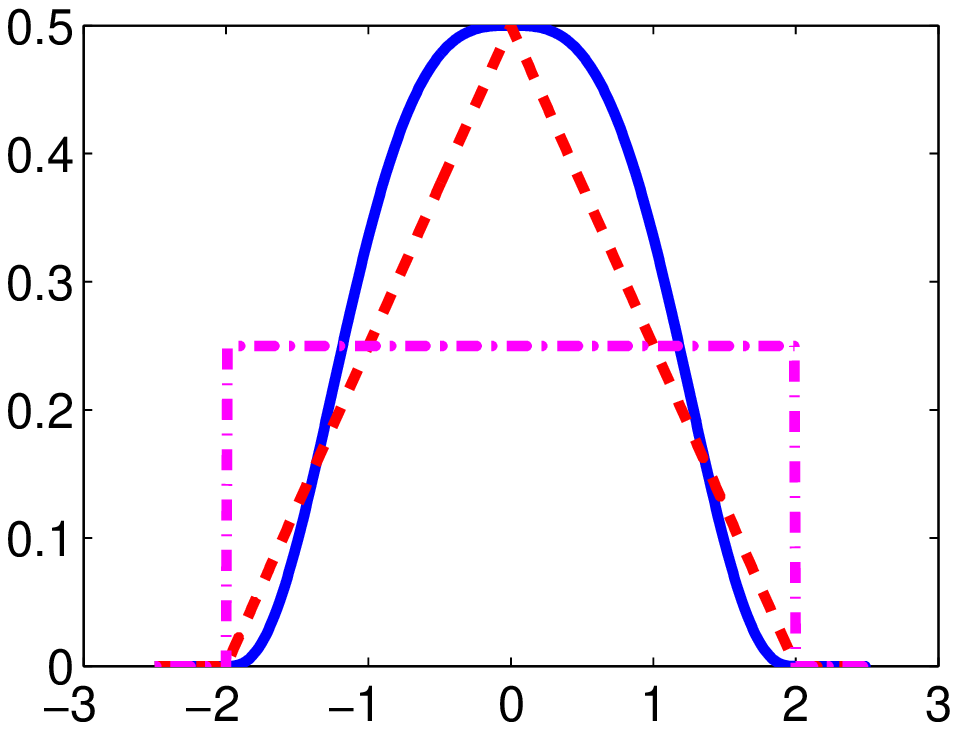}
    \end{tabular}
  \end{minipage}
  \hspace{-0.2in}
  \begin{minipage}{0.33\textwidth}
\begin{tabular}{c}
{\scriptsize \begin{tabular}{|l|ll|}
\hline
 & $\!\!K_{3}(\cdot,1\prec 2\prec 3)$  & $\!\!\!\!\!\!K_{5}(\cdot,1\prec 2\prec 3)$  \\
\hline \hline
$1\prec 2 \prec 3$ & 0.50& 0.33 \\ 
$1\prec 3 \prec 2$ & 0.25& 0.22 \\ 
$2\prec 1 \prec 3$ & 0.25& 0.22 \\ 
$3\prec 1 \prec 2$ & 0 &0.11  \\ 
$2\prec 3 \prec 1$ & 0 &0.11  \\ 
$3\prec 2 \prec 1$ & 0 &0  \\ 
\hline
\end{tabular}}
\end{tabular}
\end{minipage}
\caption{Tricube, triangular, and uniform kernels on $\mathbb{R}$ with   bandwidth $h=1$ (left) and $h=2$ (middle). Right: triangular kernel on   permutations ($n=3$).}\label{fig:kernelPlot}
\end{figure}

\subsubsection*{Combinatorial Generating Function}
Generating functions, a tool from enumerative combinatorics, allow efficient computation of \eqref{eq:npEstimator} by concisely expressing the distribution of distances between permutations. Kendall's tau $T(\pi,\sigma)$ is the total number of discordant pairs or inversions between $\pi,\sigma$ \cite{Stanley2000} and thus its computation becomes a combinatorial counting problem. We associate the following generating function with the symmetric group of order $n$ permutations 
\begin{align}
G_n(z)=\prod_{j=1}^{n-1} \sum_{k=0}^{j}z^k.
\end{align}
As shown for example in \cite{Stanley2000} the coefficient of $z^k$ of $G_n(z)$, which we denote as $[z^k]G_n(z)$, corresponds to the number of permutations $\sigma$ for which $T(\sigma,\pi')=k$. For example, the distribution of Kendall's tau $T(\cdot,\pi')$ over all permutations of 3 items is described by
$G_3(z)=(1+z)(1+z+z^2)=1z^0+2z^1+2z^2+1z^3$
i.e., there is one permutation $\sigma$ with $T(\sigma,\pi')=0$, two permutations $\sigma$ with $T(\sigma,\pi')=1$, two with $T(\sigma,\pi')=2$ and one with $T(\sigma,\pi')=3$. Another important generating function is
\[ H_n(z)=\frac{G_n(z)}{1-z} = (1+z+z^2+z^3+\cdots)G_n(z)\] 
where $[z^k]H_n(z)$ represents the number of permutations $\sigma$ for which $T(\sigma,\pi')\leq k$.
\begin{prop}
The normalization term $C(h)$ is given by 
$C(h) = [z^h] H_n(z)-h^{-1}[z^{h-1}]\frac{G_n'(z)}{1-z}.$
\end{prop}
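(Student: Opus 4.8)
The plan is to pin down $C=C(h)$ from the only thing that can define it: the requirement that the triangular kernel \eqref{eq:triangularKernelRankedData} be normalized, i.e.\ that $\sum_{\pi\in\S_n}K_h(T(\pi,\sigma))=1$ for every fixed $\sigma$ (equivalently, that $\hat p$ in \eqref{eq:npEstimator} integrate to one). Writing this out gives
\[ C=\sum_{\pi\in\S_n}\bigl(1-h^{-1}T(\pi,\sigma)\bigr)\,I\bigl(h-T(\pi,\sigma)\bigr), \]
which, since $T(\cdot,\sigma)$ is integer valued, is a sum over the permutations with $T(\pi,\sigma)\le h-1$. The first step is to collapse this into a sum over \emph{distances}: by the fact quoted from \cite{Stanley2000}, the number of $\pi\in\S_n$ with $T(\pi,\sigma)=k$ equals $[z^k]G_n(z)$ and does not depend on $\sigma$, so $C=\sum_{k=0}^{h-1}[z^k]G_n(z)\,(1-k/h)$. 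In particular $C$ is independent of $\sigma$, as it must be for the formula to make sense.

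Next I would observe that the linear factor $1-k/h$ vanishes at $k=h$, so the upper summation limit may be raised from $h-1$ to $h$ without changing the value, and then split the sum:
\[ C=\sum_{k=0}^{h}[z^k]G_n(z)\;-\;h^{-1}\sum_{k=0}^{h}k\,[z^k]G_n(z). \]
The remaining work is to identify each piece as a single coefficient, using only the elementary identity that multiplying a power series by $1/(1-z)$ forms its sequence of partial sums. For the first sum this gives directly $\sum_{k=0}^{h}[z^k]G_n(z)=[z^h]\tfrac{G_n(z)}{1-z}=[z^h]H_n(z)$, by the definition of $H_n$. For the second, use $k\,[z^k]G_n(z)=[z^{k-1}]G_n'(z)$ to rewrite $\sum_{k=0}^{h}k\,[z^k]G_n(z)=\sum_{j=0}^{h-1}[z^j]G_n'(z)=[z^{h-1}]\tfrac{G_n'(z)}{1-z}$, again by the partial-sum identity. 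Substituting both back into the displayed expression for $C$ yields the claimed formula.

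I do not expect a substantial obstacle: the argument is bookkeeping with coefficient extraction once the normalization interpretation of $C$ is granted. The one place to be careful is the index arithmetic, and it is worth isolating the three shifts that conspire to produce the exact exponents in the statement: (i) the convention $I(x)=1\iff x>0$ truncates the naive sum at $k=h-1$; (ii) the vanishing of $1-k/h$ at $k=h$ lets us bump that to $k=h$, which is why $[z^h]H_n(z)$ appears rather than $[z^{h-1}]H_n(z)$; and (iii) differentiation lowers degrees by one, which is why the second term carries $z^{h-1}$ rather than $z^h$. Everything else is the standard ``division by $1-z$ equals partial sums'' manipulation together with the already-cited combinatorial meaning of $[z^k]G_n(z)$.
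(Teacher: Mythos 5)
Your proof is correct and follows essentially the same route as the paper's: factor the unnormalized kernel into the indicator part and the distance-weighted part, count the first via $[z^h]H_n(z)$, and count the second via $[z^{h-1}]G_n'(z)/(1-z)$ using $k[z^k]G_n(z)=[z^{k-1}]G_n'(z)$. Your version merely makes the index bookkeeping (the $T\le h-1$ truncation and the harmless extension to $k=h$) more explicit than the paper does.
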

\begin{proof}
The proof factors the non-normalized triangular kernel $C K_h(\pi,\sigma)$ to $I(h-T(\pi,\sigma))$ and $h^{-1}T(\pi,\sigma)I(h-T(\pi,\sigma))$ and making the following observations. First we note that summing the first factor over all permutations may be counted by $[z^h]H_n(z)$. The second observation is that $[z^{k-1}]G_n'(z)$ is the number of permutations $\sigma$ for which $T(\sigma,\pi')=k$, multiplied by $k$. Since we want to sum over that quantity for all permutations whose distance is less than $h$ we extract the $h-1$ coefficient of the generating function 
$G_n'(z)\sum_{k\geq 0} z^k=G_n'(z)/(1-z)$. We thus have
\begin{align*}
  C=\sum_{\sigma: T(\pi',\sigma)\leq h}1-h^{-1}\sum_{\sigma: T(\pi',\sigma)\leq h} T(\pi',\sigma) =  [z^{h}]H_n(z)-h^{-1}[z^{h-1}]\frac{G_n'(z)}{1-z}.
\end{align*}  
\end{proof}
\begin{prop}
The complexity of computing $C(h)$ is $O(n^4)$.
\end{prop}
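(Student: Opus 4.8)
The strategy is to reduce every quantity in the expression $C(h)=[z^h]H_n(z)-h^{-1}[z^{h-1}]\,G_n'(z)/(1-z)$ to a cheap operation on the coefficient vector of $G_n(z)$, so that the running time is governed by the cost of producing that one vector. I would work in the standard unit-cost arithmetic model (and note in passing that the coefficients of $G_n$ can be as large as $\sum_k[z^k]G_n(z)=n!$, so a bit-complexity statement would carry extra logarithmic factors; the proposition is naturally read as a count of arithmetic operations). First I record the sizes involved: since $\deg\bigl(\sum_{k=0}^{j}z^k\bigr)=j$, the product $G_n(z)=\prod_{j=1}^{n-1}\sum_{k=0}^{j}z^k$ has degree $\sum_{j=1}^{n-1}j=\binom{n}{2}=O(n^2)$, so its coefficient vector has $O(n^2)$ entries; likewise $G_n'(z)$ has $O(n^2)$ coefficients, and only the coefficients of index at most $h\le\binom{n}{2}$ of $H_n$ and of $G_n'(z)/(1-z)$ are ever needed (there is no reason to take $h>\binom{n}{2}$, beyond which the triangular kernel already has full support).

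The dominant step is computing the coefficient vector of $G_n(z)$ itself, which I would do by iterated multiplication: set $P_1=1+z$ and $P_j=P_{j-1}\cdot\bigl(\sum_{k=0}^{j}z^k\bigr)$ for $j=2,\ldots,n-1$, so that $G_n=P_{n-1}$. At step $j$ the running product $P_{j-1}$ has at most $\binom{n}{2}+1=O(n^2)$ coefficients and is multiplied by a factor with $j+1=O(n)$ coefficients; schoolbook polynomial multiplication costs $O(n^2\cdot n)=O(n^3)$ for that step, and there are $n-2$ such steps, for a total of $O(n^4)$. (A more careful implementation would observe that multiplying by $\sum_{k=0}^{j}z^k$ is a width-$(j{+}1)$ sliding-window sum of coefficients, computable in $O(n^2)$ per step and hence $O(n^3)$ overall; but $O(n^4)$ already suffices for the claim.)

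Everything remaining is $O(n^2)$ or cheaper. Differentiating term by term produces the coefficient vector of $G_n'(z)$ in a single $O(n^2)$ pass. Since $1/(1-z)=\sum_{k\ge 0}z^k$, dividing a polynomial by $1-z$ replaces its coefficient vector by the vector of prefix sums, so
\begin{align*}
[z^h]H_n(z) = \sum_{i=0}^{h}[z^i]G_n(z),\qquad [z^{h-1}]\frac{G_n'(z)}{1-z} = \sum_{i=0}^{h-1}[z^i]G_n'(z) = \sum_{i=1}^{h} i\,[z^i]G_n(z),
\end{align*}
and each right-hand side is a single (weighted) partial sum over $O(n^2)$ terms. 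Forming $C(h)$ from the two values then costs $O(1)$ more arithmetic. Adding up, the total is $O(n^4)+O(n^2)=O(n^4)$, as claimed.

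There is no genuine mathematical obstacle here; the argument is essentially bookkeeping. The one point that needs care is the cost of the iterated multiplication — in particular, noticing that the running product already has $\Theta(n^2)$ coefficients, so that each schoolbook multiply against an $O(n)$-term factor is $\Theta(n^3)$ and the $n-2$ of them dominate the total. The only other thing worth flagging is the choice of computational model, since under bit complexity the magnitude of the coefficients of $G_n$ would introduce additional logarithmic factors; I would mention this but not pursue it.
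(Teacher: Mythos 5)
Your proposal is correct and follows essentially the same route as the paper: build the coefficient vector of $G_n$ incrementally by multiplying in one factor $\sum_{k=0}^{j}z^k$ at a time (the paper phrases this as dynamic programming over $G_{k-1}\to G_k$, with $O(k^3)$ per step summing to $O(n^4)$), and then obtain the needed coefficients of $H_n(z)$ and $G_n'(z)/(1-z)$ as cumulative (weighted) sums in $O(n^2)$. Your remarks on the arithmetic-cost model and the sliding-window $O(n^3)$ refinement are sensible additions but do not change the argument.
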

\begin{proof}
We describe a dynamic programming algorithm to compute the coefficients of $G_n$ by recursively computing the coefficients of $G_k$ from the coefficients of $G_{k-1}$, $k=1,\ldots,n$. The generating function $G_k(z)$ has $k(k+1)/2$ non-zero coefficients and computing each of them (using the coefficients of $G_{k-1}$) takes $O(k)$. We thus have $O(k^3)$ to compute $G_k$ from $G_{k-1}$ which implies $O(n^4)$ to compute $G_k$, $n=1,\ldots,n$. We conclude the proof by noting that once the coefficients of $G_n$ are computed the coefficients of $H_n(z)$ and $G_n(z)/(1-z)$ are computable in $O(n^2)$ as these are simply cumulative weighted sums of the coefficients of $G_n$.
\end{proof}
Note that computing $C(h)$ for one or many $h$ values may be done offline prior to the arrival of the rankings and the need to compute the estimated probabilities.

Denoting by $k$ the number of items ranked in either $S$ or $R$ or both, the computation of $\hat p(\pi)$ in \eqref{eq:npEstimator} requires $O(k^2)$ online and $O(n^4)$ offline complexity if either non-zero smoothing is performed over the entire data i.e., $\max_{\pi\in R}\max_{i=1}^n\max_{\sigma\in S_i} T(\sigma,\pi)<h$ or alternatively, we use the modified triangular kernel  $K_h^*(\pi,\sigma)\propto (1-h^{-1})T(\pi,\sigma)$ which is allowed to take negative values for the most distant permutations (normalization still applies though). 

\begin{prop} For two sets of permutations $S,R$ corresponding to tied-incomplete rankings
\begin{align} \label{eq:expectedTau}
\frac{1}{|S||R|}\sum_{\pi\in S}\sum_{\sigma\in R} T(\pi,\sigma) &= \frac{n(n-1)}{4}-\frac{1}{2}\sum_{i=1}^{n-1}\sum_{j=i+1}^n   (1-2p_{ij}(S))(1-2p_{ij}(R))\\
p_{ij}(U) &= \begin{cases}
    I(\tau_U(j)-\tau_U(i)) & \mbox{$i$ and $j$ are ranked in $U$ with  $\tau_U(i)\neq\tau_U(j)$}\\
    1-\frac{\tau_U(i)+\frac{\phi_U(i)-1}{2}}{k+1} & \mbox{only       $i$ is ranked in $U$}\\
    \frac{\tau_U(j)+\frac{\phi_U(j)-1}{2}}{k+1} & \mbox{only       $j$ is ranked in $U$}\\
    1/2 & \mbox{otherwise}\end{cases}. \nonumber
\end{align}
with $\tau_U(i)=\min_{\pi\in U} \pi(i)$, and $\phi_U(i)$ being the number of items that are tied to $i$ in $U$.
\end{prop}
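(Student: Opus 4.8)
The plan is to reduce the double sum to a sum of pairwise contributions using the fact that Kendall's tau counts discordant pairs, and then to recognize each per-pair average as the quantity $p_{ij}$.

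First I would observe that, writing $X_{ij}(\pi)=1$ if $\pi$ ranks item $i$ ahead of item $j$ and $0$ otherwise, the number of pairs ordered oppositely by $\pi$ and $\sigma$ gives
\[ T(\pi,\sigma)=\sum_{1\le i<j\le n}\Bigl(X_{ij}(\pi)\bigl(1-X_{ij}(\sigma)\bigr)+\bigl(1-X_{ij}(\pi)\bigr)X_{ij}(\sigma)\Bigr). \]
Averaging over $\pi\in S$ and $\sigma\in R$, each summand factorizes; with $\bar X_{ij}(U):=|U|^{-1}\sum_{\pi\in U}X_{ij}(\pi)$ (the fraction of permutations consistent with $U$ that rank $i$ ahead of $j$), the average $\tfrac{1}{|S||R|}\sum_{\pi\in S}\sum_{\sigma\in R}T(\pi,\sigma)$ equals $\sum_{i<j}\bigl(\bar X_{ij}(S)+\bar X_{ij}(R)-2\bar X_{ij}(S)\bar X_{ij}(R)\bigr)$. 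The elementary identity $a+b-2ab=\tfrac12-\tfrac12(1-2a)(1-2b)$ together with $\sum_{i<j}1=n(n-1)/2$ rewrites this as $\tfrac{n(n-1)}{4}-\tfrac12\sum_{i<j}(1-2\bar X_{ij}(S))(1-2\bar X_{ij}(R))$. So the proposition is equivalent to the single-set identity $\bar X_{ij}(U)=p_{ij}(U)$, i.e.\ to computing the fraction of permutations consistent with $U$ that rank $i$ ahead of $j$ and matching it to the four cases (here $k$ denotes the number of items ranked in $U$).

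The tool for that is a sampling description of the uniform law on the permutations consistent with $U$. With ranked blocks $A_1\prec\cdots\prec A_t$ covering $k$ items and $n-k$ unranked items, I would show — by a bijective count, both sides enumerating $\bigl(\prod_\ell|A_\ell|!\bigr)\,n!/k!$ permutations — that drawing $\pi$ uniformly is the same as (i) choosing an order of each block $A_\ell$ independently and uniformly, producing a total order on the $k$ ranked items, and (ii) independently interleaving the $n-k$ unranked items uniformly into that chain. A consequence is the key fact that the number of ranked items preceding any one fixed unranked item is uniform on $\{0,1,\dots,k\}$ and independent of the block orders. Given this, the four cases fall out: if $i,j$ are both ranked but in different blocks ($\tau_U(i)\ne\tau_U(j)$) their relative order is forced, so $\bar X_{ij}(U)=I(\tau_U(j)-\tau_U(i))$; if neither is ranked, or both lie in the same block, the symmetry of the distribution under transposing $i$ and $j$ gives $\bar X_{ij}(U)=1/2$; and if only $i$ is ranked, say $i\in A_\ell$, then in step (i) the number $M$ of ranked items before $i$ is $\sum_{s<\ell}|A_s|=\tau_U(i)-1$ plus a uniform variable on $\{0,\dots,\phi_U(i)-1\}$ (the $\phi_U(i)$ items of $A_\ell$), so $\mathbb{E}[M]=\tau_U(i)-1+\tfrac{\phi_U(i)-1}{2}$; conditioning on $M$, step (ii) ranks $j$ ahead of $i$ with probability $(M+1)/(k+1)$, hence $\bar X_{ij}(U)=1-\mathbb{E}\bigl[(M+1)/(k+1)\bigr]=1-\bigl(\tau_U(i)+\tfrac{\phi_U(i)-1}{2}\bigr)/(k+1)$, which is case two; case three is the same computation with $i$ and $j$ exchanged.

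The main obstacle is the structural lemma of the previous paragraph: proving carefully that the uniform law on permutations consistent with $U$ is exactly ``uniform within each tie block'' $\times$ ``uniform interleaving of the unranked items'', and that a given unranked item lands in each of the $k+1$ slots of the ranked chain with equal probability, independently of the block orders. Once that is in place, the reduction in the first step is a two-line rearrangement and the case analysis is routine bookkeeping — the only slightly delicate point being the averaged offset $\tfrac{\phi_U(i)-1}{2}$, which merely records that $i$ sits, in expectation, in the middle of its own block of tied items.
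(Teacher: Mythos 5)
Your proposal is correct and follows essentially the same route as the paper: decompose Kendall's tau into pairwise discordance indicators, exploit the product structure of the average over $S\times R$ to reduce everything to the per-pair marginals $p_{ij}(U)$ under the uniform measure on consistent permutations, and compute those marginals case by case (forced order across blocks, symmetry giving $1/2$, and the uniform-slot argument giving the $\bigl(\tau_U+\tfrac{\phi_U-1}{2}\bigr)/(k+1)$ terms). The only difference is that you make explicit, as a structural lemma, the ``uniform within blocks $\times$ uniform interleaving'' description that the paper invokes informally.
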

\begin{proof}
We note that \eqref{eq:expectedTau} is an expectation with respect to the uniform measure. We thus start by computing the probability $p_{ij}(U)$ that $i$ is preferred to $j$ for $U=S$ and $U=R$ under the uniform measure. Five scenarios exist for each of $p_{ij}(U)$ corresponding to whether each of $i$ and $j$ are ranked by $S,R$. Starting with the case that $i$ is not ranked and $j$ is ranked, we note that $i$ is equally likely to be preferred to any item or to be preferred to. Given the uniform distribution over compatible rankings item $j$ is equally likely to appear in positions $\tau_U(j),\ldots,\tau_U(j)+\phi_U(j)-1$. Thus
\begin{align}
  p_{ij}&=\frac{1}{\phi_U(j)}\frac{\tau_U(j)}{k+1}+\cdots+\frac{1}{\phi_U(j)}\frac{\tau_U(j)+\phi_U(j)-1}{k+1}=\frac{\tau_U(j)+\frac{\phi_U(j)-1}{2}}{k+1}
\end{align}
Similarly, if $j$ is unknown and $i$ is known then $p_{ij}+p_{ji}=1$. If both $i$ and $j$ are unknown either ordering must be equally likely given the uniform distribution making $p_{ij}=1/2$. Finally, if  both $i$ and $j$ are known $p_{ij}=1,1/2,0$ depending on their preference. Given $p_{ij}$, linearity of expectation, and the independence between rankings, the change in the expected number of inversions relative to the uniform expectation $n(n-1)/4$ can be found by considering each pair separately,
\begin{eqnarray*}
 \mbox{E}T(i,j)&=&\frac{1}{2}P\left(\mbox{$i$ and $j$       disagree}\right)-\frac{1}{2}P\left(\mbox{$i$ and $j$ agree}\right)\\
  &=&\frac{1}{2}(p_{ij}(\sigma)(1-p_{ij}(\pi))+(1-p_{ij}(\sigma))p_{ij}(\pi)) -p_{ij}(\sigma)p_{ij}(\pi)-(1-p_{ij}(\sigma))(1-p_{ij}(\pi)))\\
  &=&\frac{-1}{2}\left(1-2p_{ij}(\sigma)\right)\left(1-2p_{ij}({\pi})\right).
\end{eqnarray*}
Summing the $n(n-1)/2$ components yields the desired quantity.
\end{proof}

\begin{corr} Denoting the number of items ranked by either $S$ or $R$ or both as $k$, and assuming either $h>\max_{\pi\in R}\max_{i=1}^n\max_{\sigma\in S_i} T(\sigma,\pi)$ or that the modified triangular kernel $K_h^*(\pi,\sigma)\propto (1-h^{-1})T(\pi,\sigma)$ is used, the complexity of computing $\hat p(R)$ in  \eqref{eq:npEstimator} (assuming uniform $q(\pi|S_i)$) is $O(k^2)$ online and $O(n^4)$ offline.
\end{corr}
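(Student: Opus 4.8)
\emph{Proof plan.}\quad The plan is to collapse the factorially many summands in \eqref{eq:npEstimator} into a closed form built from the average-distance identity \eqref{eq:expectedTau}, and then read off the complexity. First I would invoke the hypothesis: when $h>\max_{\pi\in R}\max_i\max_{\sigma\in S_i}T(\sigma,\pi)$ the indicator $I(h-T(\pi,\sigma))$ in the triangular kernel \eqref{eq:triangularKernelRankedData} is identically $1$ on every pair $(\pi,\sigma)$ occurring in \eqref{eq:npEstimator}, while for the modified kernel $K_h^*$ no indicator is present at all; in either case the un-normalized kernel is just $1-h^{-1}T(\pi,\sigma)$. Substituting the uniform surrogate $q(\sigma|S_i)=1/|S_i|$ and using $\sum_{\pi\in R}\sum_{\sigma\in S_i}1=|R|\,|S_i|$, the inner double sum equals $|R|\bigl(1-h^{-1}\bar T_i\bigr)$, so that
\[ \hat p(R)\;=\;\frac{|R|}{m\,C(h)}\sum_{i=1}^{m}\Bigl(1-h^{-1}\bar T_i\Bigr),\qquad \bar T_i:=\frac{1}{|R|\,|S_i|}\sum_{\pi\in R}\sum_{\sigma\in S_i}T(\pi,\sigma). \]
Thus the estimator is determined by the single integer $|R|$, the normalizer $C(h)$, and the $m$ numbers $\bar T_i$, each given in closed form by \eqref{eq:expectedTau} (with the $S,R$ there played by $S_i$ and our $R$); in particular no set of permutations is ever enumerated.

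For the bookkeeping terms: $C(h)$ depends only on $n,h$ (for the triangular kernel it is the generating-function expression of Proposition~1, computable in $O(n^4)$ by the dynamic program of Proposition~2; for $K_h^*$ it is $n!\bigl(1-h^{-1}n(n-1)/4\bigr)$), and so is tabulated offline, as is $n!$. Online, $|R|$ is $n!$ divided by $k_R!$ and multiplied by the product of the factorials of the tie-block sizes of $R$ ($k_R$ being the number of items ranked in $R$), hence $O(k)$ once $n!$ is in hand. After an $O(k)$ pass to record $\tau_{S_i},\phi_{S_i}$ on the items ranked in $S_i$ (and $\tau_R,\phi_R$ once), what remains is to bound the cost of the pair sum $\sum_{a<b}\bigl(1-2p_{ab}(S_i)\bigr)\bigl(1-2p_{ab}(R)\bigr)$ of \eqref{eq:expectedTau}, whose naive range is all $\binom n2$ pairs.

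The crux — and the step I expect to be the main obstacle — is showing that this pair sum is effectively sparse. From the case split in \eqref{eq:expectedTau}, $p_{ab}(U)=1/2$ whenever neither $a$ nor $b$ is ranked in $U$, so a pair contributes $0$ unless some endpoint is ranked in $S_i$ \emph{and} some endpoint is ranked in $R$. Two families survive: (i) pairs with both endpoints among the $k$ items ranked in $S_i$ or $R$ — at most $\binom k2$ of them, each $O(1)$ given the tabulated $\tau,\phi$; and (ii) ``mixed'' pairs with exactly one ranked endpoint, which is then forced to be ranked in \emph{both} $S_i$ and $R$, the other endpoint being ranked in neither. For such a pair both $p_{ab}(S_i)$ and $p_{ab}(R)$ fall in the ``only one item ranked'' branch of \eqref{eq:expectedTau} and hence depend only on the ranked endpoint, not on which of its $n-k$ unranked partners is chosen; using also that the summand is invariant under $a\leftrightarrow b$ (since $p_{ba}=1-p_{ab}$), family (ii) contributes $\sum_{a}(n-k)\,c_a$, where $c_a$ is the common value for shared item $a$, over at most $k$ shared items — i.e.\ $O(k)$ work once $n-k$ is known. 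Therefore each $\bar T_i$ costs $O(k^2)$, the $m$ of them $O(mk^2)$ — i.e.\ $O(k^2)$ per observed ranking, matching the stated online bound — and, adding the offline tabulation of $C(h)$ and $n!$, we obtain $O(k^2)$ online and $O(n^4)$ offline, as claimed.
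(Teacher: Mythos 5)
Your proposal is correct and follows essentially the same route as the paper's (one-line) proof: reduce \eqref{eq:npEstimator}, under the stated kernel hypotheses and uniform $q$, to the closed-form average distance of \eqref{eq:expectedTau} for the online part and to the generating-function normalization of Propositions 1--2 for the $O(n^4)$ offline part. You simply supply details the paper leaves implicit --- notably the sparsity argument showing the pair sum in \eqref{eq:expectedTau} costs $O(k^2)$ rather than $O(n^2)$, and the closed forms for $|R|$ and the modified-kernel normalizer --- all of which check out.
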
 \label{corr:expDist}
\begin{proof}
The proof follows from noting that \eqref{eq:npEstimator} reduces to $O(n^4)$ offline computation of the normalization term and $O(k^2)$ online computation of the form \eqref{eq:expectedTau}.
\end{proof}

\section{Applications and Case Studies}

We divide our experimental study to three parts. In the first we examine the task of predicting probabilities. The remaining two parts use these probabilities for rank prediction and rule discovery.  

In our experiments we used three datasets. The Movielens dataset\footnote{http://www.grouplens.org} contains one million ratings from $6040$ users over $3952$ movies. 
The EachMovie dataset\footnote{ http://www.grouplens.org/node/76} contains $2.6$ million ratings from $74424$ users over $1648$ movies. The Netflix dataset \footnote{http://www.netflixprize.com/community} contains 100 million movie ratings from $480189$ users on $17770$.
In all of these datasets users typically rated only a small number of items. Histograms of the distribution of the number of votes per user, number of votes per item, and vote distribution appear in Figure~\ref{fig:hists}.

\begin{figure}
\begin{tabular}{ccc}
{\scriptsize Movielens}& {\scriptsize Netflix} &{\scriptsize EachMovie}    \\
\includegraphics[scale=0.34]{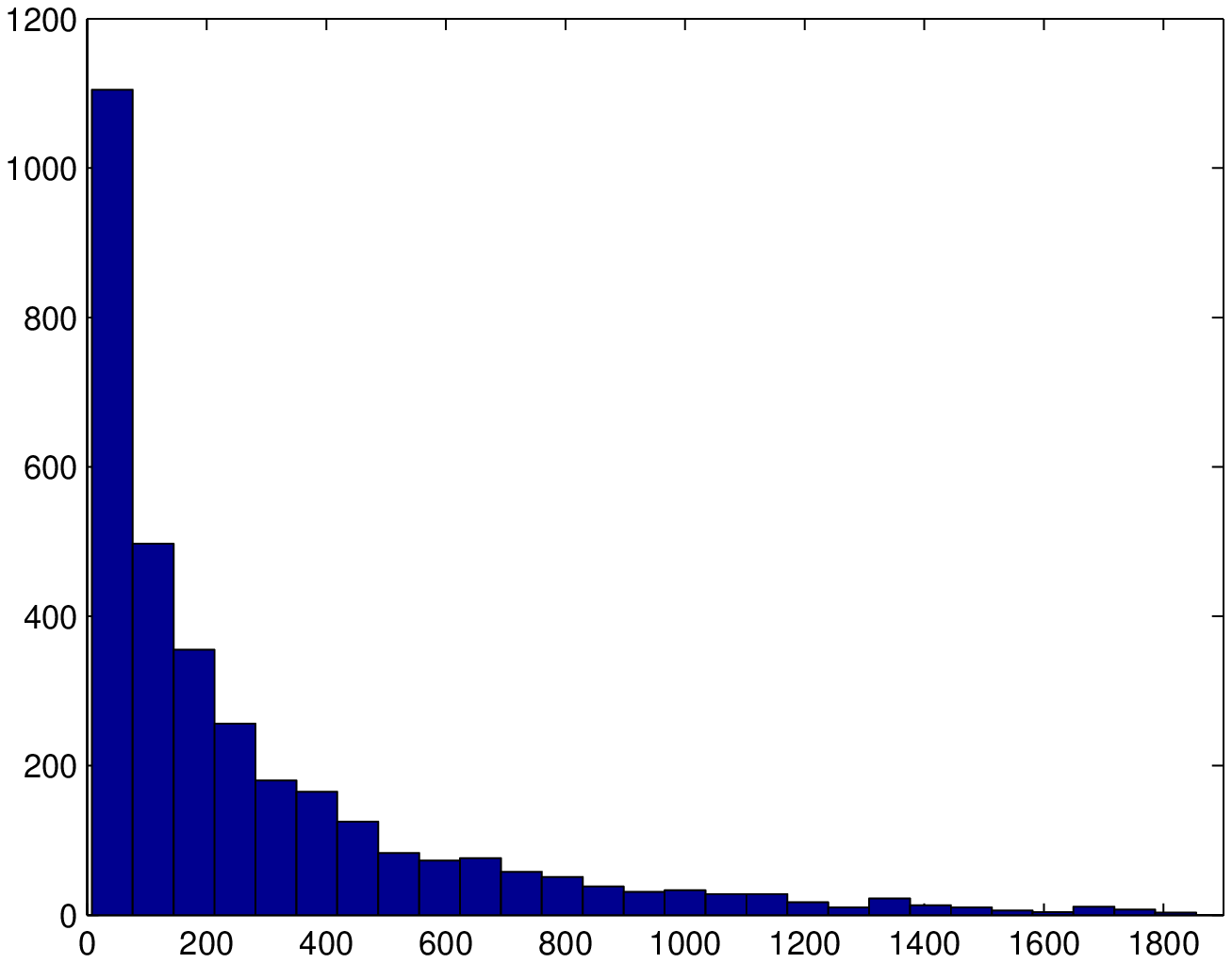}  & 
\includegraphics[scale=0.34]{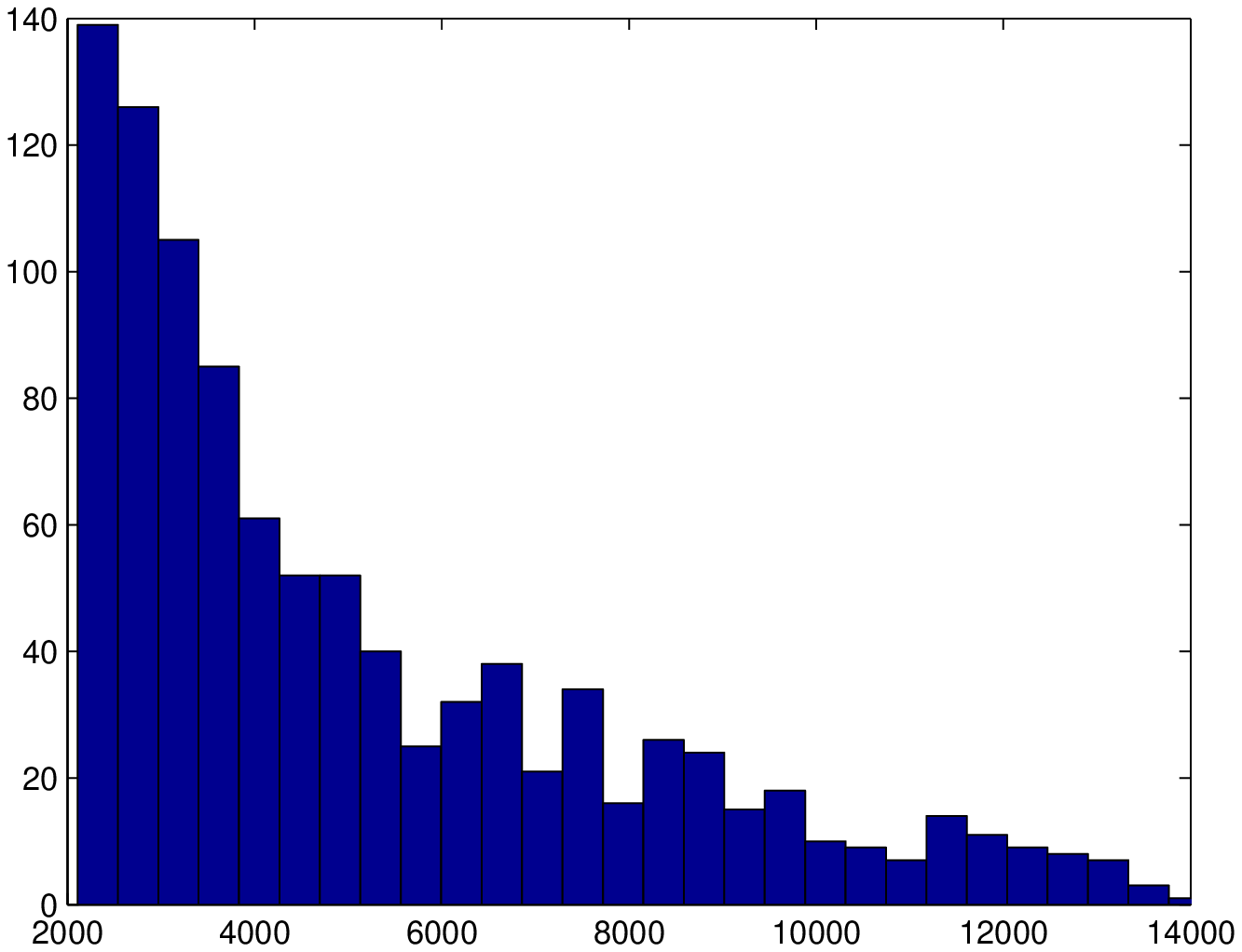}  &
\includegraphics[scale=0.34]{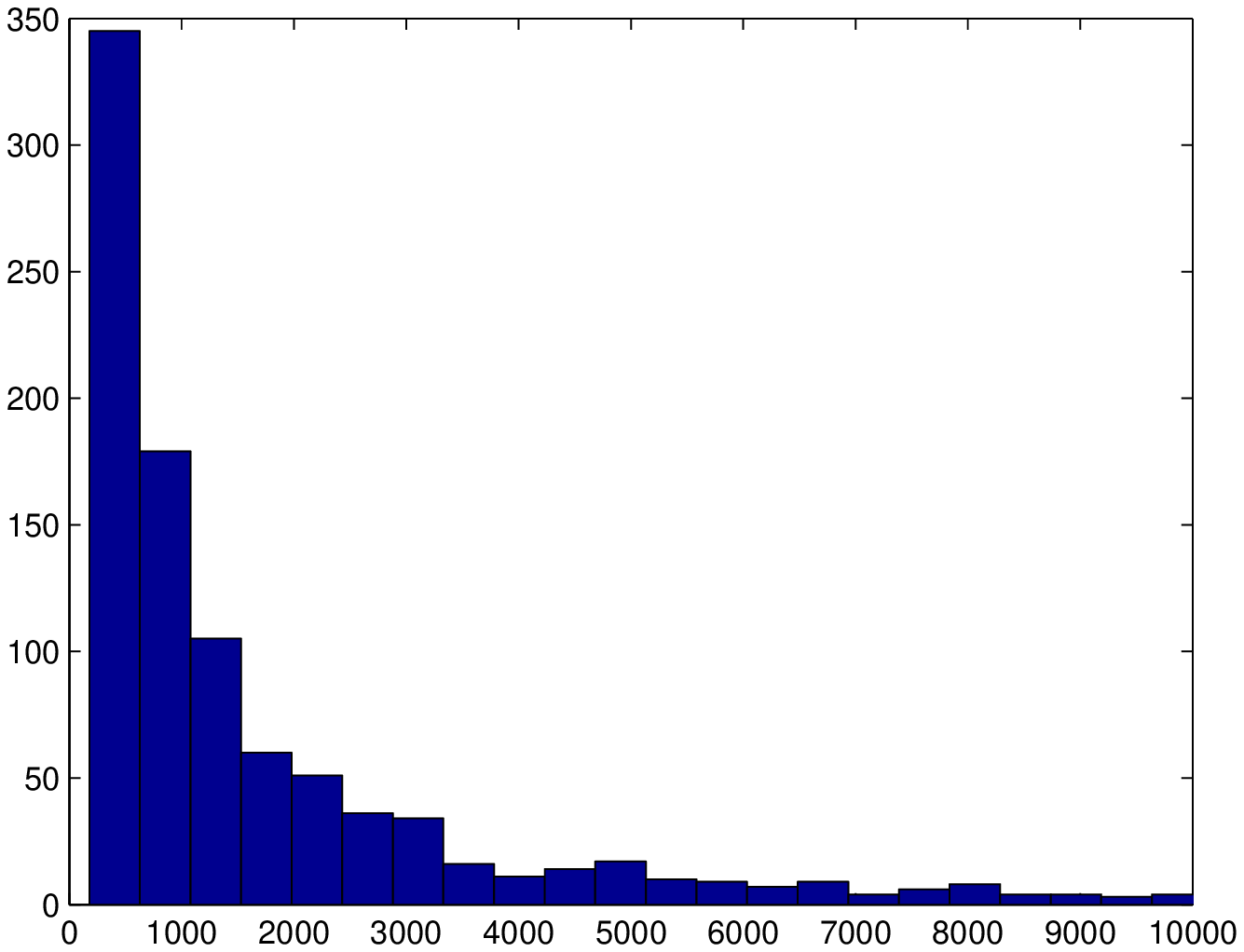}  \\
\includegraphics[scale=0.34]{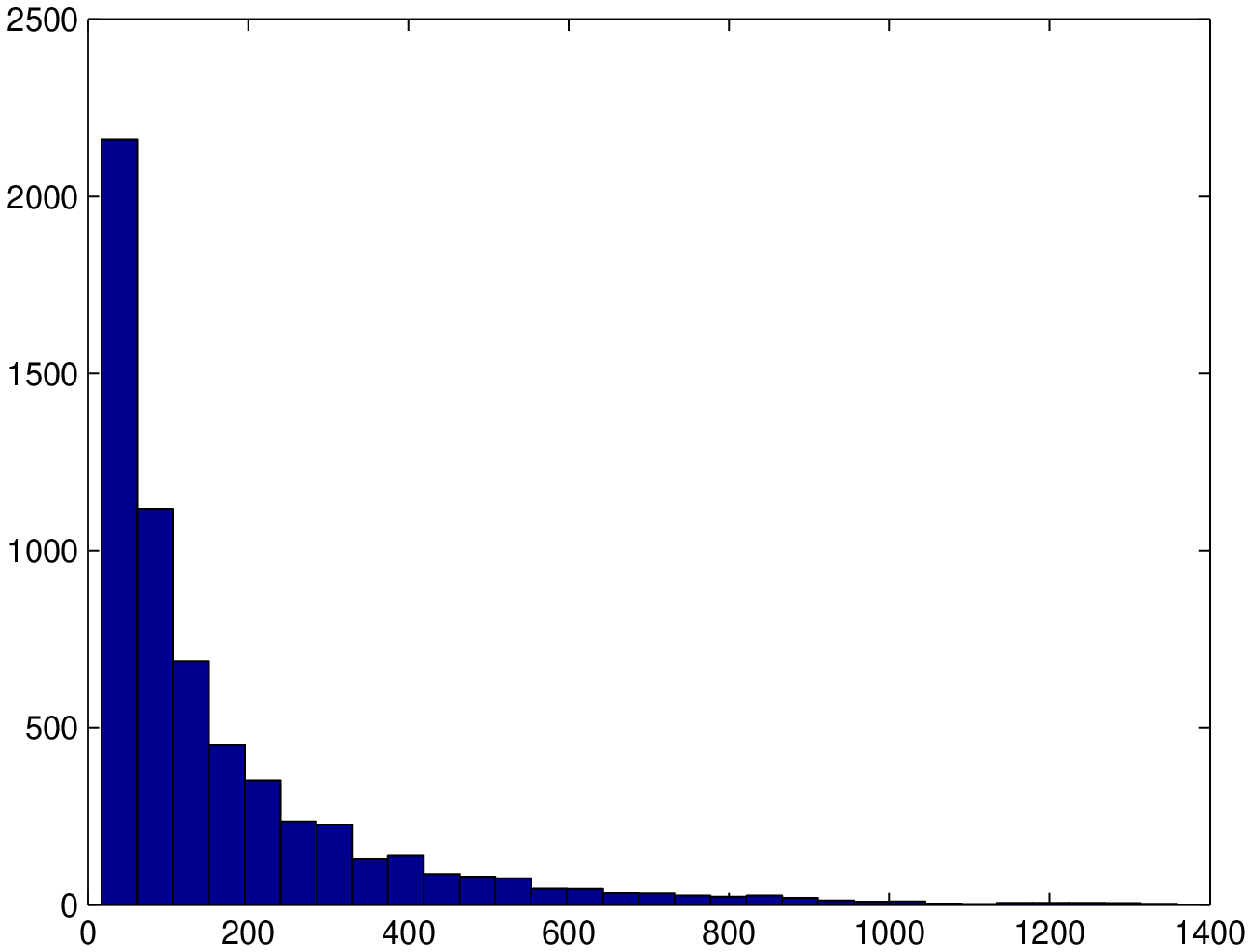}   & 
\includegraphics[scale=0.34]{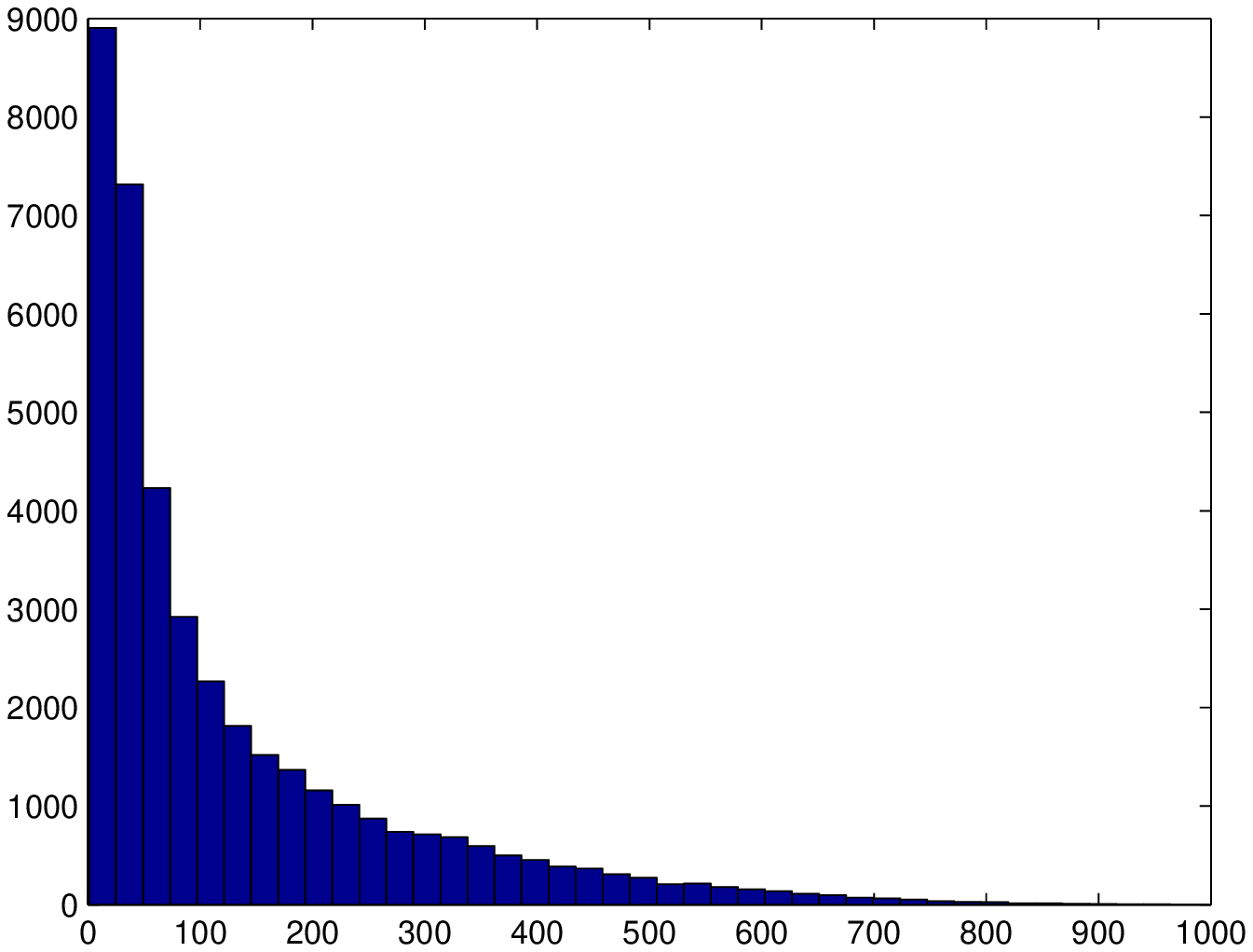}   &
\includegraphics[scale=0.34]{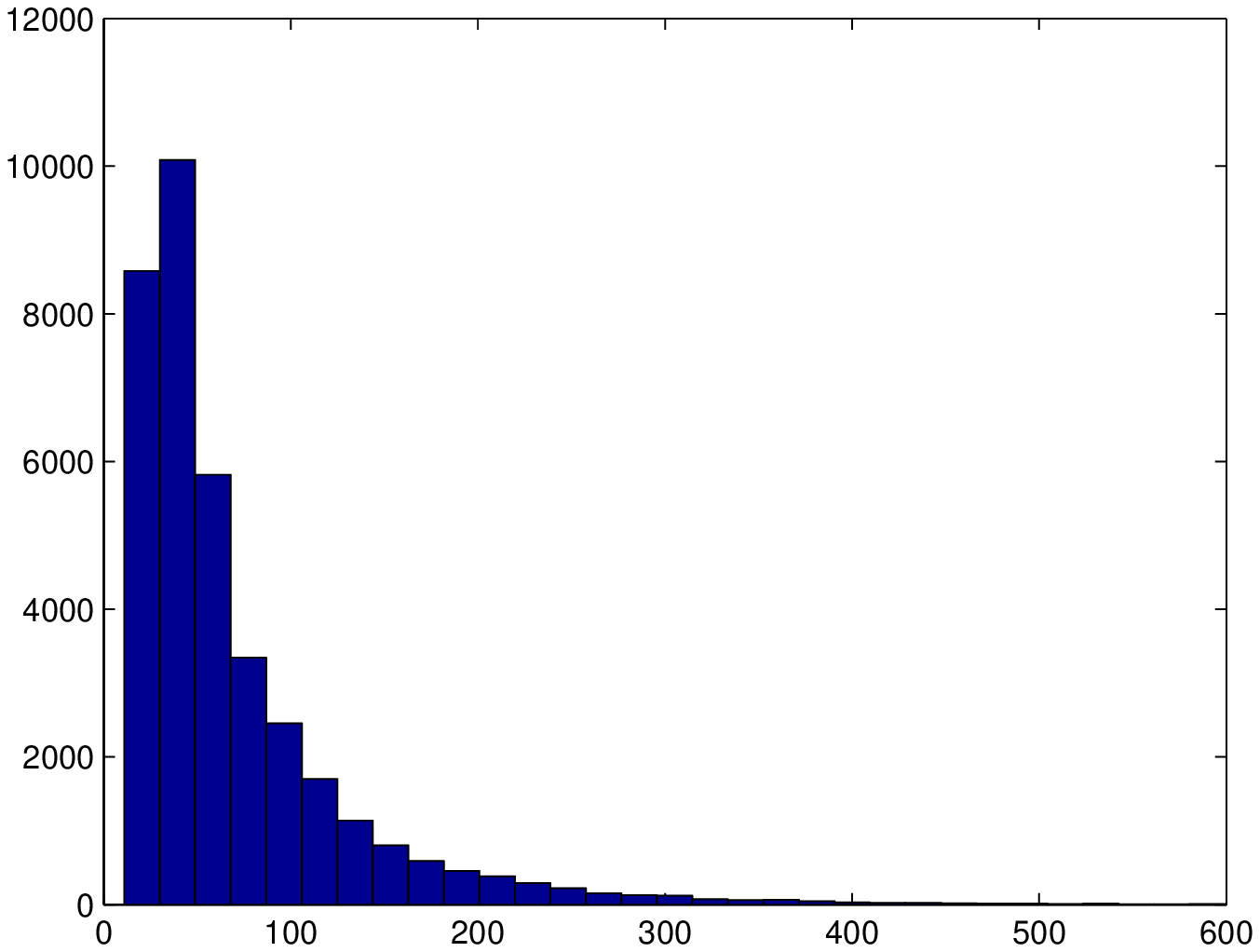}    \\
\includegraphics[scale=0.34]{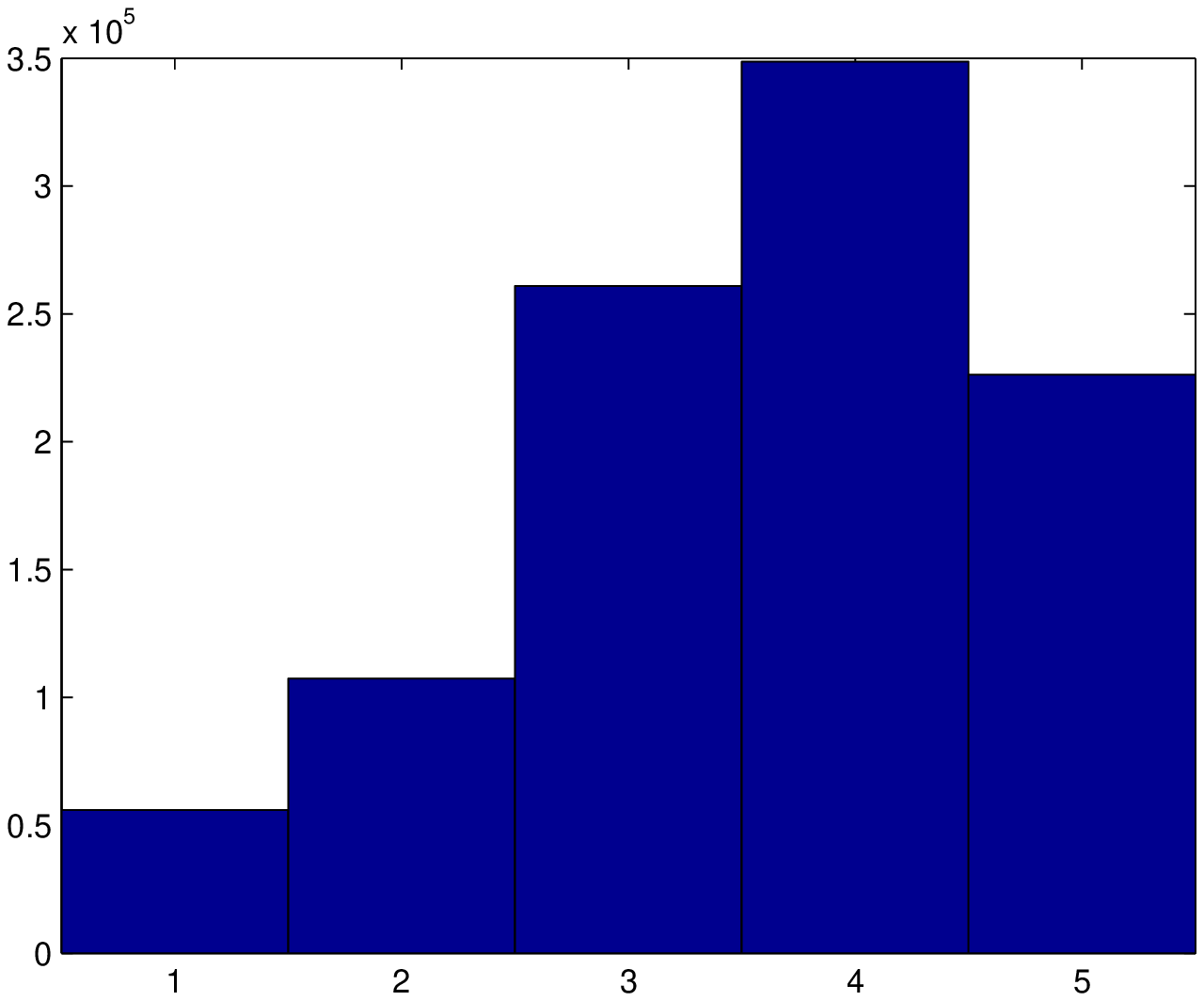}  &   
\includegraphics[scale=0.34]{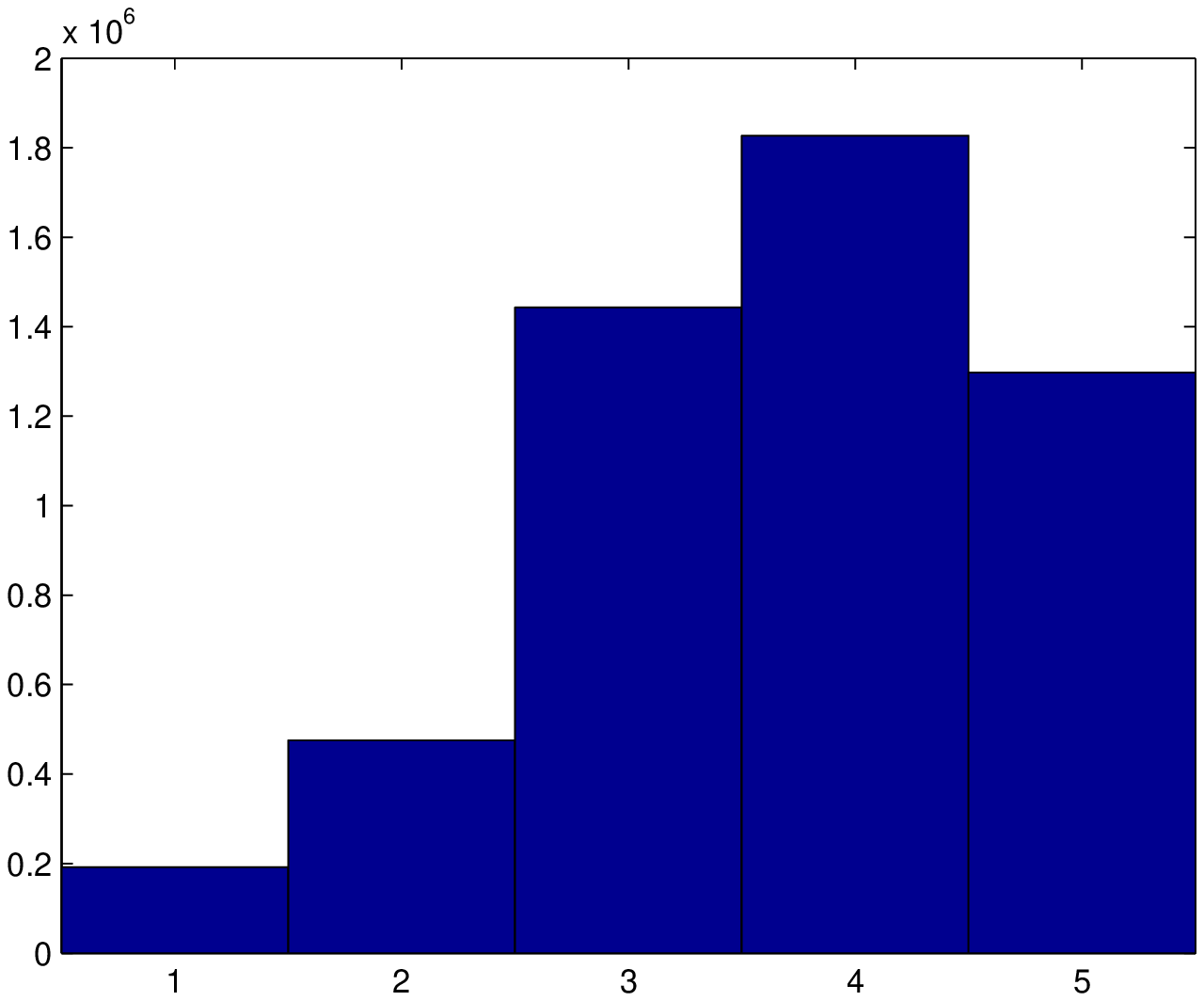}    &
\includegraphics[scale=0.34]{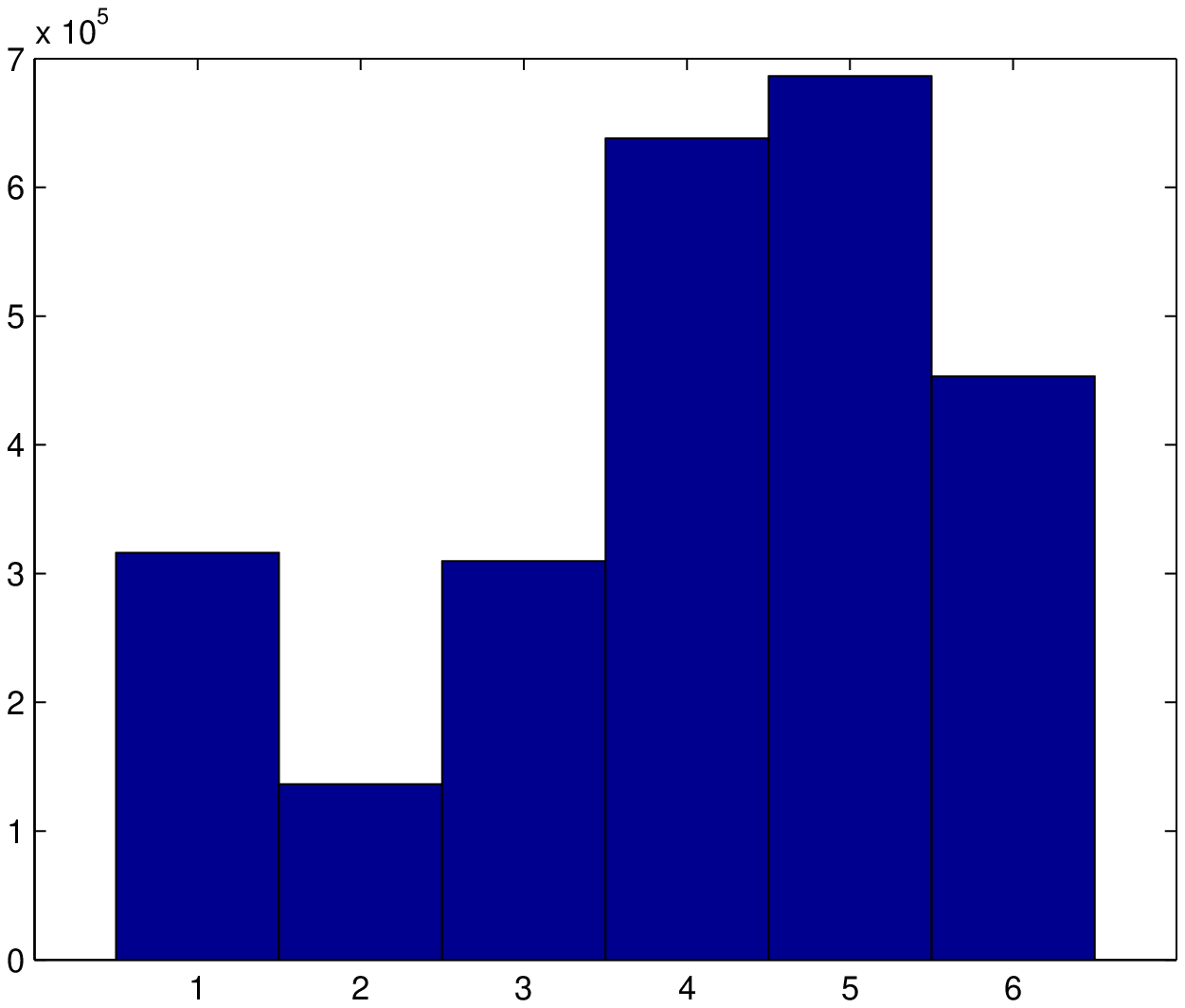}
\end{tabular}
\caption{Histograms of the number of user votes per movie (top row), number of movies ranked per user (middle row),  and votes (bottom row). 
}
\label{fig:hists} 
\vspace{-0.15in}
\end{figure}

\subsection{Estimating Probabilities} 

We consider here the task of estimating  $\hat p(R)$ where $R$ is a set of permutations corresponding to a tied incomplete ranking.  Such estimates may be used to compute conditional estimates $\hat P(R|S_{m+1})$ which are used to predict which augmentations $R$ of $S_{m+1}$ are highly probable. For example, given an observed preference $3\prec 2 \prec 5$ we may want to compute $\hat p(8 \prec 3 \prec 2\prec 5 | 3\prec 2 \prec 5)=\hat p(8\prec 3\prec 2 \prec 5) / \hat p(3 \prec 2\prec 5)$ to see whether item $8$ should be recommended to the user.

For simplicity we focus in this section on probabilities of simple events such as $i\prec j$ or $i\prec j \prec k$. The next section deals with more complex events. In our experiment, we estimate the probability of $i\prec j$ for the $n=53$ most rated movies in Netflix and $m=10000$ users who rate most of these movies. The probability matrix of the pairs is shown in Figure~\ref{fig:pairPrb1} where each cell corresponds to the probability of preference between a pair of movies determined by row $j$ and column $i$. In the top left panel the rows and columns are ordered by average probability of a movie being preferred to others $r(i)=\frac{\sum_{j}\hat p(i\prec j)}{n}$ with the most preferred movie in row and column 1 (top right panel indicates the ordering according to $r(i)$). In the bottom left panel the movies were ordered first by popularity of genres and then by $r(i)$. The bottom right panel indicates that ordering. The names, genres, and both orderings of all 53 movies appear in Figure \ref{fig:netflix53stat}.

The three highest movies in terms of $r(i)$ are Lord of the Rings: The Return of the King, Finding Nemo, and Lord of the Rings: The Two Towers. The three lowest movies are  Maid in Manhattan, Anger Management, and The Royal Tenenbaums. Examining the genre (colors in right panels of Figure~\ref{fig:pairPrb1}) we see that family and science fiction are generally preferred to others movies while comedy and romance generally receive lower preferences. The drama, action genres are somewhere in the middle. 

Also interesting is the variance of the movie preferences within specific genres. Family movies are generally preferred to almost all other movies.  Science fiction movies, on the other hand, enjoy high preference overall but exhibit a larger amount of variability as a few movies are among the least preferred. Similarly, the preference probabilities of action movies are widely spread with some movies being preferred to others and others being less preferred. More specifically (see bottom left panel of Figure~\ref{fig:pairPrb1}) we see that the decay of $r(i)$ within genres is linear for family and romance and nonlinear for science fiction, action, drama, and comedy. In these last three genres there are a few really ``bad'' movies that are substantially lower than the rest of the curve. Figure~\ref{fig:netflix53stat} shows the full information including titles, genres and orderings of the $53$ most popular movies in Netflix.

\begin{figure}\centering
\begin{tabular}{cc}
\includegraphics[scale=0.55]{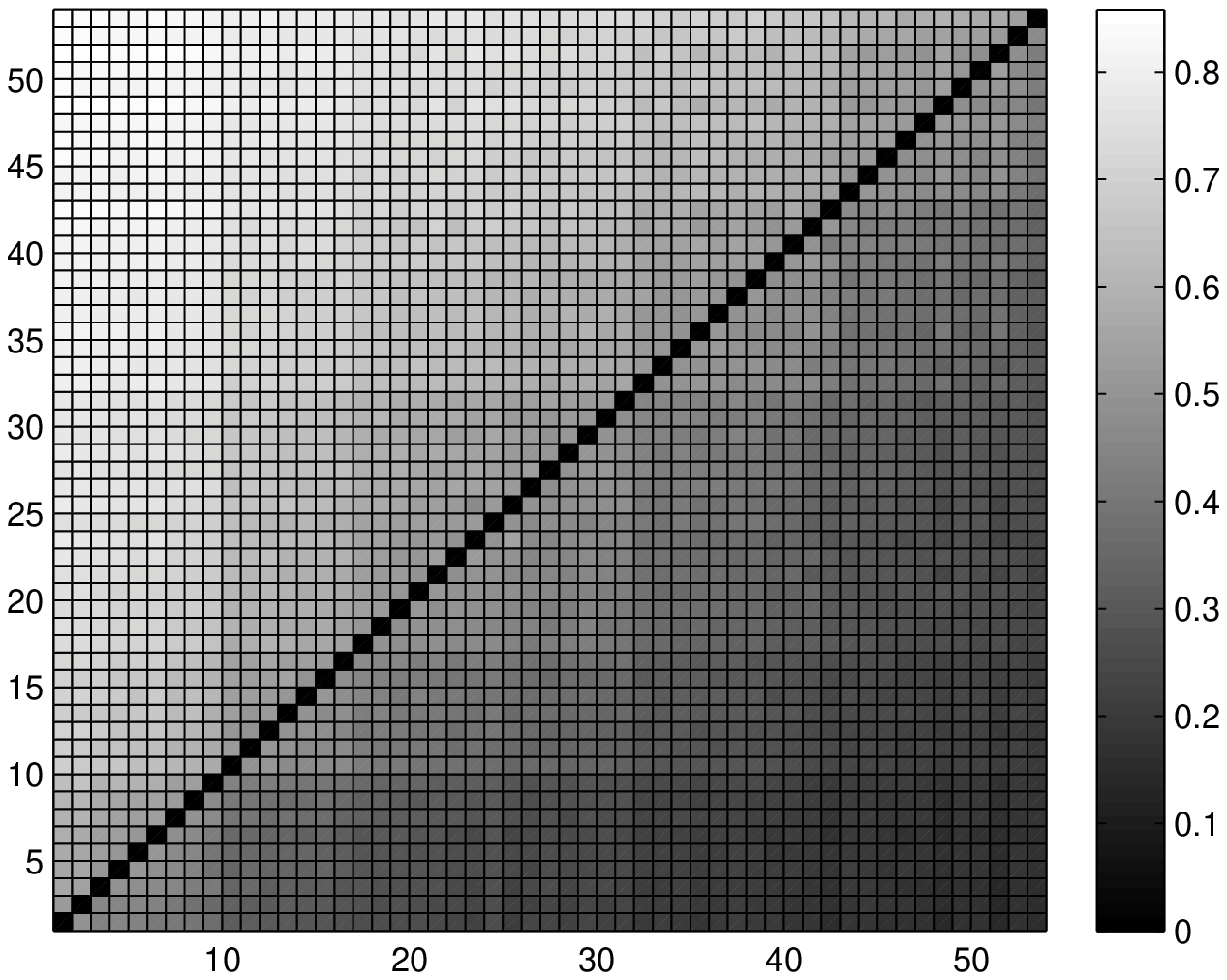}\hspace{-0.5in} &
\includegraphics[scale=0.55]{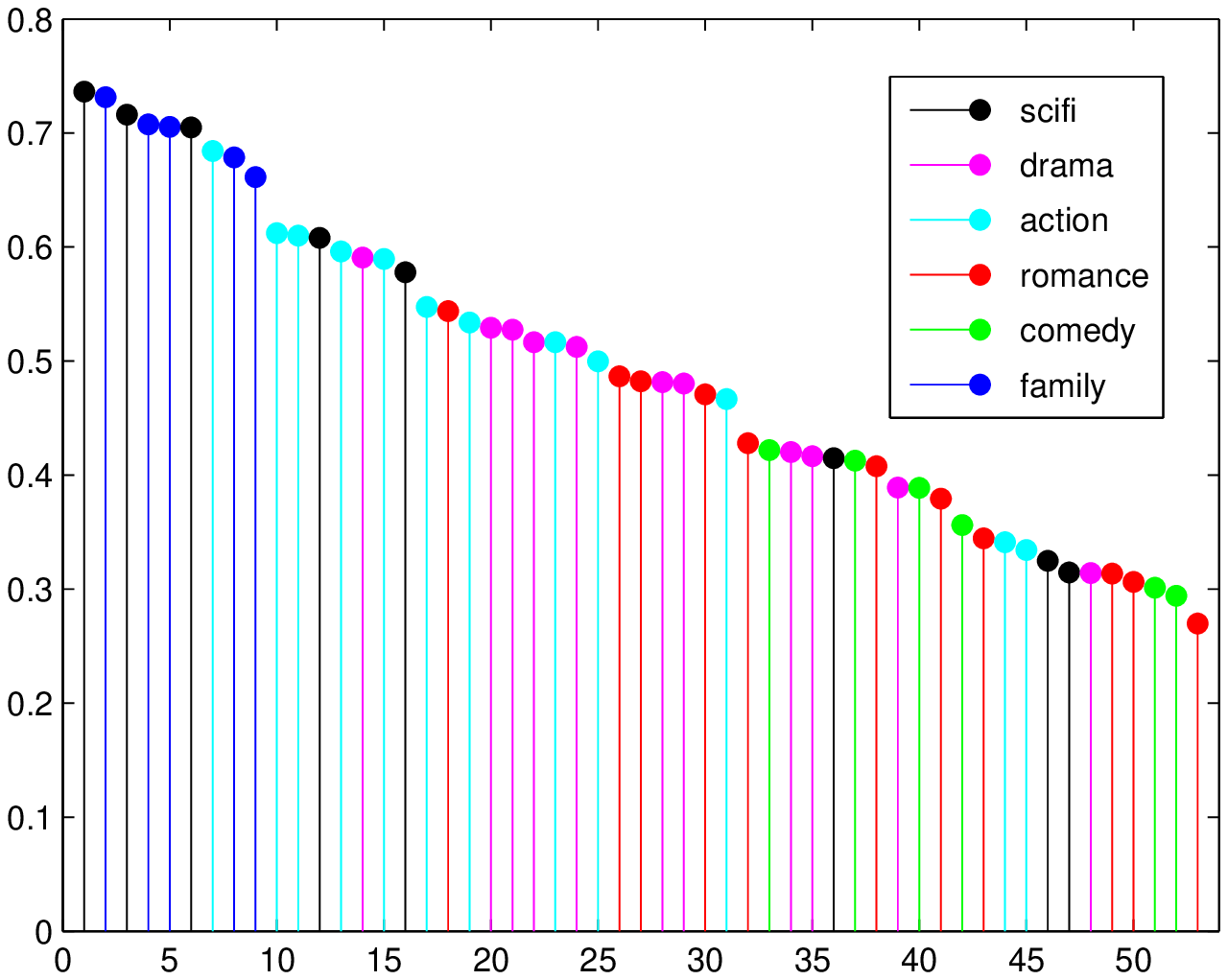}\\
\includegraphics[scale=0.55]{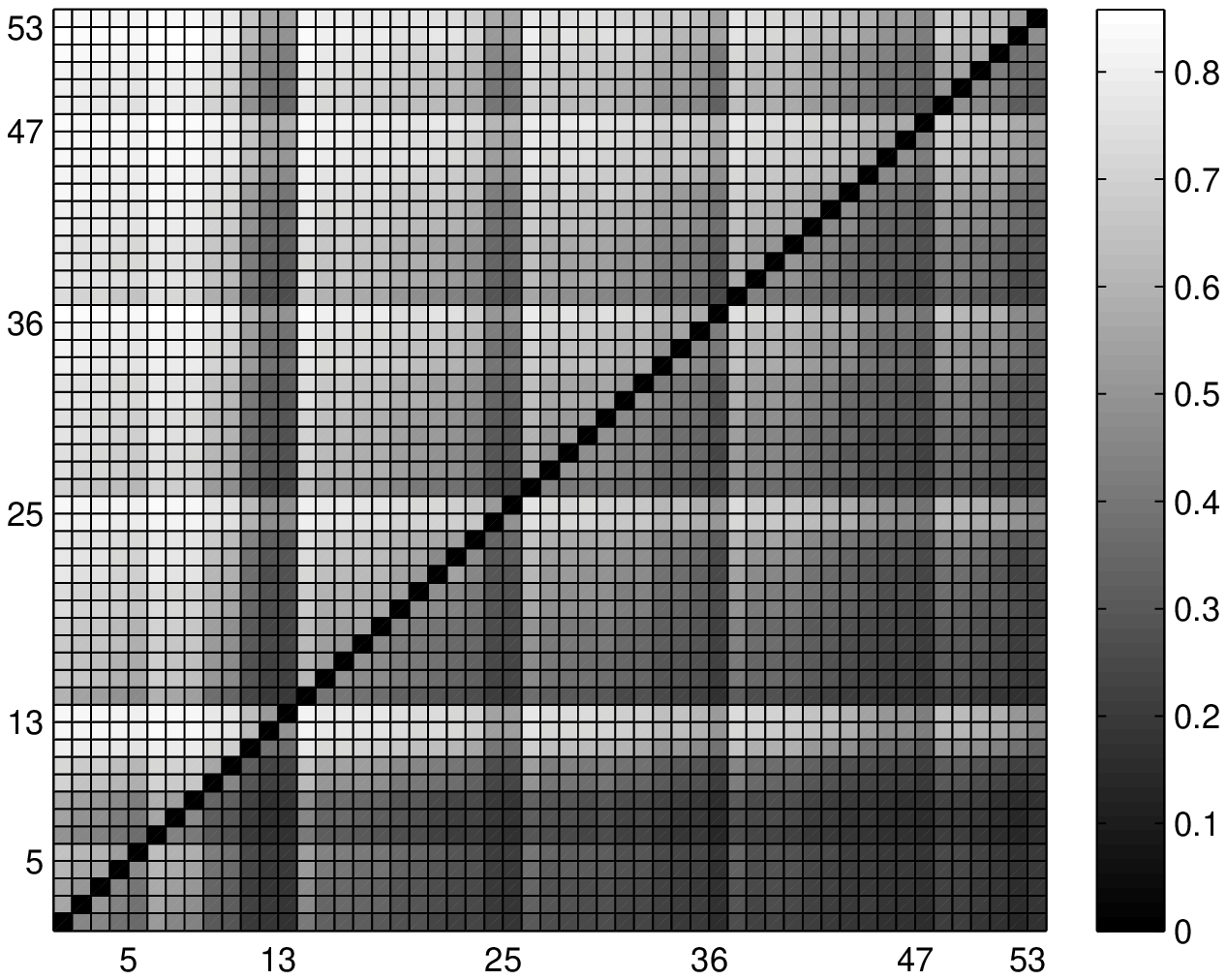}\hspace{-0.5in} &
\includegraphics[scale=0.55]{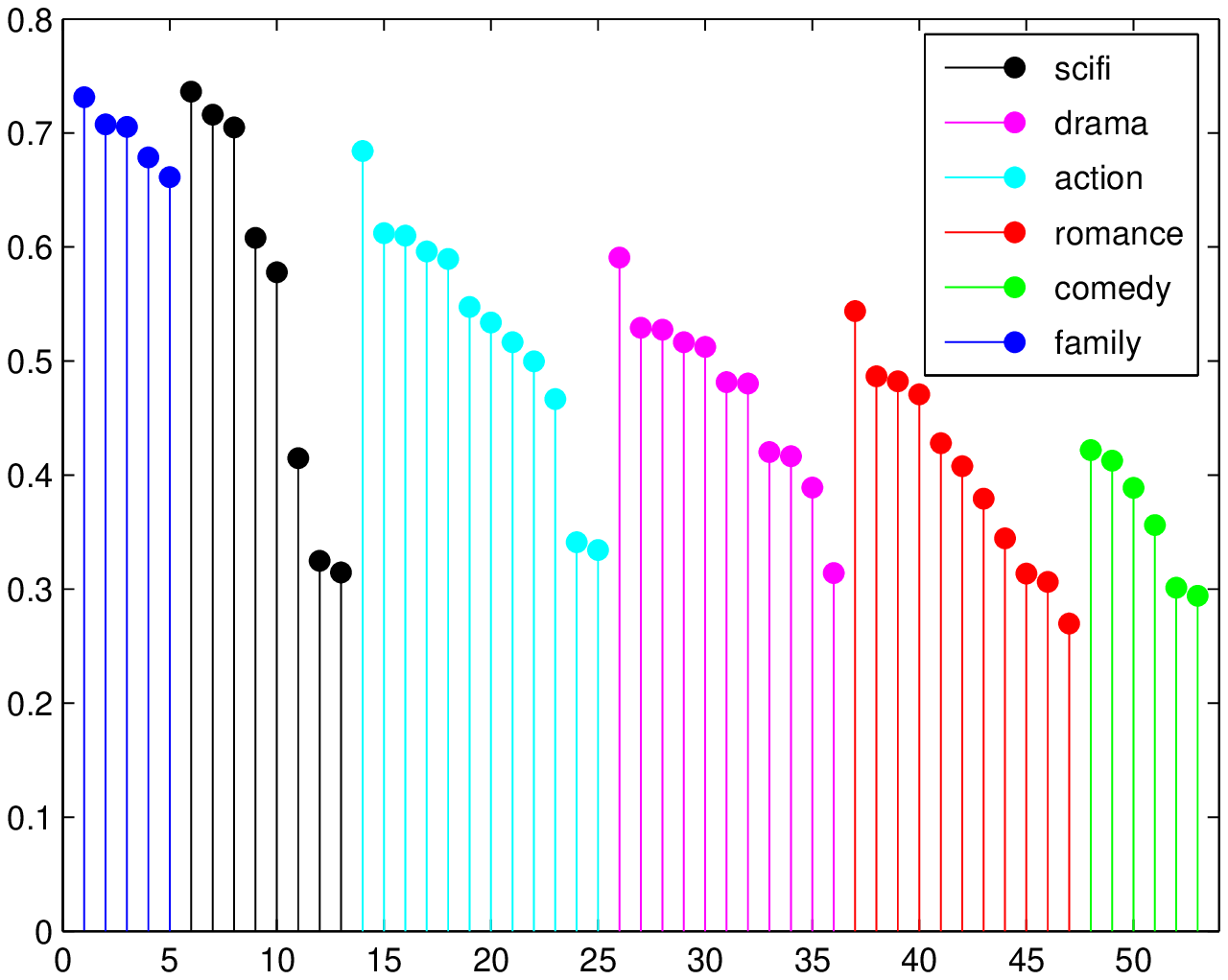}\\
\end{tabular}\vspace{-0.2in}
\caption{Left: The estimated probability of movie $i$ being preferred to movie $j$. Right: a plot of $r(i)=\sum_j \hat p(i\prec j)/n$ for all movies with color indicating genres. In both panels the movies were ordered by $r(i)$ (top row) and first by popularity of genres and then by $r(i)$ (bottom row).}  
\label{fig:pairPrb1}
\end{figure}

We plot the individual values of $\hat p(i\prec j)$ for three movies:  Shrek (family), Catch Me If You Can (drama) and Napoleon Dynamite (comedy) (Figure~\ref{fig:pairPrb3}). Comparing the three stem plots we observe that Shrek is preferred to almost all other movies, Napoleon Dynamite is less preferred than most other movies, and Catch Me If You Can is preferred to some other movies but less preferred than others. Also interesting is the linear increase of the stem plots for Catch Me If You Can and Napoleon Dynamite and the non-linear increase of the stem plot for Shrek. This is likely a result of the fact that for very popular movies there are only a few comparable movies with the rest being very likely to be less preferred movies ($\hat p(i\prec j)$ close to 1).

\begin{figure}\centering
\begin{tabular}{ccc}\hspace{-0.4in}
\includegraphics[scale=0.41]{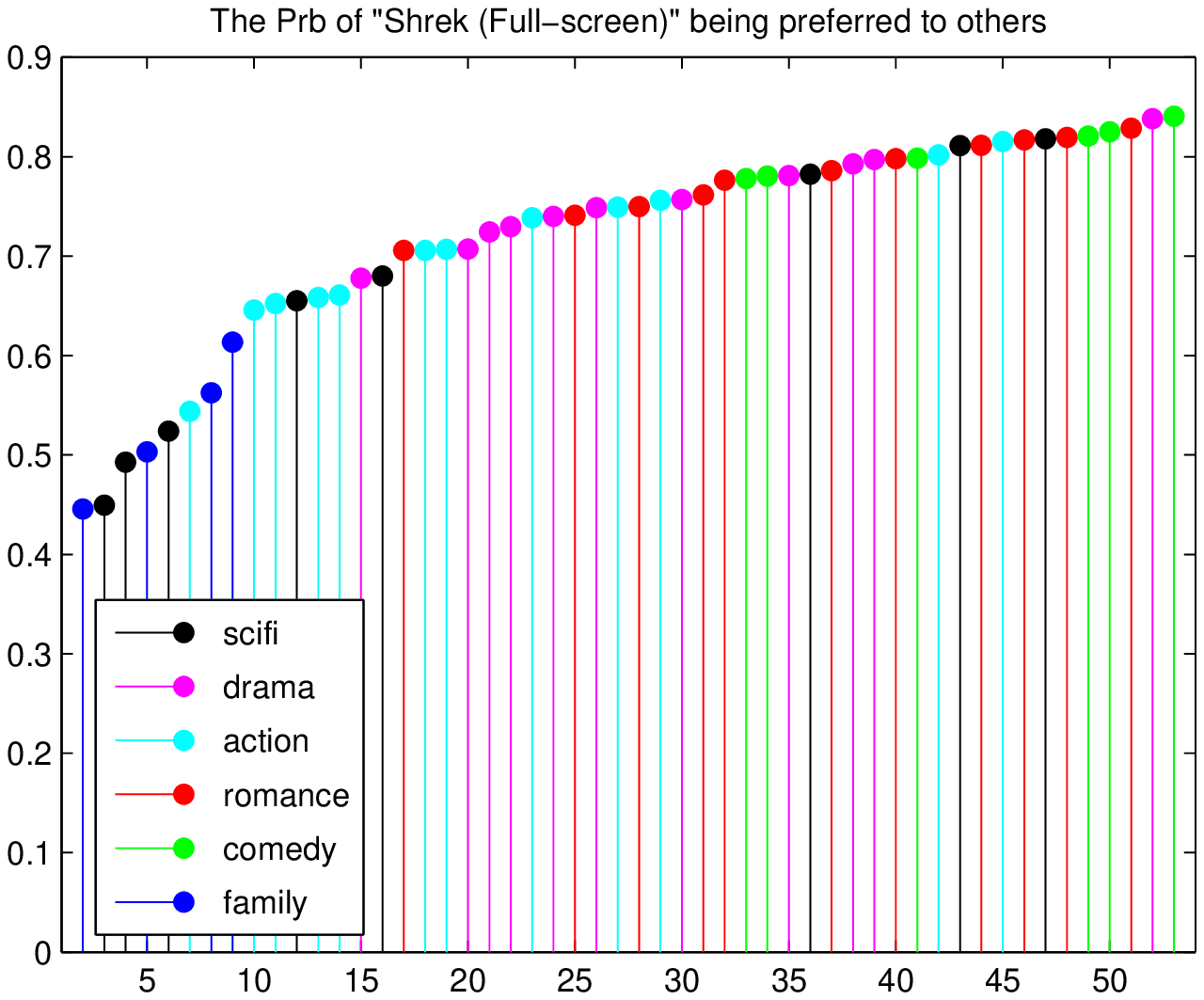}\hspace{-0.4in} &
\includegraphics[scale=0.41]{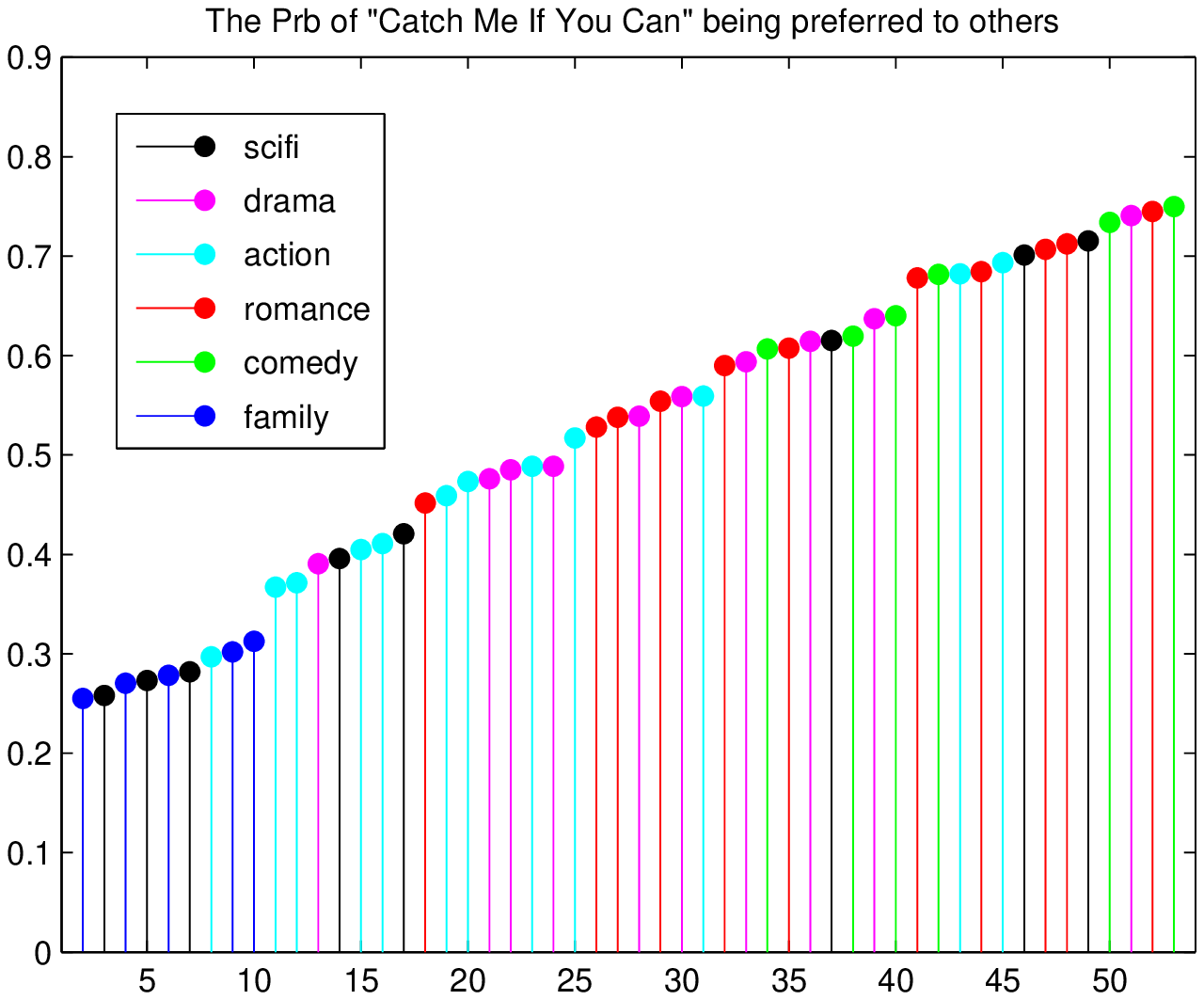}\hspace{-0.4in} &
\includegraphics[scale=0.41]{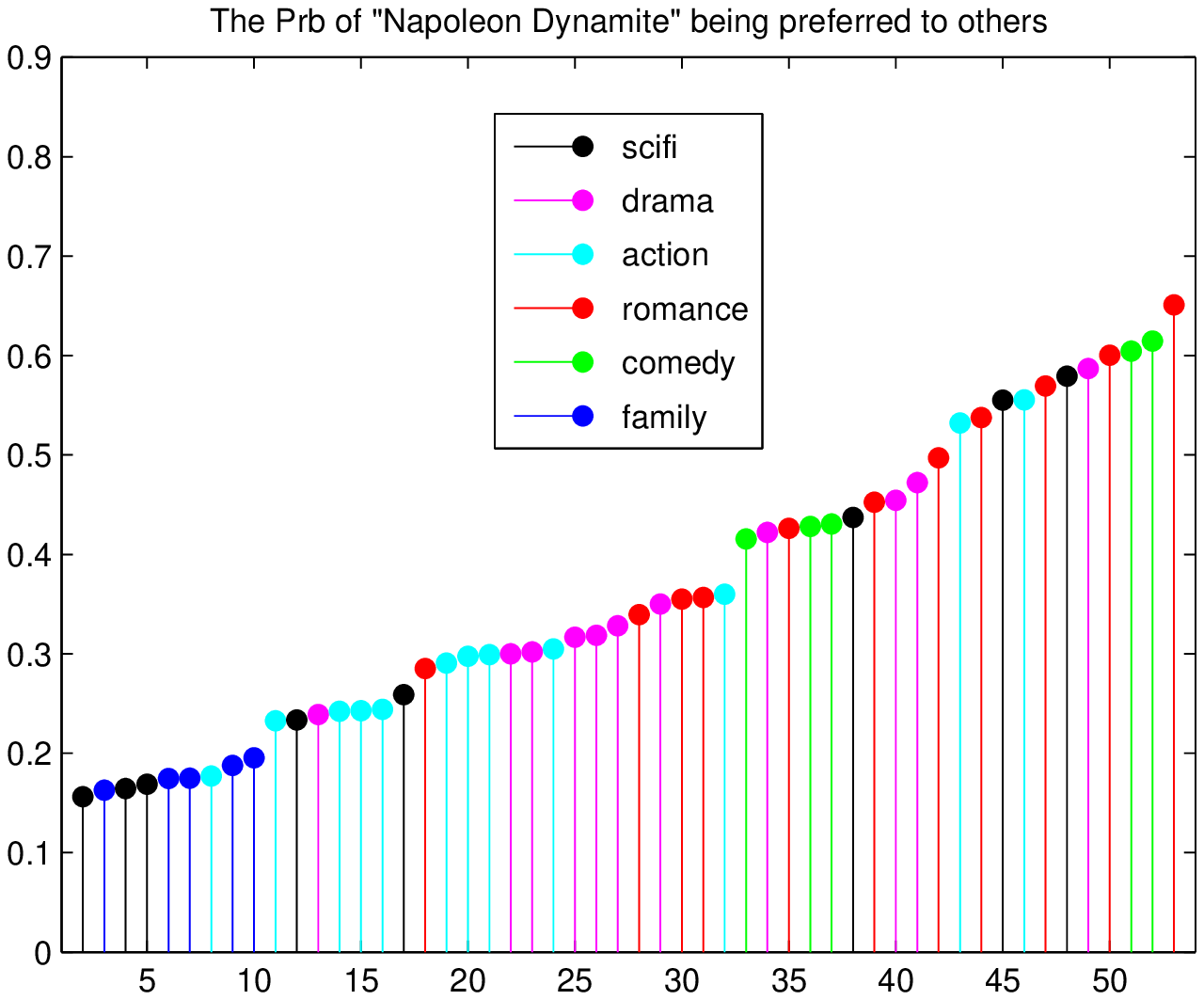}\\
\end{tabular}
\caption{The value $\hat p(i\prec j)$ for all $j$ for three movies: Shrek (left), Catch Me If You Can (middle) and Napoleon Dynamite (right).}
\label{fig:pairPrb3}
\end{figure}

\begin{figure}
\begin{tabular}{ll}
\begin{tabular}{|llll|}
\hline 
\scriptsize Titles & \tiny Genre & \tiny  Order1 & \tiny  Order2 \\ \hline 
\scriptsize Finding Nemo &\scriptsize \textcolor{blue}6 & \scriptsize 2 & \scriptsize 1 \\ \hline  
\scriptsize Shrek &\scriptsize \textcolor{blue}6 & \scriptsize 4 & \scriptsize 2 \\ \hline  
\scriptsize The Incredibles &\scriptsize \textcolor{blue}6 & \scriptsize 5 & \scriptsize 3 \\ \hline  
\scriptsize Monsters, Inc. &\scriptsize \textcolor{blue}6 & \scriptsize 8 & \scriptsize 4 \\ \hline  
\scriptsize Shrek II & \scriptsize \textcolor{blue}6 & \scriptsize 9 & \scriptsize 5 \\ \hline  
\scriptsize LOTR: The Return of the King &\scriptsize \textcolor{black}1 & \scriptsize 1 & \scriptsize 6 \\ \hline 
\scriptsize LOTR: The Two Towers &\scriptsize \textcolor{black}1 & \scriptsize 3 & \scriptsize 7 \\ \hline
\scriptsize LOTR: The Fellowship of the Ring &\scriptsize \textcolor{black}1 & \scriptsize 6 & \scriptsize 8 \\ \hline 
\scriptsize Spider-Man II &\scriptsize \textcolor{black}1 & \scriptsize 12 & \scriptsize 9 \\ \hline  
\scriptsize Spider-Man &\scriptsize \textcolor{black}1 & \scriptsize 16 & \scriptsize 10 \\ \hline  
\scriptsize The Day After Tomorrow &\scriptsize \textcolor{black}1 & \scriptsize 36 & \scriptsize 11\\ \hline  
\scriptsize Tomb Raider &\scriptsize \textcolor{black}1 & \scriptsize 46 & \scriptsize 12 \\ \hline  
\scriptsize Men in Black II &\scriptsize \textcolor{black}1 & \scriptsize 47 & \scriptsize 13 \\ \hline  
\scriptsize Pirates of the Caribbean I&\scriptsize \textcolor{cyan}3 & \scriptsize 7 & \scriptsize 14 \\ \hline  
\scriptsize The Last Samurai &\scriptsize \textcolor{cyan}3 & \scriptsize 10 & \scriptsize 15 \\ \hline  
\scriptsize Man on Fire& \scriptsize \textcolor{cyan}3 & \scriptsize 11 & \scriptsize 16 \\ \hline  
\scriptsize The Bourne Identity &\scriptsize \textcolor{cyan}3 & \scriptsize 13 & \scriptsize 17 \\ \hline  
\scriptsize The Bourne Supremacy &\scriptsize \textcolor{cyan}3 & \scriptsize 15 & \scriptsize 18 \\ \hline  
\scriptsize National Treasure &\scriptsize \textcolor{cyan}3 & \scriptsize 17 & \scriptsize 19 \\ \hline 
\scriptsize The Italian Job &\scriptsize \textcolor{cyan}3 & \scriptsize 19 & \scriptsize 20 \\ \hline  
\scriptsize Kill Bill II &\scriptsize \textcolor{cyan}3 & \scriptsize 23 & \scriptsize 21\\ \hline  
\scriptsize Kill Bill I &\scriptsize \textcolor{cyan}3 & \scriptsize 25 & \scriptsize 22\\ \hline  
\scriptsize Minority Report &\scriptsize \textcolor{cyan}3 & \scriptsize 31 & \scriptsize 23\\ \hline  
\scriptsize S.W.A.T. &\scriptsize \textcolor{cyan}3 & \scriptsize 44 & \scriptsize 24 \\ \hline  
\scriptsize The Fast and the Furious &\scriptsize \textcolor{cyan}3 & \scriptsize 45 & \scriptsize 25 \\ \hline  
\scriptsize Ocean's Eleven &\scriptsize \textcolor{magenta}2 & \scriptsize 14 & \scriptsize 26 \\ \hline  
\scriptsize I, Robot &\scriptsize \textcolor{magenta}2 & \scriptsize 20 & \scriptsize 27 \\ \hline  
\end{tabular} & \hspace{-0.15in}
\begin{tabular}{|llll|}
\hline
\scriptsize Titles & \tiny Genre & \tiny  Order1 & \tiny Order2 \\ \hline  
\scriptsize Mystic River &\scriptsize \textcolor{magenta}2 & \scriptsize 21 & \scriptsize 28 \\ \hline  
\scriptsize Troy &\scriptsize \textcolor{magenta}2 & \scriptsize 22 & \scriptsize 29 \\ \hline   
\scriptsize Catch Me If You Can &\scriptsize \textcolor{magenta}2 & \scriptsize 24 & \scriptsize 30\\ \hline  
\scriptsize Big Fish &\scriptsize \textcolor{magenta}2 & \scriptsize 28 & \scriptsize 31\\ \hline  
\scriptsize Collateral &\scriptsize \textcolor{magenta}2 & \scriptsize 29 & \scriptsize 32\\ \hline
\scriptsize John Q &\scriptsize \textcolor{magenta}2 & \scriptsize 34 & \scriptsize 33\\ \hline  
\scriptsize Pearl Harbor &\scriptsize \textcolor{magenta}2 & \scriptsize 35 & \scriptsize 34\\ \hline  
\scriptsize Swordfish &\scriptsize \textcolor{magenta}2 & \scriptsize 39 & \scriptsize 35\\ \hline 
\scriptsize Lost in Translation &\scriptsize\textcolor{magenta}2 & \scriptsize 48 & \scriptsize 36 \\ \hline  
\scriptsize 50 First Dates &\scriptsize \textcolor{red}4 & \scriptsize 18 & \scriptsize 37 \\ \hline  
\scriptsize My Big Fat Greek Wedding &\scriptsize  \textcolor{red}4 & \scriptsize 26 & \scriptsize 38\\ \hline  
\scriptsize Something's Gotta Give &\scriptsize  \textcolor{red}4 & \scriptsize 27 & \scriptsize 39\\ \hline  
\scriptsize The Terminal &\scriptsize  \textcolor{red}4 & \scriptsize 30 & \scriptsize 40\\ \hline  
\scriptsize How to Lose a Guy in 10 Days &\scriptsize  \textcolor{red}4 & \scriptsize 32 & \scriptsize 41\\ \hline  
\scriptsize Sweet Home Alabama &\scriptsize  \textcolor{red}4 & \scriptsize 38 & \scriptsize 42\\ \hline 
\scriptsize Sideways &\scriptsize  \textcolor{red}4 & \scriptsize 41 & \scriptsize 43 \\ \hline  
\scriptsize Two Weeks Notice &\scriptsize  \textcolor{red}4 & \scriptsize 43 & \scriptsize 44 \\ \hline  
\scriptsize Mr. Deeds &\scriptsize  \textcolor{red}4 & \scriptsize 49 & \scriptsize 45 \\ \hline  
\scriptsize The Wedding Planner &\scriptsize  \textcolor{red}4 & \scriptsize 50 & \scriptsize 46 \\ \hline
\scriptsize Maid in Manhattan &\scriptsize  \textcolor{red}4 & \scriptsize 53 & \scriptsize 47 \\ \hline  
\scriptsize The School of Rock &\scriptsize \textcolor{green}5 & \scriptsize 33 & \scriptsize 48\\ \hline  
\scriptsize Bruce Almighty &\scriptsize \textcolor{green}5 & \scriptsize 37 & \scriptsize 49 \\ \hline   
\scriptsize Dodgeball: A True Underdog Story &\scriptsize \textcolor{green}5 & \scriptsize 40 & \scriptsize 50\\ \hline  
\scriptsize Napoleon Dynamite &\scriptsize \textcolor{green}5 & \scriptsize 42 & \scriptsize 51 \\ \hline  
\scriptsize The Royal Tenenbaums &\scriptsize \textcolor{green}5 & \scriptsize 51 & \scriptsize 52 \\ \hline  
\scriptsize Anger Management &\scriptsize \textcolor{green}5 & \scriptsize 52 & \scriptsize 53 \\ \hline    
\scriptsize  & &  &  \\ \hline   
\end{tabular} \\
\end{tabular}
\caption{The table contains the information of the $53$ most popular movies of Netflix. Columns are movie titles, genres, order1 (the ordering in the upper row of Figure~\ref{fig:pairPrb1}) and order2 (the ordering in the bottom row of Figure~\ref{fig:pairPrb1}). Genres indicated by numbers from $1$ to $6$ represent science fiction, drama, action, romance, comedy, and family.}
\label{fig:netflix53stat} \vspace{-.1in}
\end{figure}

In a second experiment (see Figure~\ref{fig:taskLoglikely}) we compare the predictive behavior of the kernel smoothing estimator with that of a parametric model (Mallows model) and the empirical measure (frequency of event occurring in the $m$ samples). We evaluate the predictive performance of a probability estimator by separating the data to two parts: a training set that is used to construct the estimator and a testing set used for evaluation via its loglikelihood. A higher test set loglikelihood indicates that the model assigns high probability to events that occurred. Mathematically, this corresponds to approximating the KL divergence between nature and the model. Since the Mallows model is intractable for large $n$ we chose in this experiment small values of $n$: $3,4,5$. 

We observe that the kernel estimator consistently achieves higher test set loglikelihood than the Mallows model and the empirical measure. The former is due to the breakdown of parametric assumptions as indicated by Figure~\ref{fig:heatMaps} (note that this happens even for $n$ as low as 3). The latter is due to the superior statistical performance of the kernel estimator over the empirical measure. 

\begin{figure}\centering
\begin{tabular}{ccc}
{\scriptsize Movielens1M}& {\scriptsize Netflix} &{\scriptsize EachMovie}    \\
\includegraphics[scale=0.31]{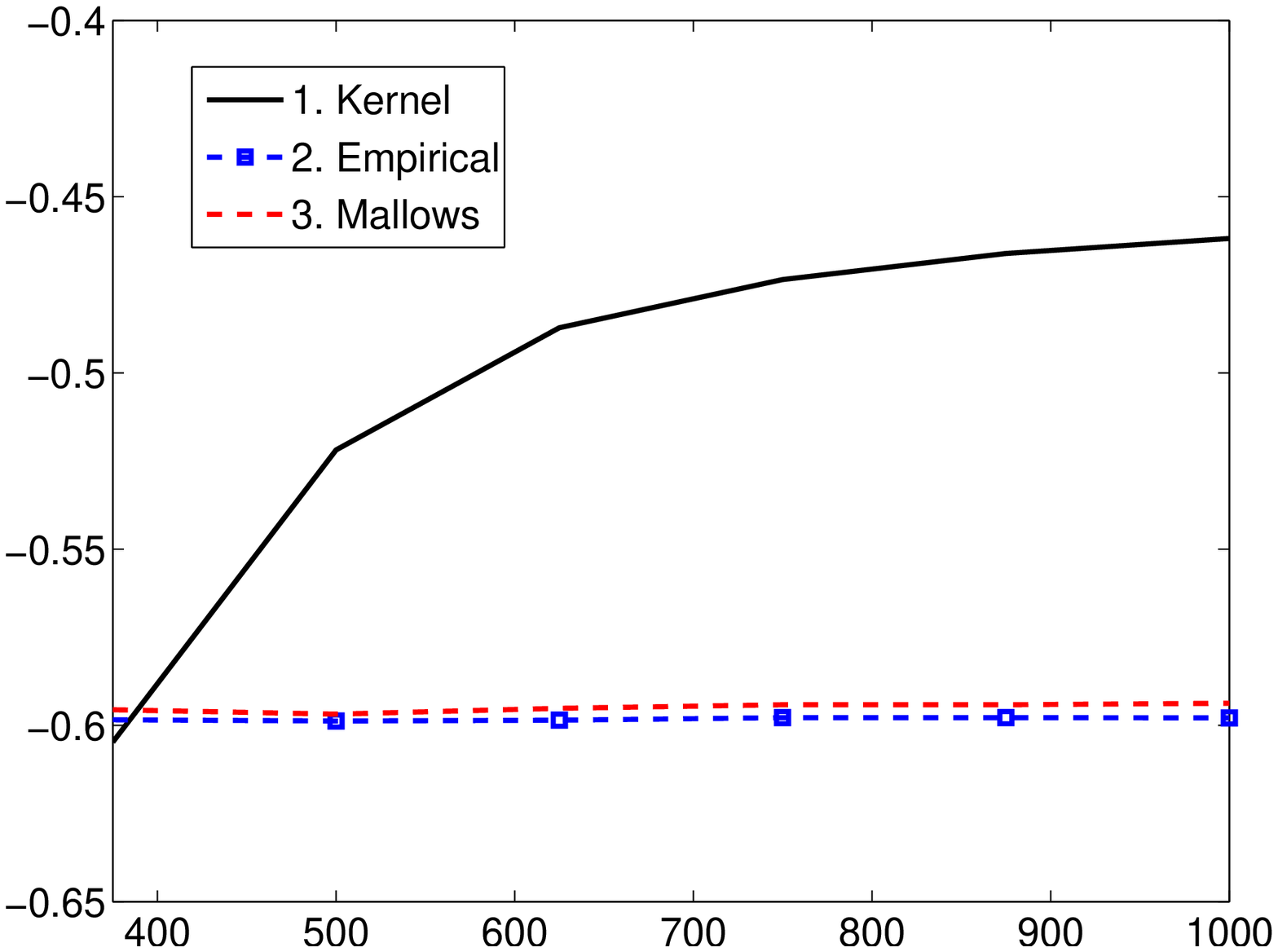}\hspace{-0.1in} &
\includegraphics[scale=0.31]{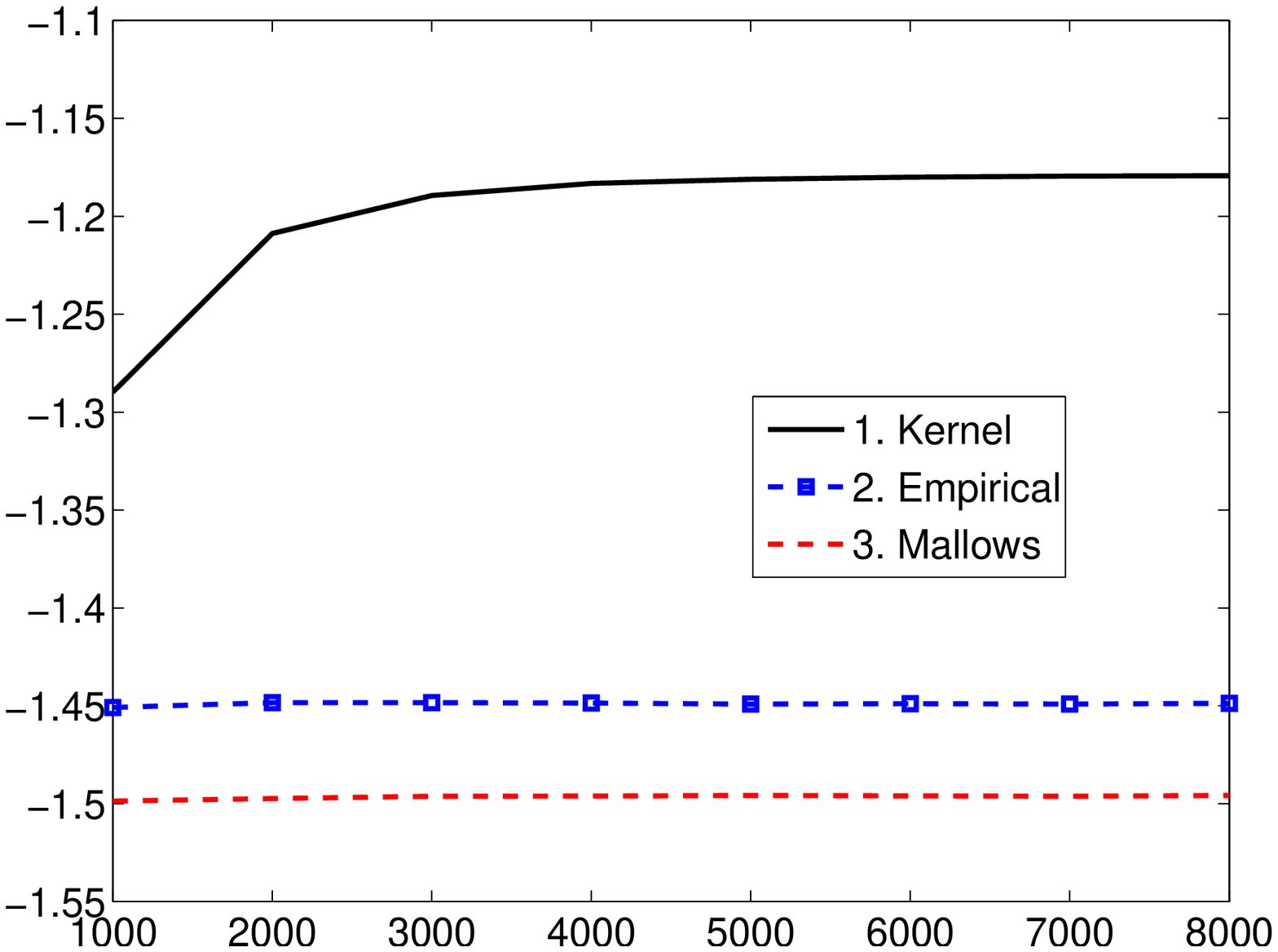}\hspace{-0.1in} &
\includegraphics[scale=0.31]{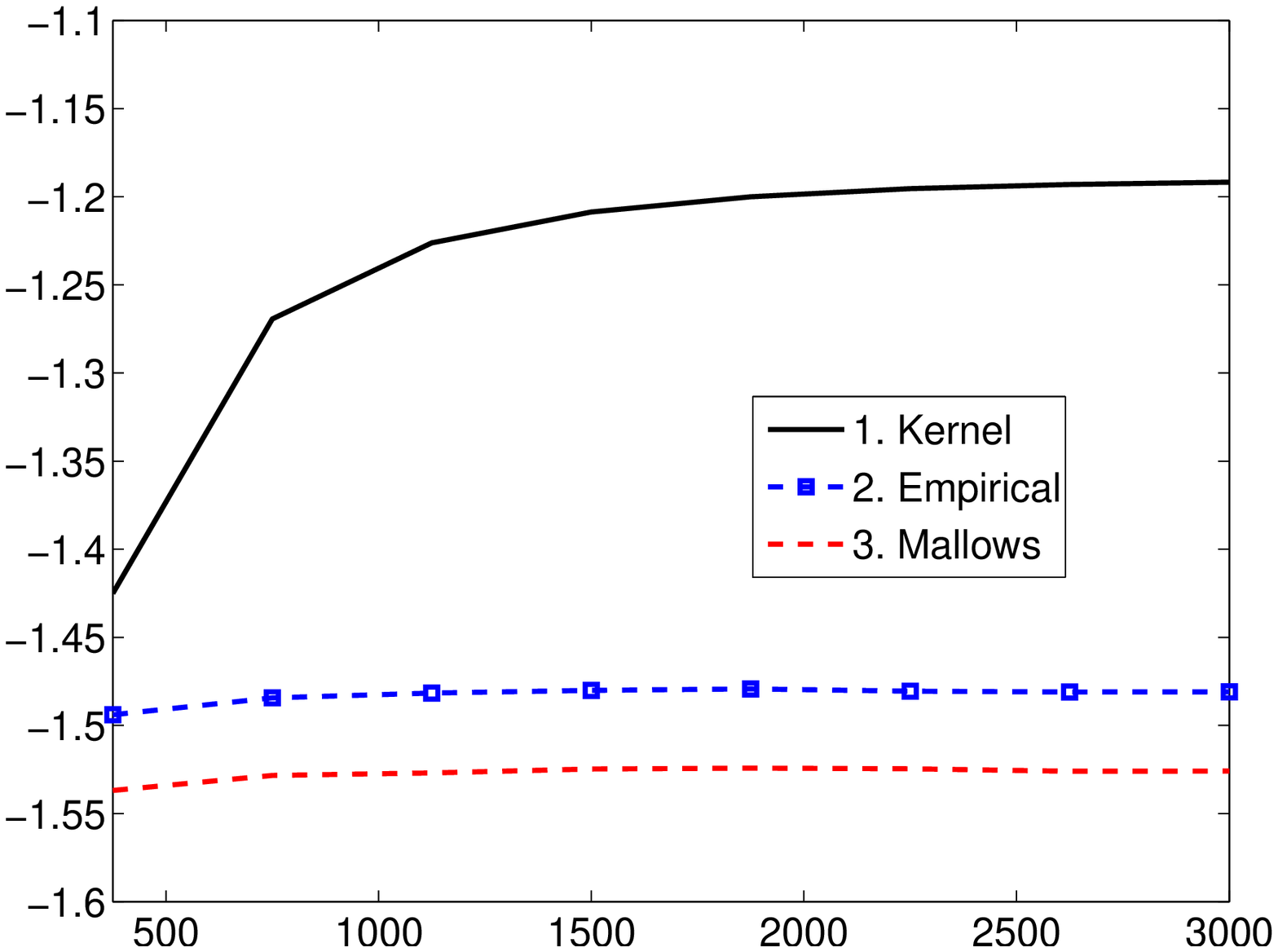} \\
\includegraphics[scale=0.31]{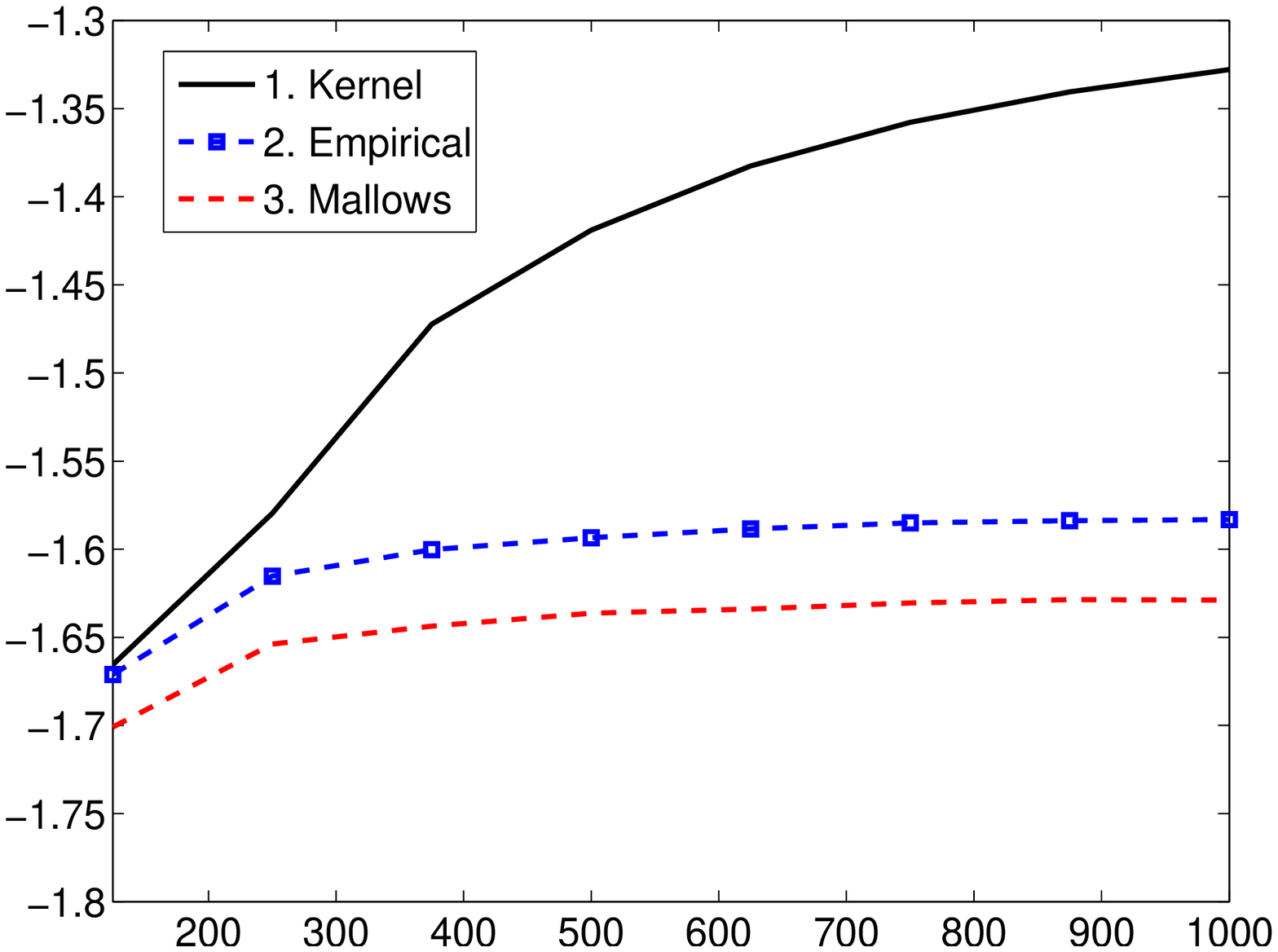}\hspace{-0.1in} &
\includegraphics[scale=0.31]{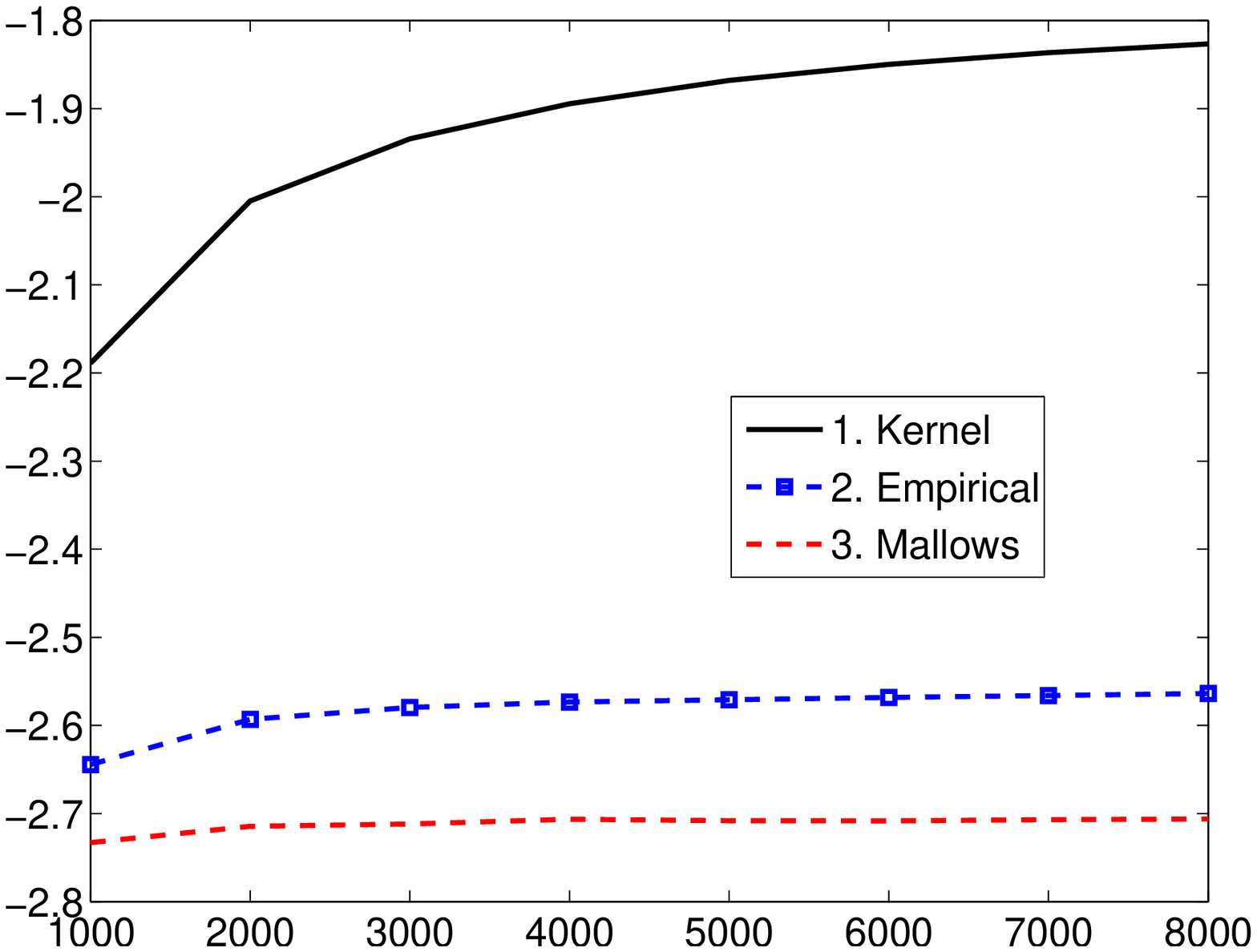}\hspace{-0.1in} &
\includegraphics[scale=0.31]{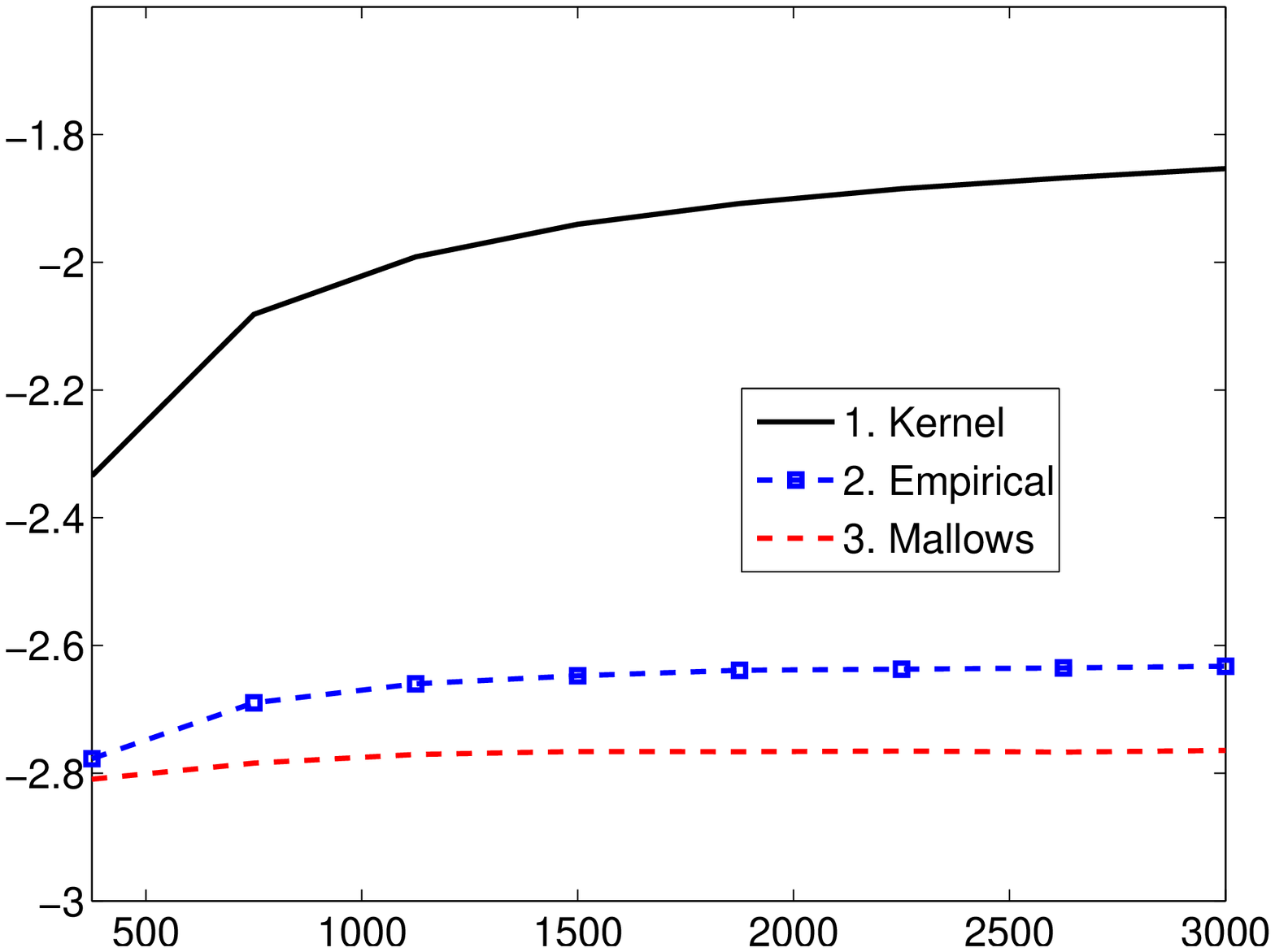} \\
\includegraphics[scale=0.31]{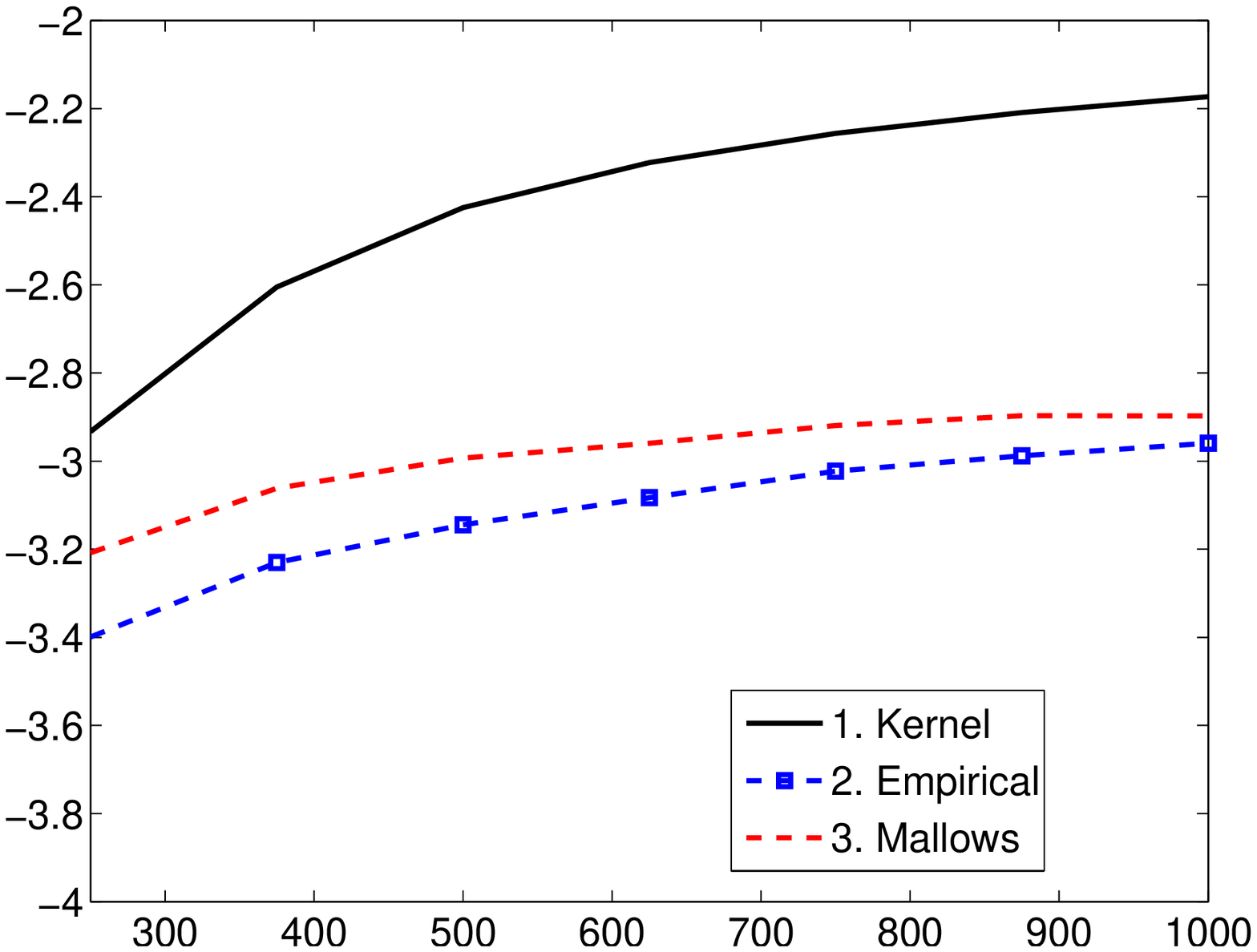}\hspace{-0.1in} &
\includegraphics[scale=0.31]{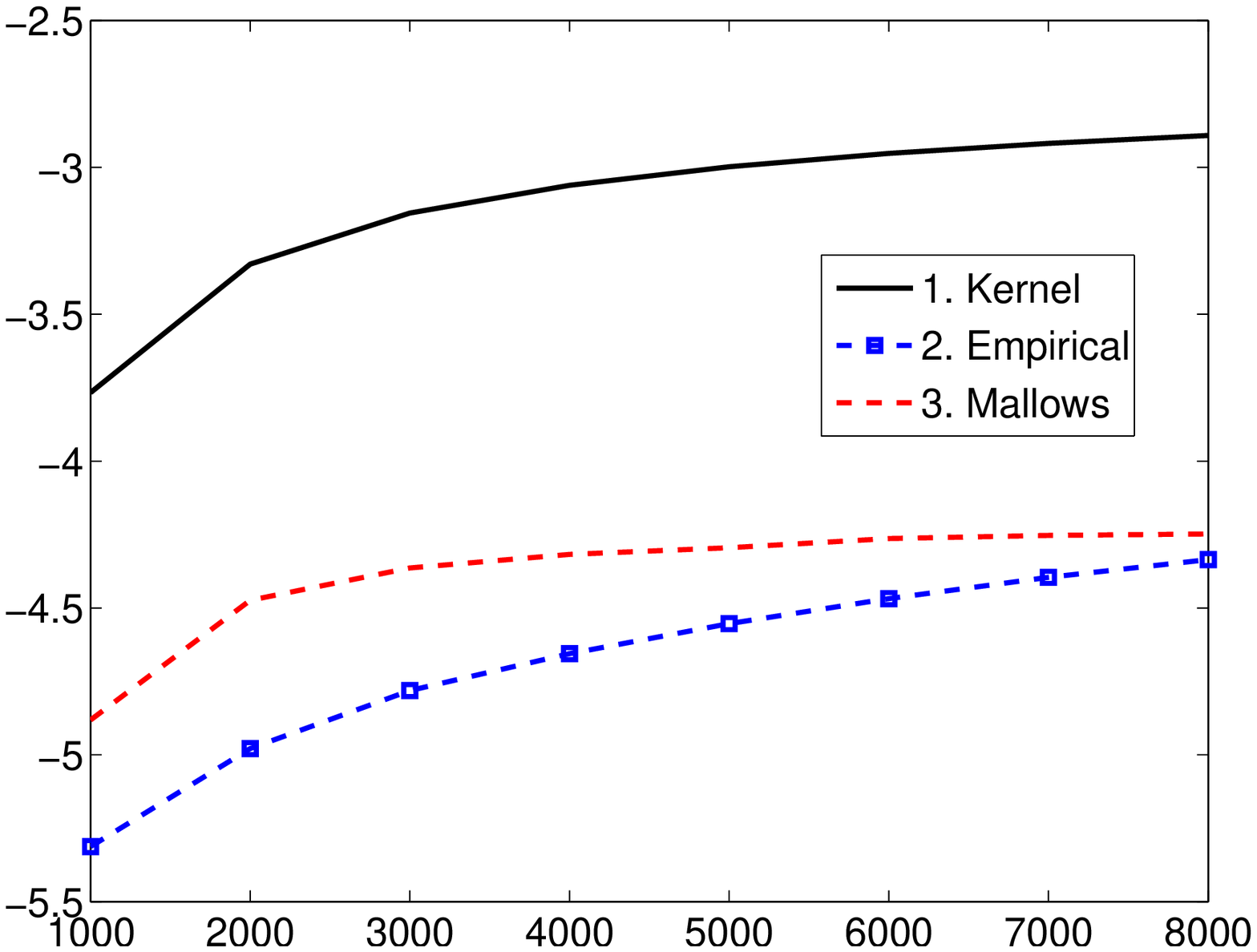}\hspace{-0.1in} &
\includegraphics[scale=0.31]{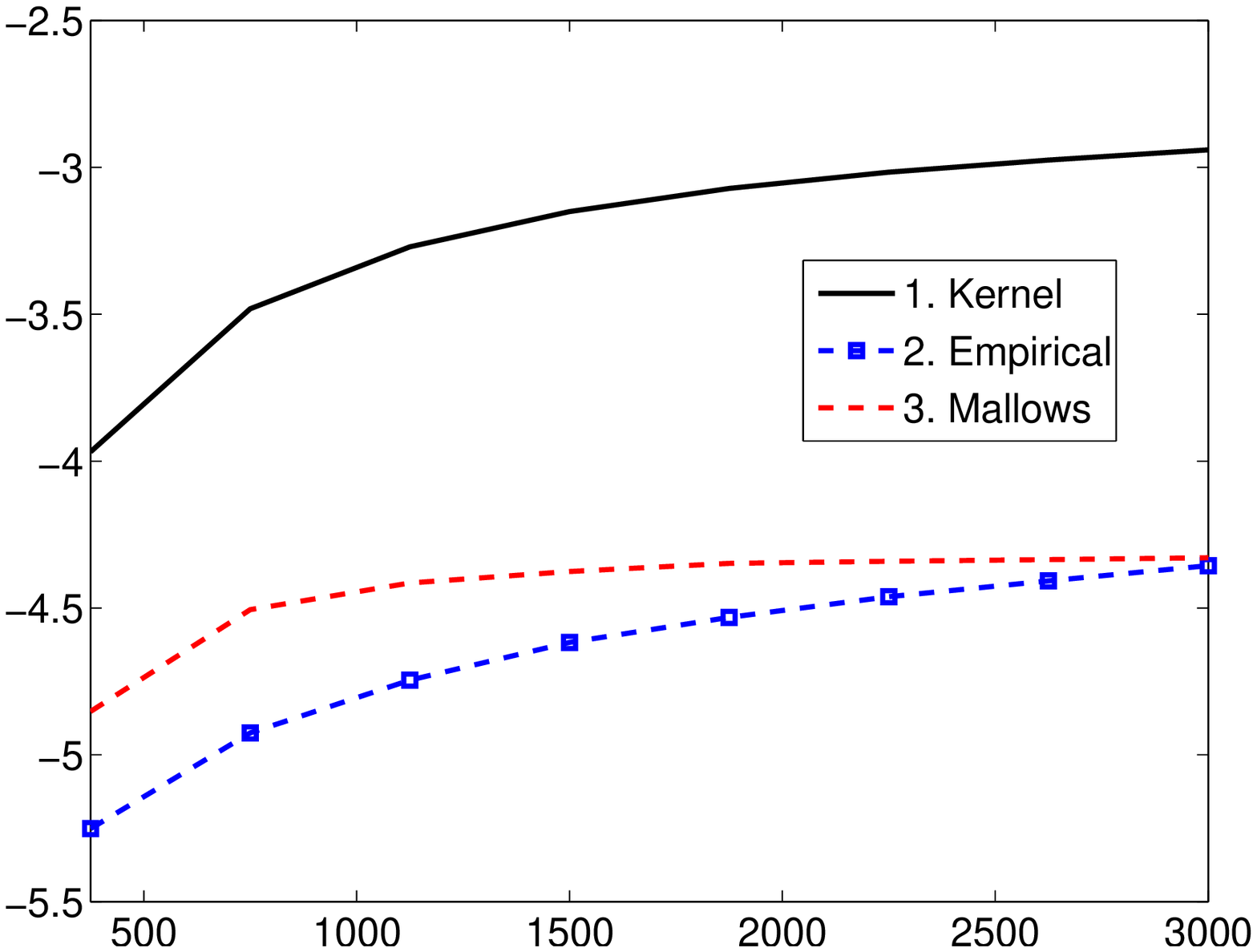} 
\end{tabular}
\caption{The test-set log-likelihood for kernel smoothing, Mallows model, and the empirical measure with respect to training size $m$ for a small number of items $n=3,4,5$ (top, middle, bottom rows) on three datasets. Both of the Mallows model (which is also intractable for large $n$ which is why $n\leq 5$ in the experiment) and the empirical measure perform worse than the kernel estimator $\hat p$.}
\label{fig:taskLoglikely}
\end{figure}

\subsection{Rank Prediction} 
Our task here is to predict ranking of new unseen items for users. We follow the standard procedure in collaborative filtering: the set of users is partitioned to two sets, a training set and a testing set. For each of the test set users we further split the observed items into two sets: one set used for estimating preferences (together with the preferences of the training set users) and the second set to evaluate the performance of the prediction \cite{Pennock2000}. Given a loss function $L(i,j)$ which measures the loss of predicting rank $i$ when true rank is $j$ (rank here refers to the number of sets of equivalent items that are more or less preferred than the current item) we evaluate a prediction rule by the expected loss.  We focus on three loss functions: $L_0(i,j)=0 \text{ if } i=j \text{ and } 1 \text{ otherwise}$, $L_1(i,j)=|i-j|$ which reduces to the standard CF evaluation technique described in \cite{Pennock2000}, and an asymmetric loss function (rows correspond to estimated number  of stars (0-5) and columns to actual number of stars (0-5) 
\begin{align}
L_e=\begin{pmatrix}
0 & 0 & 0 & 3 & 4 & 5 \\
0 & 0 & 0 & 2 & 3 & 4 \\
0 & 0 & 0 & 1 & 2 & 3 \\
9 & 4 & 1.5&0 & 0 & 0\\
12 & 6 & 3&0 & 0 & 0\\
15 & 8 & 4.5&0 & 0 & 0\\
\end{pmatrix}.
\end{align}
In contrast to the $L_0$ and $L_1$ loss, $L_e$ captures the fact that recommending bad movies as good movies is worse than recommending good movies as bad. 

For example, consider a test user whose observed preference is $3 \prec 4,5,6 \prec 10,11,12 \prec 23\prec 40,50,60\prec 100,101$. We may withhold the preferences of items $4,11$ for evaluation purposes. The recommendation systems then predict a rank of 1 for item 4 and a rank of 4 for item 11. Since the true ranking of these items are 2 and 3 the absolute value loss is $|1-2|=1$ and $|3-4|=1$ respectively. 

In our experiment, we use the kernel estimator $\hat p$ to predict ranks that minimize the posterior loss and thus adapts to customized loss functions such as $L_e$. This is an advantage of a probabilistic modeling approach over more ad-hoc rule based recommendation systems. 

Figure~\ref{fig:taskItemPredict} compares the performance of our estimator to several standard baselines in the collaborative filtering literature:  two older memory based methods vector similarity (sim1), correlation  (sim2) e.g., \cite{Breese1998}, and a recent state-of-the-art non-negative matrix (NMF) factorization (gnmf) \cite{Lawrence09}.  The kernel smoothing estimate performed similar to the state-of-the-art but substantially better than the memory based methods to which it is functionally similar.

\begin{figure}
\begin{tabular}{ccc}
{\scriptsize Movielens}& {\scriptsize Netflix} &{\scriptsize EachMovie}    \\
\includegraphics[scale=0.31]{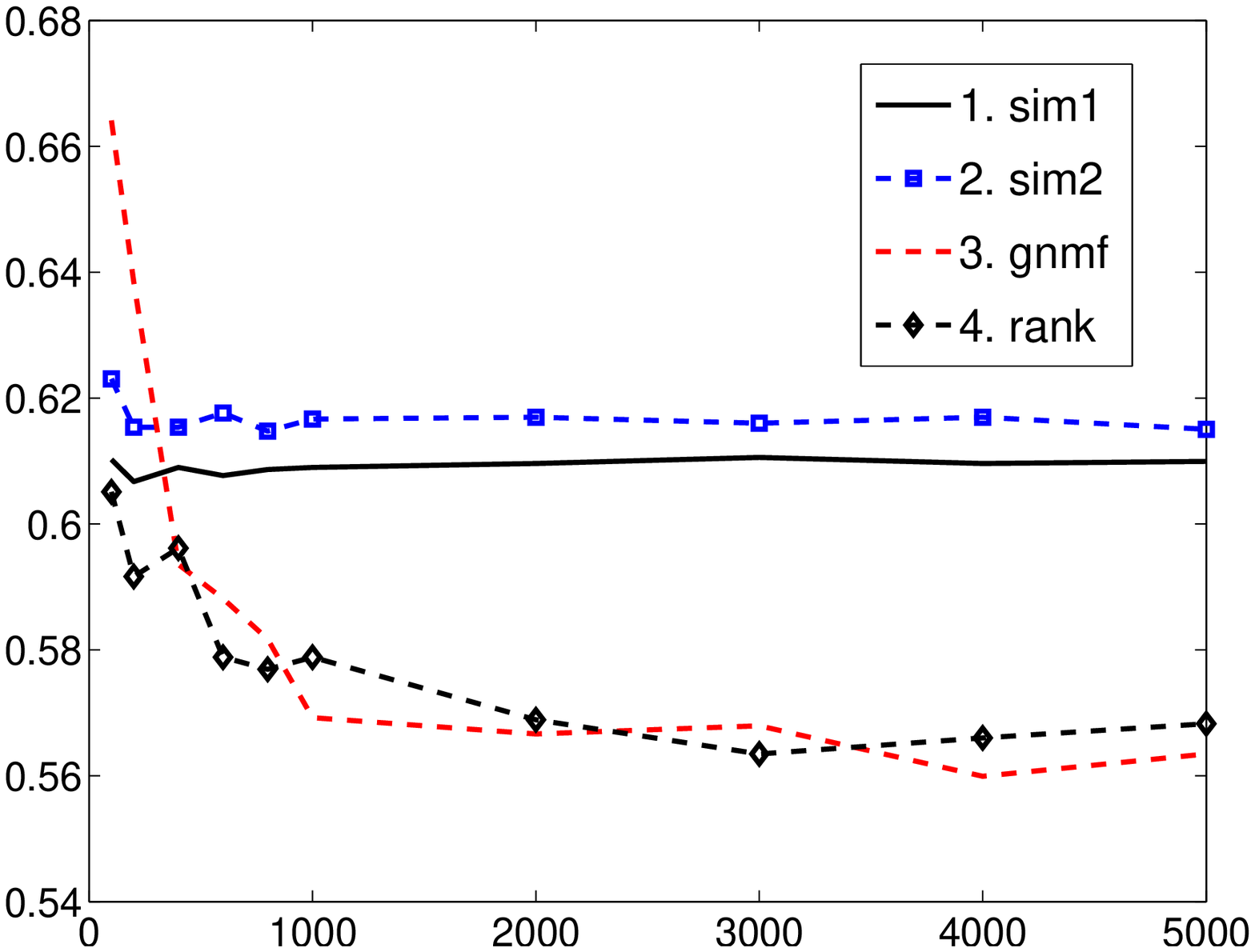}\hspace{-0.1in} &
\includegraphics[scale=0.31]{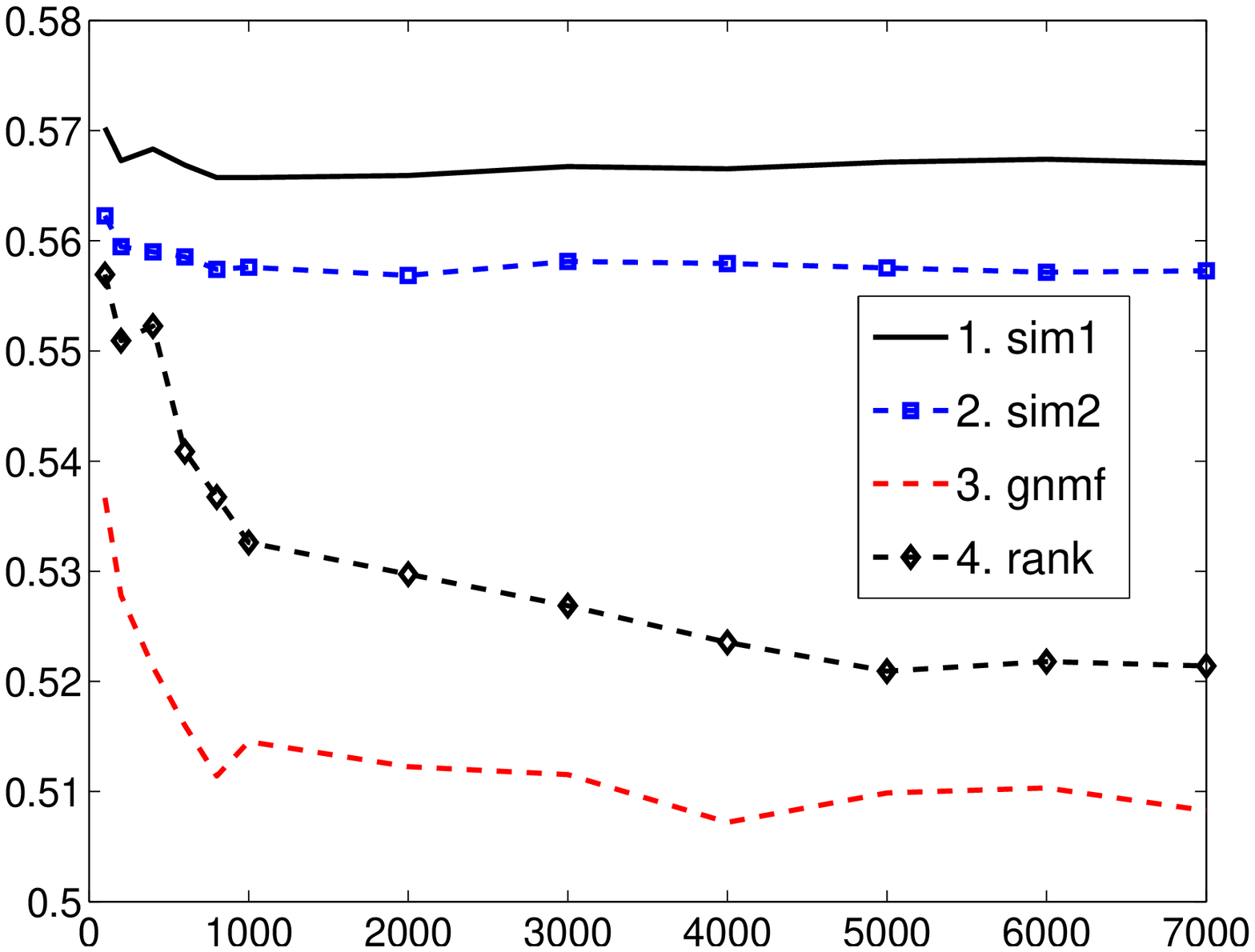}\hspace{-0.1in} &
\includegraphics[scale=0.31]{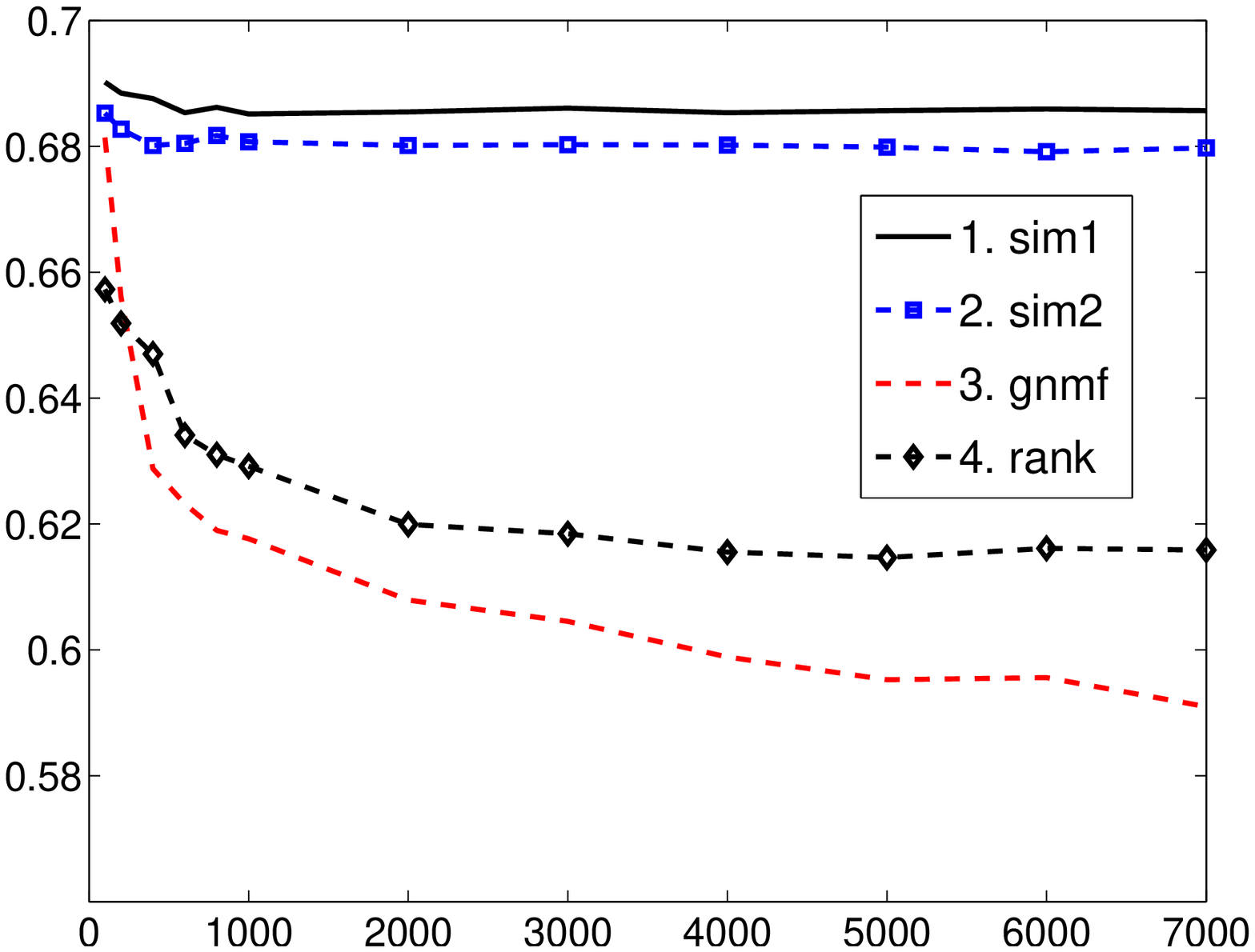} \\
\includegraphics[scale=0.31]{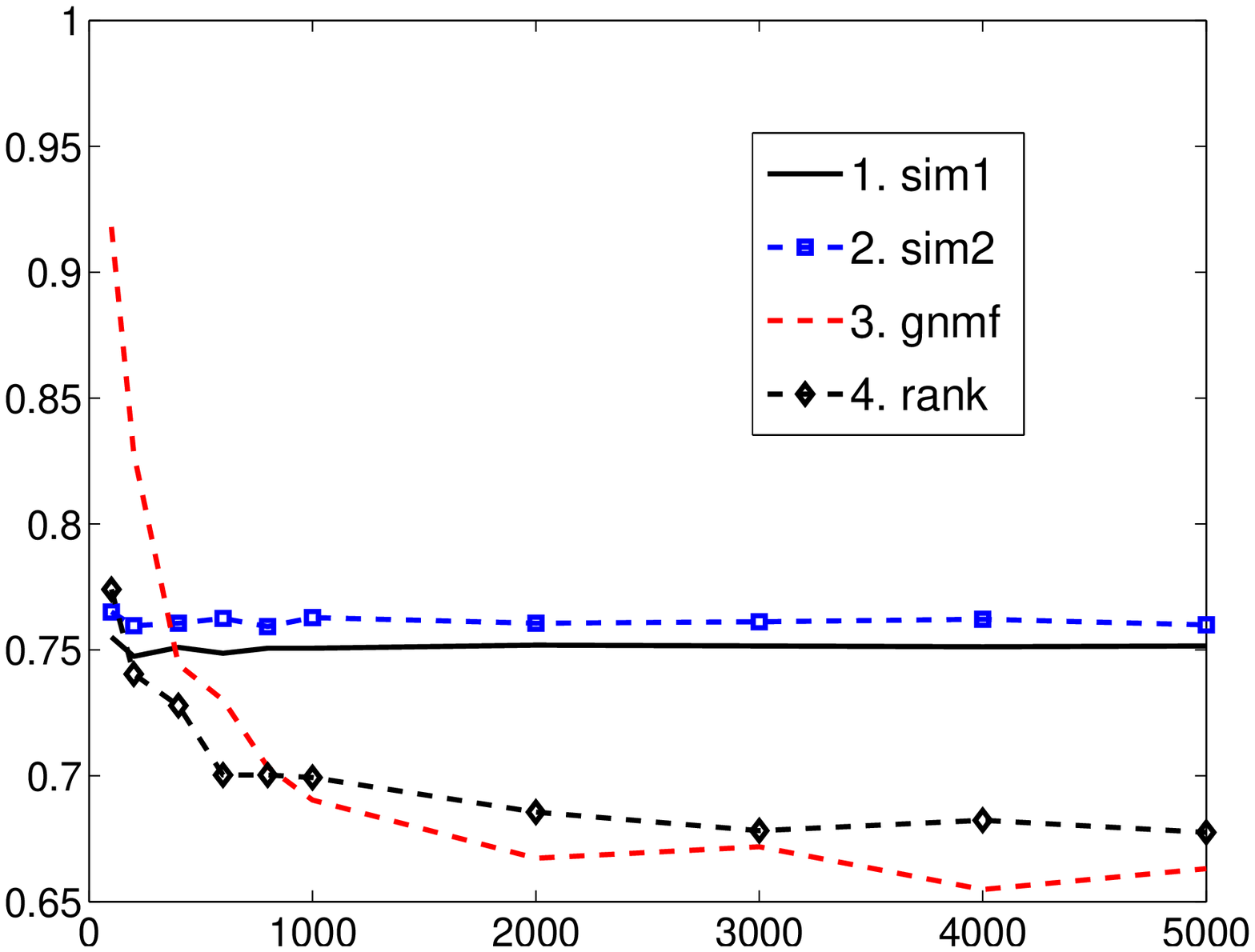}  \hspace{-0.1in} & 
\includegraphics[scale=0.31]{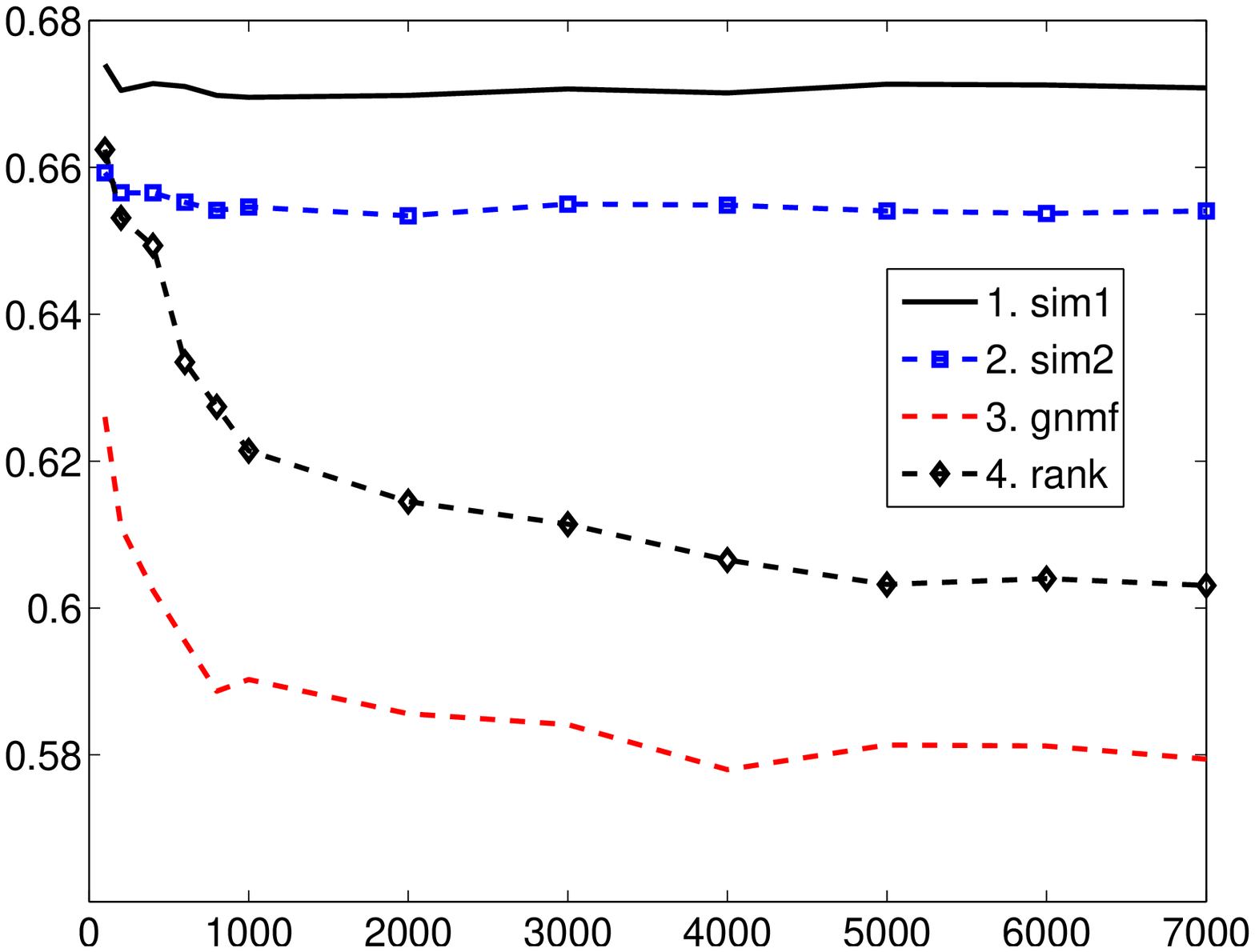}\hspace{-0.1in} &
\includegraphics[scale=0.31]{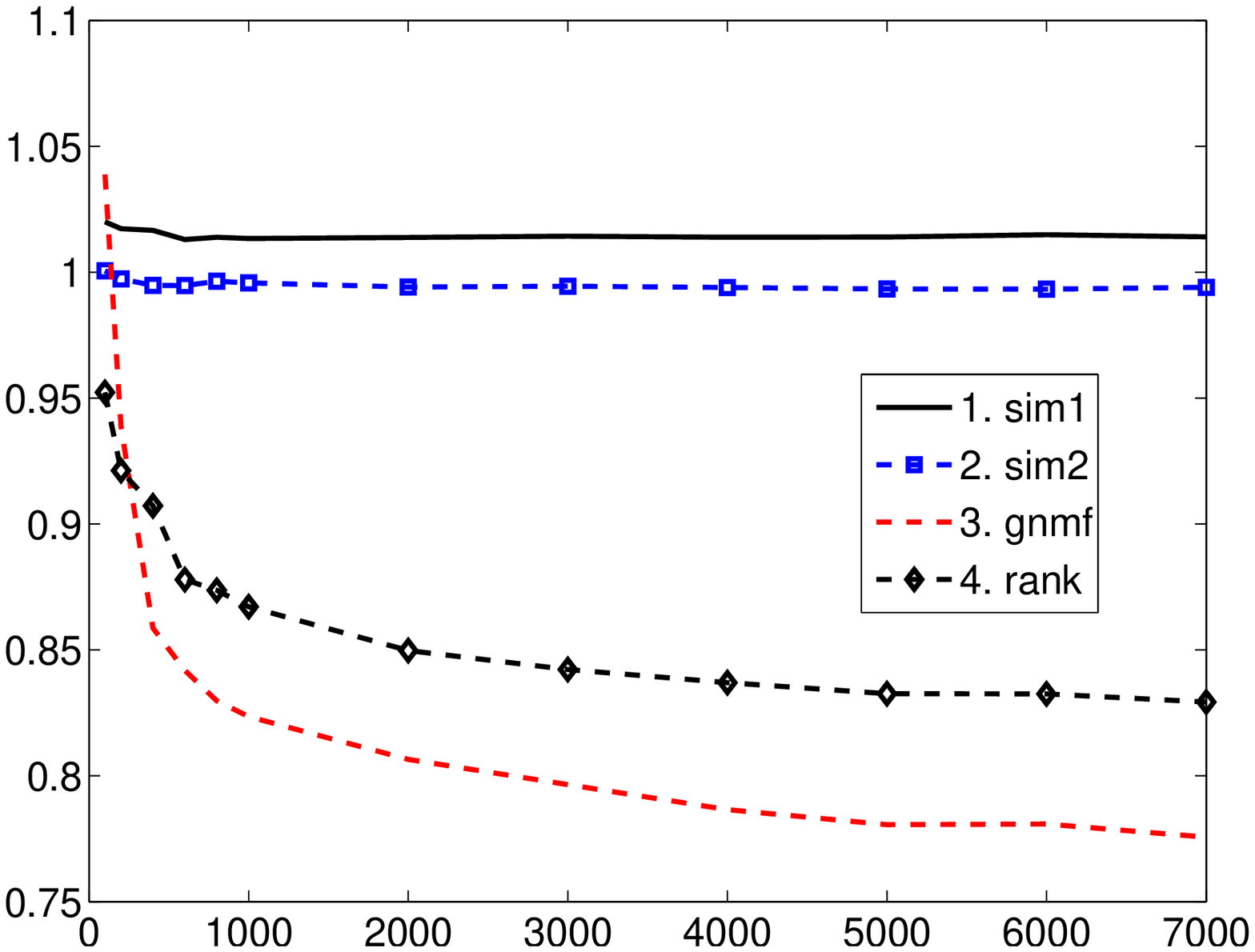}                \\
\includegraphics[scale=0.31]{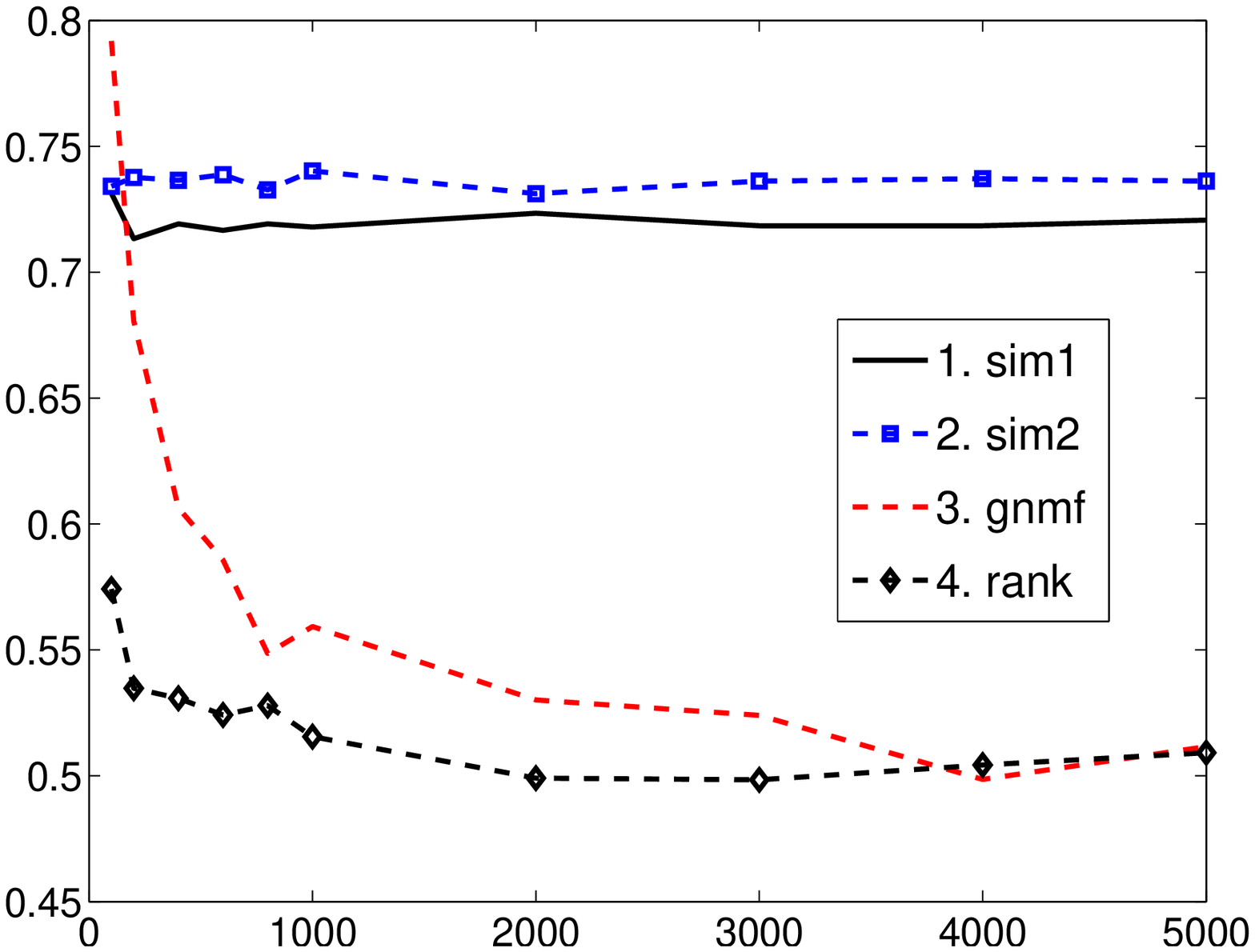}\hspace{-0.1in} &
\includegraphics[scale=0.31]{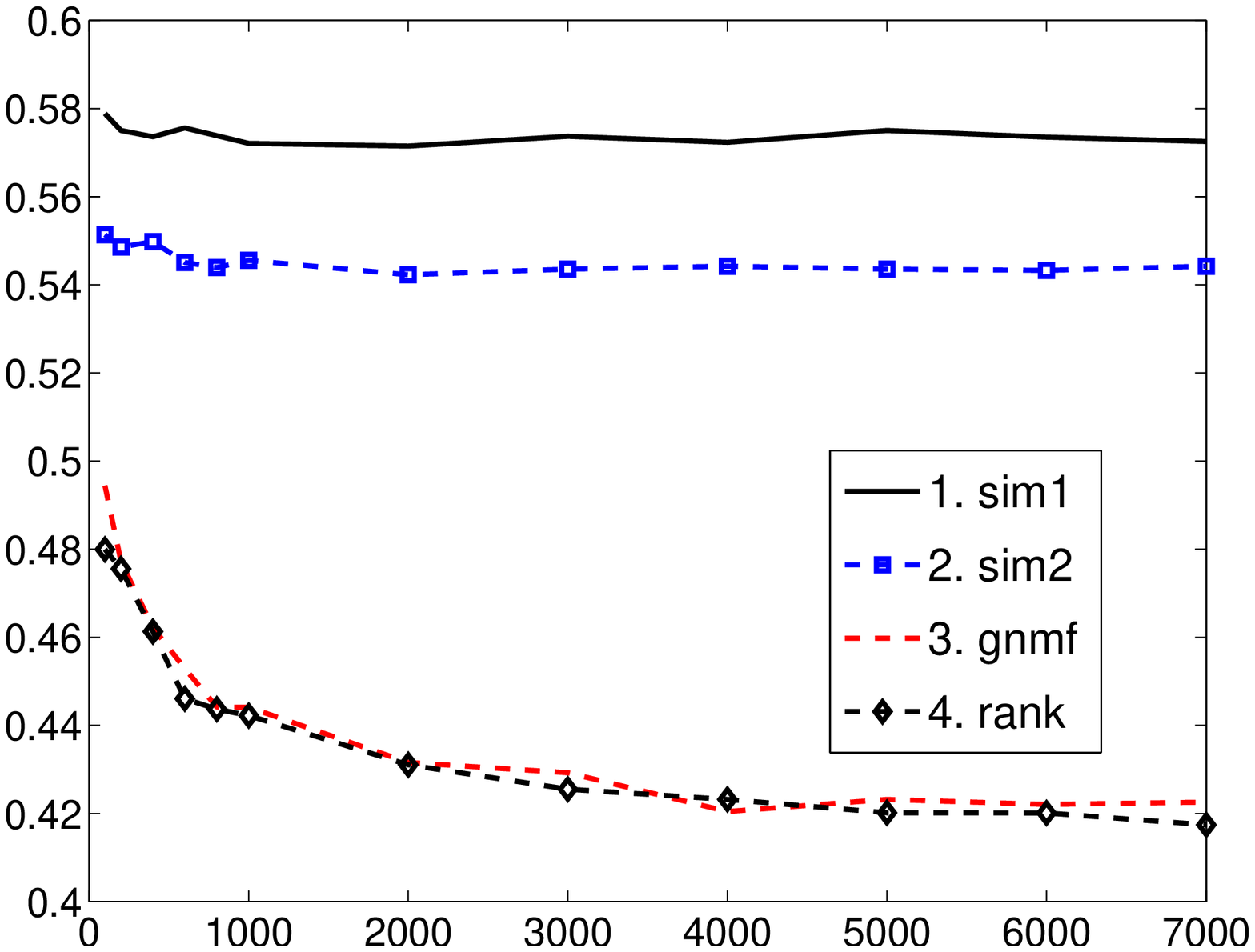}\hspace{-0.1in} &
\includegraphics[scale=0.31]{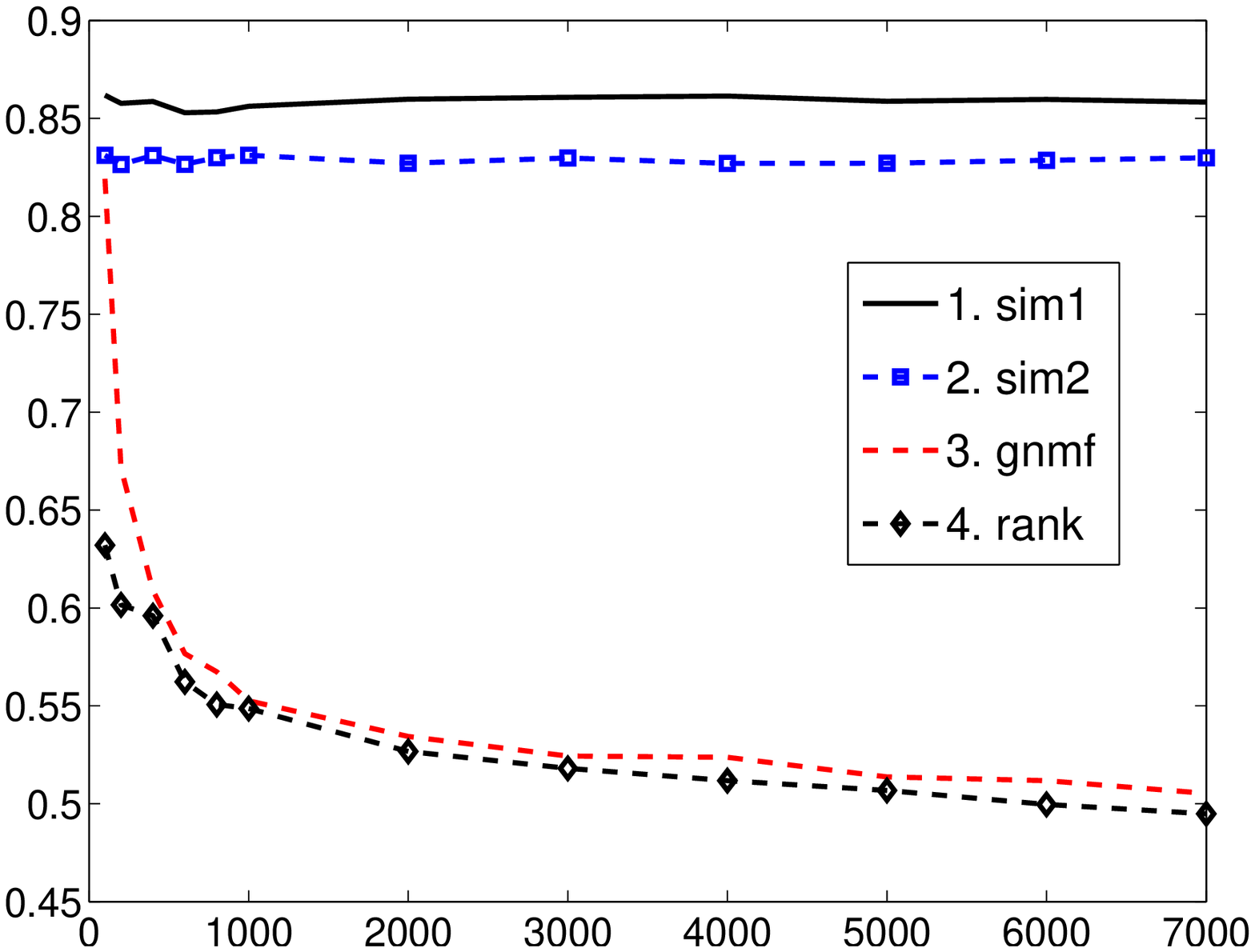} 
\end{tabular}
\caption{The prediction loss (top row: 0/1 loss $L_0$, middle row: $L_1$ loss, bottom row: asymmetric loss $L_e$) with respect to training size on three datasets. The kernel smoothing estimate performed similar to the state-of-the-art gnmf (matrix factorization) but substantially better than the memory based methods to which it is functionally similar. }
\label{fig:taskItemPredict}
\end{figure}

\subsection{Rule Discovery}
In the third task, we used the estimator $\hat p$ to detect noteworthy association rules of the type  $i\prec j \Rightarrow k\prec l$ (if $i$ is preferred to $j$ than it is probably the case that $k$ is preferred to $l$). Such association rules are important for both business analytics (devising marketing and manufacturing strategies) and recommendation system engineering. Specifically, we used $\hat p$ to select sets of four items $i,j,k,l$ for which the mutual information
$I(i\prec j\,;  k\prec l)$ is maximized. After these sets are identified we detected the precise shape of the rule (i.e., $i\prec j \Rightarrow k\prec l$ rather than $j\prec i \Rightarrow k\prec l$ by examining the summands in the mutual information expectation). 

Figure~\ref{fig:task3} (top) shows the top 10 rules that were discovered. These rules nicely isolate viewer preferences for genres such as fantasy, romantic comedies, animation, and action (note however that genre information was not used in the rule discovery). To quantitatively evaluate the rule discovery process we judge a rule $i\prec j \Rightarrow k\prec l$ to be good if $i,k$ are of the same genre and $j,l$ are of the same genre. This quantitative evaluation appears in Figure  \ref{fig:task3} (bottom) where it is contrasted with the same rule discovery process (maximizing mutual information) based on the empirical measure. 

\begin{figure} \centering
\fbox{\begin{tabular}{lll}
\footnotesize  Shrek $\prec$ 
\footnotesize  LOTR: The Fellowship of the Ring  &$\Rightarrow$& 
\footnotesize  Shrek 2$\prec$ 
\footnotesize  LOTR: The Return of the King\\
\footnotesize  Shrek $\prec$ 
\footnotesize  LOTR: The Fellowship of the Ring  &$\Rightarrow$& 
\footnotesize  Shrek 2$\prec$ 
\footnotesize  LOTR: The Two Towers\\
\footnotesize  Shrek 2 $\prec$ 
\footnotesize  LOTR: The Fellowship of the Ring  &$\Rightarrow$& 
\footnotesize  Shrek$\prec$ 
\footnotesize  LOTR: The Return of the King\\
\footnotesize  Kill Bill 2 $\prec$ 
\footnotesize  National Treasure  &$\Rightarrow$& 
\footnotesize  Kill Bill 1 $\prec$ 
\footnotesize  I. Robot\\
\footnotesize  Shrek 2 $\prec$ 
\footnotesize  LOTR: The Fellowship of the Ring  &$\Rightarrow$& 
\footnotesize  Shrek 2$\prec$ 
\footnotesize  LOTR: The Two Towers\\
\footnotesize  LOTR: The Fellowship of the Ring $\prec$ 
\footnotesize  Monsters, Inc.  &$\Rightarrow$& 
\footnotesize  LOTR: The Two Towers$\prec$ 
\footnotesize  Shrek\\
\footnotesize  National Treasure $\prec$ 
\footnotesize  Kill Bill 2  &$\Rightarrow$& 
\footnotesize  Pearl Harbor $\prec$ 
\footnotesize  Kill Bill 1\\
\footnotesize  LOTR: The Fellowship of the Ring $\prec$ 
\footnotesize  Monsters, Inc.  &$\Rightarrow$& 
\footnotesize  LOTR: The Return of the King$\prec$ 
\footnotesize  Shrek\\
\footnotesize  How to Lose a Guy in 10 Days $\prec$ 
\footnotesize  Kill Bill 2  &$\Rightarrow$& 
\footnotesize  50 First Dates$\prec$ 
\footnotesize  Kill Bill 1\\
\footnotesize  I, Robot $\prec$ 
\footnotesize  Kill Bill 2  &$\Rightarrow$& 
\footnotesize  The Day After Tomorrow $\prec$ 
\footnotesize  Kill Bill 1\\
\end{tabular}}\\
\includegraphics[scale=0.31]{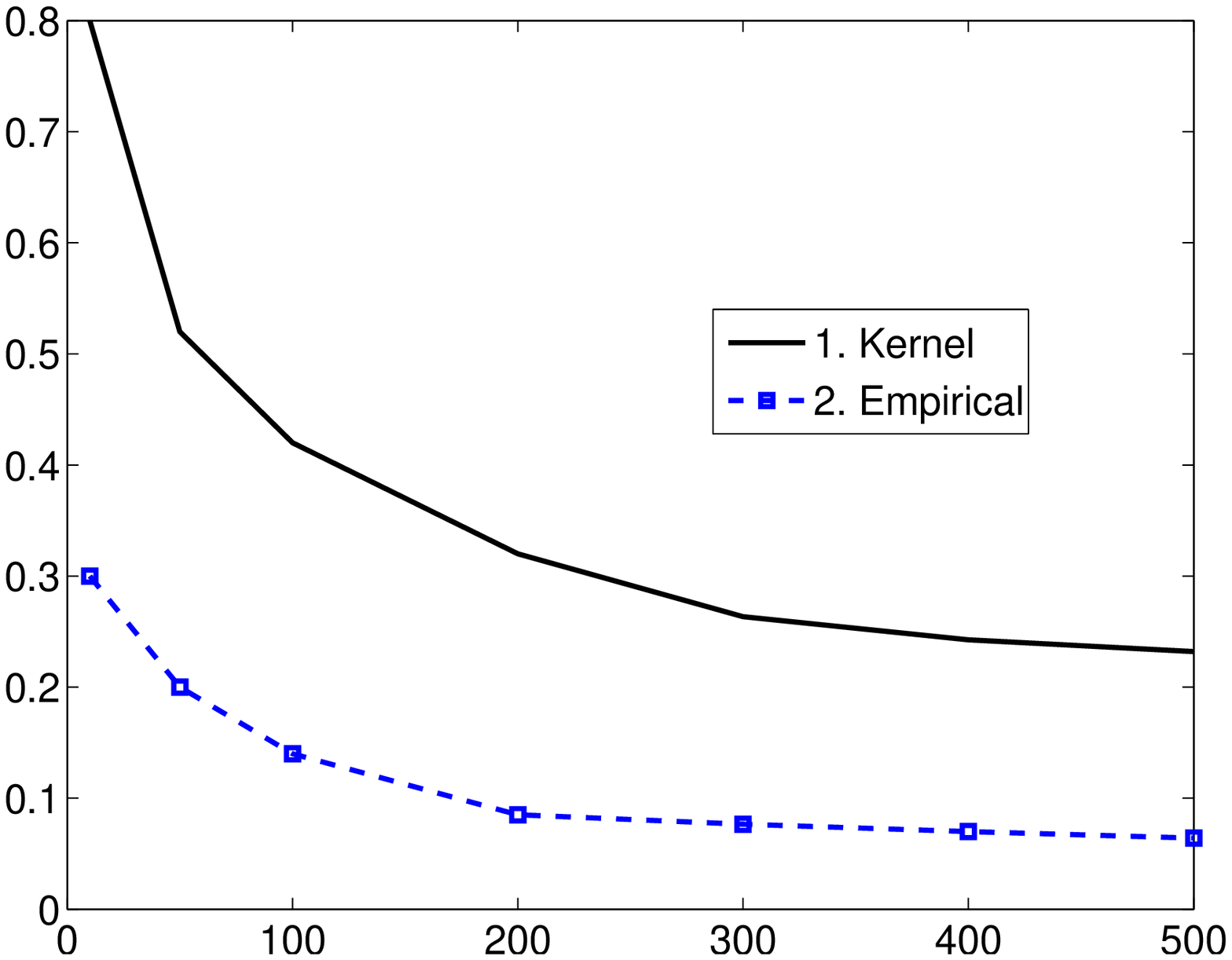}
\caption{Top: top 10 rules discovered by the kernel smoothing estimator on Netflix in terms of maximizing mutual information. Bottom: a quantitative evaluation of the rule discovery. The $x$ axis represents the number of rules discovered and the $y$ axis represents the frequency of good rules in the discovered rules. Here a rule $i\prec j \Rightarrow k\prec l$ is considered good if  $i,k$ are of the same genre and $j,l$ are of the same genre.}
\label{fig:task3}
\end{figure}

In another rule discovery experiment, we used $\hat p$ to detect association rules of the form $i \text{ ranked highest} \Rightarrow j \text{ ranked second highest}$ by selecting $i,j$ that maximize the score $\frac{p(\pi(i)=1,\pi(j)=2)}{p(\pi(i)=1)p(\pi(j)=2)}$ between pairs of movies in the Netflix data. We similarly detected rules of the form $i \text{ ranked highest} \Rightarrow j \text{ ranked lowest}$ by maximizing the scores $\frac{p(\pi(i)=1,\pi(j)=\text{last})}{p(\pi(i)=1)p(\pi(j)=\text{last})}$ between pairs of movies. 

The left panel of Figure~\ref{fig:tasks4preferPair} shows the top 9 rules of 100 most rated movies, which nicely represents movie preference of similar type, e.g. romance, comedies, and action. The right of Figure~\ref{fig:tasks4preferPair} shows the top 9 rules which represents like and dislike of different movie types, e.g. like of romance leads to dislike of action/thriller.

In a third experiment, we used $\hat p$ to construct an undirected graph where vertices are items (Netflix movies) and two nodes $i$,$j$ are connected by an edge if the average score of the rule $i \text{ ranked highest} \Rightarrow j \text{ ranked second highest}$ and the rule $j \text{ ranked highest} \Rightarrow i \text{ ranked second highest}$ is higher than a certain threshold.  Figure~\ref{fig:tasks4cluster} shows the graph for the 100 most rated movies in Netflix (only movies with vertex degree greater than 0 are shown). The clusters in the graph corresponding to vertex color and numbering were obtained using a graph partitioning algorithm and the graph is embedded in a 2-D plane using standard graph visualization technique.  Within each of the identified clusters movies are clearly similar with respect to genre, while an even finer separation can be observed when looking at specific clusters.  For example, clusters 6 and 9 both contain comedy movies, where as cluster 6 tends toward slapstick humor and cluster 9 contains romantic comedies.

\begin{figure}\centering
\begin{tabular}{ll}
\raisebox{.5in}{\fbox{\begin{tabular}{lll}
\footnotesize  Kill Bill 1 &$\Rightarrow$& \footnotesize Kill Bill 2\\
\footnotesize  Maid in Manhattan &$\Rightarrow$& \footnotesize The Wedding Planner\\
\footnotesize  Two Weeks Notice &$\Rightarrow$& \footnotesize Miss Congeniality\\
\footnotesize  The Royal Tenenbaums &$\Rightarrow$& \footnotesize Lost in Translation\\
\footnotesize  The Royal Tenenbaums &$\Rightarrow$& \footnotesize American Beauty\\
\footnotesize  The Fast and the Furious &$\Rightarrow$& \footnotesize Gone in 60 Seconds\\
\footnotesize  Spider-Man &$\Rightarrow$& \footnotesize Spider-Man 2\\
\footnotesize  Anger Management &$\Rightarrow$& \footnotesize Bruce Almighty\\
\footnotesize  Memento &$\Rightarrow$& \footnotesize Pulp Fiction\\
\end{tabular}}}
&
\raisebox{.5in}{\fbox{\begin{tabular}{lll}
\footnotesize  Maid in Manhattan &$\Rightarrow$& \footnotesize Pulp Fiction\\
\footnotesize  Maid in Manhattan &$\Rightarrow$& \footnotesize Kill Bill: 1\\
\footnotesize  How to Lose a Guy in 10 Days &$\Rightarrow$& \footnotesize Pulp Fiction\\
\footnotesize  The Royal Tenenbaums &$\Rightarrow$& \footnotesize Pearl Harbor\\
\footnotesize  The Wedding Planner &$\Rightarrow$& \footnotesize The Matrix\\
\footnotesize  Peal Harbor  &$\Rightarrow$& \footnotesize Memento\\
\footnotesize  Lost in Translation &$\Rightarrow$& \footnotesize Pearl Harbor\\
\footnotesize  The Day After Tomorrow &$\Rightarrow$& \footnotesize American Beauty\\
\footnotesize  The Wedding Planner &$\Rightarrow$& \footnotesize Raiders of the Lost Ark\\
\end{tabular}}}
\end{tabular}
\caption{Top rules discovered by kernel smoothing estimate on Netflix. Left: $\text{like A}\Rightarrow \text{like B}$. Right:  $\text{like A}\Rightarrow \text{dislike B}$.}
\label{fig:tasks4preferPair} \vspace{-.1in}
\end{figure}

\begin{figure}
\fbox{\begin{tabular}{ll}
\hline 
Cluster & Movie Titles \\
\hline  \hline
\tiny 1 & \tiny American Beauty, Lost in Translation, Pulp Fiction, Kill Bill I,II, Memento, The Royal Tenenbaums, Napoleon Dynamite,..\\ \hline 
\tiny  2 & \tiny Spider-Man, Spider-Man II \\ \hline 
\tiny 3 & \tiny Lord of the Rings I,II,III \\ \hline 
\tiny 4 & \tiny The Bourne Identity, The Bourne Supremacy\\  \hline 
\tiny 5&  \tiny Shrek, Shrek II\\ \hline 
\tiny 6 & \tiny Meet the parents, American Pie\\ \hline 
\tiny 7 & \tiny Indiana Jones and the Last Crusade, Raiders of the Lost Ark\\ \hline 
\tiny 8 & \tiny The Patriot, Pearl Harbor, Men of Honor, John Q, The General's Daughter, National Treasure, Troy, The Italian Job,..\\ \hline 
\tiny 9 & \tiny Miss Congeniality, Sweet Home Alabama,Two Weeks Notice,50 First Dates,The Wedding Planner,Maid in Manhattan,Titanic,..\\ \hline 
\tiny 10 & \tiny Men in Black I,II, Bruce Almighty, Anger Management, Mr. Deeds, Tomb Raider, The Fast and the Furious \\ \hline 
\tiny 11 & \tiny Independence Day, Con Air, Twister, Armageddon, The Rock, Lethal Weapon 4, The Fugitive, Air Force One\\
 \hline
\end{tabular}}
\includegraphics[scale=1]{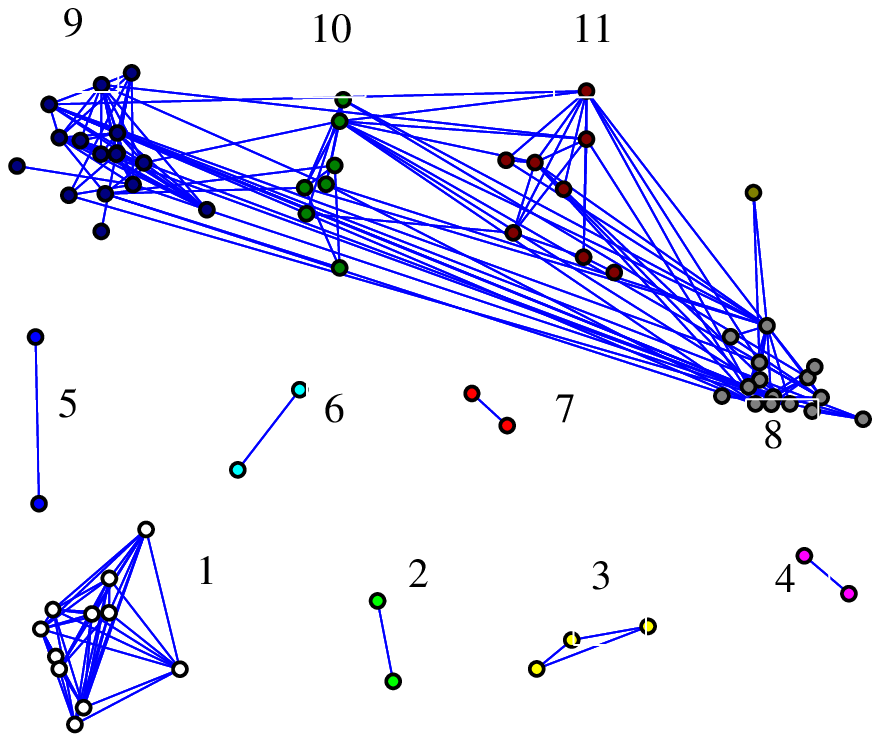}
\caption{A graph corresponding to the 100 most rated Netflix movies where edges represent high affinity as determined by the rule discovery process (see text for more details).}
\label{fig:tasks4cluster}
\end{figure}

\section{Related Work}
Collaborative filtering or recommendation system has been an active research area in computer science since the 1990s. The earliest efforts made a prediction for the rating of items based on the similarity of the test user and the training users  \cite{Resnick1994,Breese1998,Herlocker1999}. Specifically, these attempts used similarity measures such as Pearson correlation \cite{Resnick1994} and Vector cosine similarity \cite{Breese1998,Herlocker1999} to evaluate the similarity level between different users. 

More recent work includes user and movie clustering 
\cite{Breese1998,Ungar1998,Xue2005}, item-item similarities \cite{Sarwar2001}, Bayesian networks \cite{Breese1998}, dependence network \cite{Heckerman2000} and probabilistic latent variable models \cite{Pennock2000,Hofmann2004,marlin2004}.

Most recently, the state of the art methods including the winner of the Netflix competition are based on non-negative matrix factorization of the partially observed user-rating matrix. The factorized matrix can be used to fill out the unobserved entries in a way similar to latent factor analysis \cite{Jester,Rennie2005,Lawrence09,Koren2010}.

Each of the above methods focuses exclusively on user ratings. In some cases item information is available (movie genre, actors, directors, etc) which have lead to several approaches that combine voting information with item information e.g.,  \cite{Basu1998,Popescul2001,Schein2002}.

Our method differs from the methods above in that it constructs a full probabilistic model on preferences, it is able to handle heterogeneous preference information (not all users must specify the same number of preference classes) and does not make any parametric assumptions. In contrast to previous approaches it enables not only the prediction of item ratings, but also the discovery of association rules and the estimation of probabilities of interesting events.

\section{Summary}  
Estimating distributions from tied and incomplete data is a central task in many applications with perhaps the most obvious one being collaborative filtering. An accurate estimator $\hat p$ enables going beyond the traditional item-rank prediction task. It can be used to compute probabilities of interest, find association rules, and perform a wide range of additional data analysis tasks. 

We demonstrate the first non-parametric estimator for such data that is computationally tractable i.e., polynomial rather than exponential in $n$. The computation is made possible using generating function and dynamic programming techniques. 

We examine the behavior of the estimator $\hat p$ in three sets of experiments. The first set of experiments involves estimating probabilities of interest such as $p(i\prec j)$. The second set of experiments involves predicting preferences of held-out items which is directly applicable in recommendation systems. In this task, our estimator outperforms other memory based methods (to which it is similar functionally)  and performs similarly to state-of-the-art methods that are based on non-negative matrix factorization. In the third set of experiments we examined the usage of the estimator in discovering association rules such as $i\prec j \Rightarrow k\prec l$.  

\bibliographystyle{plain}     {\small \bibliography{../../common/externalPapers,../../common/groupPapers}}
\end{document}